\newcommand{\bE}{\mathbb{E}}
\newcommand{\bN}{\mathbb{N}}
\newcommand{\bR}{\mathbb{R}}
\newcommand{\bI}{\mathbbm{1}}
\newcommand{\vare}{\varepsilon}
\newcommand{\Ber}{\mathsf{Ber}}
\newcommand{\KL}{\mathsf{KL}}
\newcommand{\ALG}{\mathsf{ALG}}
\newcommand{\OPT}{\mathsf{OPT}}
\newcommand{\REG}{\mathsf{REG}}
\newcommand{\cA}{\mathcal{A}}
\newcommand{\cP}{\mathcal{P}}
\newcommand{\cT}{\mathcal{T}}
\newcommand{\cX}{\mathcal{X}}
\newcommand{\cY}{{\{0,1\}^n}}
\newcommand{\cF}{\mathcal{F}}
\newcommand{\cO}{\mathcal{O}}
\newcommand{\vzero}{\mathbf{0}}
\newcommand{\ve}{\mathbf{e}}
\newcommand{\vy}{\mathbf{y}}
\newcommand{\vz}{\mathbf{z}}
\newcommand{\vZ}{\mathbf{Z}}
\newcommand{\vell}{\boldsymbol{\ell}}
\newcommand{\ovZ}{\overline{\vZ}}
\newcommand{\oq}{\overline{q}}
\newcommand{\oa}{\overline{a}}
\newcommand{\oQ}{\overline{Q}}
\newcommand{\oY}{\overline{Y}}
\newcommand{\bmin}{b_\text{min}}
\begin{document}

\TITLE{Inventory Balancing with Online Learning}

\ARTICLEAUTHORS{
\AUTHOR{Wang Chi Cheung}
\AFF{National University of Singapore, NUS Engineering, Department of Industrial Systems Engineering and Management, Singapore, SG 117576, \EMAIL{isecwc@nus.edu.sg}}
\AUTHOR{Will Ma}
\AFF{Graduate School of Business, Columbia University, New York, NY 10027, \EMAIL{wm2428@gsb.columbia.edu}}
\AUTHOR{David Simchi-Levi}
\AFF{Institute for Data, Systems, and Society, Department of Civil and Environmental Engineering, and Operations Research Center, Massachusetts Institute of Technology, Cambridge, MA 02139, \EMAIL{dslevi@mit.edu}}
\AUTHOR{Xinshang Wang}
\AFF{Institute for Data, Systems, and Society, Massachusetts Institute of Technology, Cambridge, MA 02139, \EMAIL{xinshang@mit.edu}}
}

\MANUSCRIPTNO{MS-SPI-18-02360.R2}

\ABSTRACT{

We study a general problem of allocating limited resources to heterogeneous customers over time under model uncertainty. Each type of customer can be serviced using different actions, each of which stochastically consumes some combination of resources, and returns different rewards for the resources consumed. We consider a general model where the resource consumption distribution associated with each (customer type, action)-combination is not known, but is consistent and can be learned over time. In addition, the sequence of customer types to arrive over time is arbitrary and completely unknown.

We overcome both the challenges of model uncertainty and customer heterogeneity by judiciously synthesizing two algorithmic frameworks from the literature: inventory balancing, which ``reserves'' a portion of each resource for high-reward customer types that could later arrive, based on competitive ratio analysis; and online learning, which ``explores'' the resource consumption distributions for each customer type under different actions, based on regret analysis. We define an auxiliary problem, which allows for existing competitive ratio and regret bounds to be seamlessly integrated.
Furthermore, we propose a new variant of UCB, dubbed LazyUCB, which conducts less exploration in a bid to focus on ``exploitation'', in view of the resource scarcity. Finally, we construct an information-theoretic family of counterexamples to show that our integrated framework achieves the best possible performance guarantee.

We demonstrate the efficacy of our algorithms both on synthetic instances generated for the online matching with stochastic rewards problem under unknown probabilities, and on a publicly available hotel data set. Our framework is highly practical in that it requires no historical data (no fitted customer choice models, nor forecasting of customer arrival patterns) and can be used to initialize allocation strategies in fast-changing environments.
}

\maketitle

\section{Introduction}

Online resource allocation is a fundamental topic in many applications of operations research, such as revenue management, display advertisement allocation, and appointment scheduling. In each of these settings, an online platform needs to allocate limited resources to a heterogeneous pool of customers arriving in real time, while maximizing the cumulative reward. The starting amount of each resource is exogenous, and these resources cannot be replenished during the time horizon.

In many applications, the online platform can observe a list of feature values associated with each arriving customer, which allows for allocation decisions to be customized in real time. For example, a display advertising platform operator is usually provided with the internet cookie from a website visitor, upon the visitor's arrival.  Consequently, the operator is able to display relevant advertisements to each website visitor based on this cookie, in a bid to maximize the total revenue earned from clicks on these advertisements.

To achieve an optimal allocation in the presence of resource constraints, the platform's allocation decision at any moment has to take into account the features of both the current customer as well as the customers who will arrive in the future.
In the preceding example, advertisements have daily budgets on how often they can be shown, making it suboptimal for the operator to behave myopically for the current visitor \citep{MSVV07,BJN07}.
In another example of selling airline tickets, it is profitable to judiciously reserve a number of seats for business class customers, who often purchase tickets close to departure time \citep{TvR98,BQ09}. Finally, in healthcare applications, when making advance appointments for out-patients, it is critical to reserve certain physicians' hours for urgent patients \citep{feldman2014appointment,Tru15}. In all of these examples, the platform's central task is to \textit{reserve} the right amount of each resource for future customers so as to maximize the total reward.

While resource reservation is vital for optimizing online resource allocations, the implementation of resource reservation is hindered by the following two challenges. First, the online platform often lacks an accurate forecast about the arrival patterns of future demand. 
Second, the online platform is often uncertain about the relationship between an arriving customer's expected behavior, e.g.\ click-through rate on an ad, and their observed features.

These challenges in implementing resource reservation raise the following research question: \textit{Can the online platform perform resource reservation effectively, in the absence of any demand forecast model and under uncertain customer behavior?}

\subsection{Description of Model and Contributions}
Initially there is a finite and discrete amount of inventory for each of multiple resources. Resources can be converted to rewards when they are consumed by a customer. Customers arrive sequentially, each of whom is characterized by a context vector that describes the customer's features. Upon the arrival of each customer, an action is selected,
after which there is a stochastic consumption of resources, which determines the reward collected.
For example, the action can represent offering a specific item to the customer at a particular price, and the stochastic consumption can correspond to whether the customer chooses to purchase.
The distribution of this stochastic consumption depends both on the customer's features and the action selected.
The objective is to maximize the total expected reward collected from the resources during a finite time horizon of unknown length.

We highlight two salient aspects of our model:
\begin{enumerate}
\item The number of future customers and their context vectors are unknown and chosen by an adversary. As a result, historical observations do not provide any information about future arrivals. 
\item For each potential combination of context vector and action, there is a fixed unknown distribution over the consumption outcome. That is, two customers arriving at different time periods with identical context vectors will have the same consumption distribution. 
As a concrete example, in e-commerce, the context vector represents the characteristics (e.g., age, location) of an online shopper. We are assuming that the conversion rate only depends on the characteristics of the shopper and the product offered, but not the time. 
  The platform needs to learn these conversion rates in an online fashion.
\end{enumerate}



Each of these two aspects has been studied extensively, but only separately, in the literature (reviewed in Section~\ref{sec:review}).
In models with the first aspect alone, model parameters on customer behavior such as purchase probabilities are known, and the difficulty is in conducting resource reservation without any demand forecast.
The conventional approach is to
set an opportunity cost for each resource which is increasing in how quickly it has already been consumed, using these to ideally ``balance'' the consumption rates of the different resources.
We call such techniques \textit{Inventory Balancing}.
Meanwhile, in models with the second aspect alone, the trade-off is between ``exploring'' the probabilities from playing different actions on different customers, and ``exploiting'' actions which are known to yield desirable outcomes.
\textit{Online Learning} techniques are designed for managing this trade-off.
However, in the presence of resource constraints, work on online learning has assumed that the context vectors are drawn i.i.d. from a known distribution, and there is no element of ``hedging'' against an adversarial input sequence.

In our work, we present a unified analysis of the online allocation problem in the presence of both of these aspects. We proceed to describe our contributions.

\textbf{IBOL algorithmic framework with performance guarantees.}
We propose a framework that integrates the Inventory Balancing technique with a broad class of Online Learning algorithms, which we dub IBOL, short for ``Inventory Balancing with Online Learning''. Our framework produces online allocation algorithms with performance guarantees of the form
\begin{equation}\label{eq:guaranteeInformal}
\bE[\ALG] \geq \alpha\cdot\OPT - \REG,
\end{equation}
where $\ALG$ is the total reward earned by IBOL; $\OPT$ is an LP-based
upper bound on the expected revenue of an optimal algorithm which knows both the arrival sequence and the unknown probabilities in advance; and
$\REG$ represents the \textit{regret}, i.e., the loss from having to explore the unknown probabilities. $\REG$ in fact represents the optimality gap in an \textit{auxiliary problem} we define, which is a non-stationary stochastic multi-armed bandits problem. The non-stationarity in our auxiliary problem arises from the adversarial uncertainty in customers' arrivals. The factor $\alpha\in (0, 1)$ in our guarantee~\eqref{eq:guaranteeInformal} can be viewed as the \textit{competitive ratio} when the probabilities are known, i.e., when $\REG = 0$.

\textbf{Asymptotically-tight guarantee for online matching with unknown stochastic rewards.}
As an application of our framework, we analyze an online bipartite matching problem in which edges, upon being selected, only get matched with an unknown probability. We first apply the IBOL algorithm with an Upper Confidence Bound (UCB) oracle, which is based on the optimistic estimation approach in \citet{ACF02}.\footnote{Essentially, under the optimistic estimation approach for multi-armed bandits, the decision maker adds an \emph{optimistic bonus} to the maximum likelihood estimate on each arm's latent reward, which encourages the exploration of the under-explored arms.} We establish the performance guarantee
\begin{align}
\bE[\ALG] \geq \left(1 - \frac{1}{e}\right)\OPT  -  \tilde O(\sqrt{\OPT}). \label{eqn:introEqn}
\end{align}
The $\tilde{O}(\cdot)$ notation hides the logarithmic dependence on $T$, the number of time rounds in the problem, as well as the dependence on model parameters other than $T$. A consequence of~\eqref{eqn:introEqn} is that $\bE[\ALG]/\OPT$ is bounded from below by $1 - 1/e  - \tilde{O}(1/\sqrt{\OPT})$, which approaches the best-possible competitive ratio of $1-1/e$ as $\OPT$ becomes large (i.e.\ as the regret from learning the matching probabilities becomes negligible).

Importantly, we also show the guarantee in (\ref{eqn:introEqn}), which can be re-expressed as $\OPT-\bE[\ALG] \leq \OPT/e  -  \tilde O(\sqrt{\OPT})$, to be tight.
That is, the loss of $\OPT/e$ is unavoidable due to not knowing the arrival sequence in advance, and the loss of $\tilde O(\sqrt{\OPT})$ is unavoidable due to not knowing the matching probabilities in advance.
The fact that these losses \textit{accumulate} instead of alleviating each other was surprising to us, and to our knowledge, requires a non-trivial new analysis combining Yao's minimax principle with information theory.
We elaborate further when we present our counterexample that demonstrates this tightness.

\textbf{$\epsilon$-perturbed potential function and $(1+\epsilon)$-relaxed regret.}
Our IBOL framework also has the flexibility of an additional parameter $\epsilon\in[0,1]$, which allows the Online Learning algorithm to ``borrow'' an $\epsilon$-share of the reward from the Inventory Balancing algorithm, with both algorithms then re-optimized for the worst case under this new accounting scheme.  It leads to the notion an ``$\epsilon$-perturbed potential function $\Psi$'', which extends the typical inventory balancing function from online matching by placing a \textit{steeper penalty} on almost-depleted resources when $\epsilon>0$.  On the other hand, this new accounting also leads to the notion of ``$(1+\epsilon)$-relaxed regret'' in our auxiliary multi-armed bandits problem, and we propose a new ``LazyUCB'' oracle for minimizing it, which ends up performing \textit{less exploration and more exploitation} than traditional UCB oracles when $\epsilon>0$.

Both of these changes brought by $\epsilon>0$ are intuitive, in our problem setting with both adversarial contexts and unknown probabilities.
On one hand, $\Psi$ has less reason to assign almost-depleted resources, because the unknown probabilities for an almost-depleted resources are less worth learning.
On the other hand, LazyUCB has less reason to explore, because the adversarial contexts mean there is no guarantee that an arm can be legally pulled again in the future.

For the online matching application, we show that by using IBOL with our LazyUCB oracle optimized for $(1+\epsilon)$-relaxed regret, we can obtain a guarantee of \begin{equation}\label{eq:intro_lazy}
\bE[\ALG] \geq  \left(1-\frac{1}{e}\right)\OPT - O(\epsilon\cdot\OPT) - \min\left\{\tilde{O}(\sqrt{\OPT}), \tilde{O}\left(\frac{1}{\epsilon}\right)\right\},
\end{equation}
which captures~\eqref{eqn:introEqn} as a special case when $\epsilon=0$.  Although this does not improve the asymptotic guarantee in the worst case, given an estimate of $\OPT$, parameter $\epsilon$ can be tuned to maximize the bound in~\eqref{eq:intro_lazy} based on the particular constants suppressed by the big-O notation.

\textbf{LazyUCB: the empirical benefit of UCB with less exploration.}
We show in numerical simulations that our LazyUCB oracle empirically outperforms traditional UCB; meanwhile,~\eqref{eq:intro_lazy} shows that it has a worst-case guarantee parameterized by $\epsilon$ that is identical to~\eqref{eqn:introEqn} when $\epsilon=\Theta(1/\sqrt{\OPT})$.  This echoes the results in a recent line of work \citep{BastaniBK17,KannanMRWW18}, who show that (mostly) exploration-free algorithms improve empirical performance while maintaining an asymptotically-optimal theoretical guarantee, for bandits under stochastic contexts.  In contrast to these works, we allow for adversarial contexts, and the driving force behind our result is the inventory constraints.


\textbf{Further simulations on hotel data set.}
To demonstrate the flexibility of our framework, we also apply it to a dynamic assortment optimization problem in which each resource can be sold at different reward rates.  We use the same setup as \citet{MSL17}, except now the choice probabilities must be learned, and we test on the same hotel data set \citep{BFG09}.

\subsection{Roadmap}

In Section~\ref{sec:model} we present our general online resource allocation model as well as specific Applications 1 and 2.  In Section~\ref{sec:alg} we define our general IBOL (Inventory Balancing with Online Learning) algorithmic framework, including the parameter $\epsilon\in[0,1]$.  In Section~\ref{sec:analysis} we provide a general performance guarantee for IBOL which depends on $\epsilon$.  In Section~\ref{sec:MAB_oracle} we derive MAB oracles for the specific Applications 1 (Sections~\ref{sec:UCB_1}--\ref{sec:lazyUCB_1}) and 2 (Section~\ref{sec:UCB_lazyUCB_2}), including a proof that these oracles lead to a tight overall performance guarantee for IBOL (Section~\ref{sec:lb}).  In Section~\ref{sec:numerical} we present experimental results on synthetic instances of Application 1 (Section~\ref{sec:numerical1}) and on a real-world hotel data set (Section~\ref{sec:numerical2}).

\subsection{Literature Review}
\label{sec:review}

We summarize the positioning of our paper in Table~\ref{tab:positioning}.  Our analysis incorporates the loss from two unknown aspects: the adversarial sequence of customer contexts, and the probabilistic decision for a given customer context.  When one or both of these aspects are known, many papers have analyzed the corresponding metrics of interest (competitive ratio, regret, approximation ratio).
To our understanding, we are the first to give a unified analysis for online algorithms involving (i) resource constraints, (ii) learning customer behavior, and (iii) adversarial customer arrivals.  We now review past work which has considered some subset of these aspects, as outlined in Table~\ref{tab:positioning}.

\begin{table}
\caption{Breakdown of the literature on resource-constrained online allocation.  See Section~\ref{sec:review} for a review.}
\label{tab:positioning}
\centering
\begin{tabular}{cc|c|c|}
\cline{3-4}
\up & & \multicolumn{2}{c|}{Sequence of customer contexts} \\
\down & & \multicolumn{2}{c|}{$x^1,x^2,\ldots$} \\
\cline{3-4}
\up & & \textit{(Distributionally)} & \textit{Unknown Adversarial} \\
\down & & \textit{Known} & \textit{(must hedge)} \\
\hline
\up & \multicolumn{1}{|c|}{\textit{(Distributionally)}} & \multirow{2}{*}{Approximation Algorithms} & \multirow{2}{*}{Competitive Analysis} \\
\down Decisions of customer & \multicolumn{1}{|c|}{\textit{Known}} & & \\
\cline{2-4}
\up with context $x$ & \multicolumn{1}{|c|}{\textit{Unknown i.i.d.}} & \multirow{2}{*}{Online Learning} & \multirow{2}{*}{\textbf{[this paper]}} \\
\down & \multicolumn{1}{|c|}{\textit{(can learn)}} & & \\
\hline
\end{tabular}
\end{table}

\subsubsection{Approximation algorithms.}

When both the arrival sequence and customer decisions are distributionally known, many algorithms have been proposed for overcoming the ``curse of dimensionality'' in solving the corresponding dynamic programming problem.
Performance guarantees of bid-pricing algorithms were initially analyzed in \citet{TvR98}. Later, \citet{AHL12} and \citet{WTB15} proposed new algorithms with improved bounds, for models with time-varying customer arrival probabilities.
These performance guarantees are relative to a deterministic LP relaxation (see Section~\ref{sec:analysis}) instead of the optimal dynamic programming solution, and hence still represent a form of ``competitive ratio'' relative to a clairvoyant which knows the arrival sequence in advance (see \citet{WTB15}).

In addition, the special case in which customer arrival probabilities are time-invariant has been studied in \citet{FMMM09} and its subsequent research. We refer to \citet{BrubachSSX16} for discussions of recent research in this direction.

\subsubsection{Competitive analysis.}

We briefly review the literature analyzing the competitive ratio for resource allocation problems under adversarial arrivals.  This technique is often called \textit{competitive analysis}, and for a more extensive background, we refer the reader to \cite{BEY05}.  For more on the application of competitive analysis in online matching and allocation problems, we refer to \cite{Meh13}.  For more on the application of competitive analysis in airline revenue management problems, we refer to the discussions in \cite{BQ09}.

Our work is focused on the case where competitive analysis is used to manage the consumption of resources.  The prototypical problem in this domain is the Adwords problem \citep{MSVV07}.  Often, the resources are considered to have large starting capacities---this assumption is equivalently called the ``small bids assumption'' \citep{MSVV07}, ``large inventory assumption'' \citep{GNR14}, or ``fractional matching assumption'' \citep{KP00}.  In our work, we use the best-known bound that is parametrized by the starting inventory amounts \citep{MSL17}.  The Adwords problem originated from the classical online matching problem \citep{KVV90}---see \cite{DJK13} for a recent unified analysis.  The competitive ratio aspect of our analysis uses ideas from this analysis as well as the primal-dual analysis of Adwords \citep{BJN07}. We also refer to \cite{DJ12,KP16,MSL17} for recent generalizations of the Adwords problem.

Our model also allows for probabilistic resource consumption, resembling many recent papers in the area starting with \cite{MP12}.  We incorporate the \textit{assortment} framework of \cite{GNR14}, where the probabilistic consumption comes in the form of a random customer choice---see also \cite{CMSLX16,MSL17}.  However, unlike these papers on assortment planning, our model does not require the substitutability assumption on the choice model, since we allow resources which have ran out to still be consumed for zero reward.

\subsubsection{Online learning.}
The problem of learning customer behavior is conventionally studied in the field of \emph{online learning}. For a comprehensive review on recent advances in online learning, we refer the reader to \citet{Bubeck2012,Slivkins2017}.


Our research focuses on online learning problems with resources constraints. \cite{Badanidiyuru2014,Agrawal2014} incorporate resource constraints into the standard multi-armed bandit problem, and propose allocation algorithms with provable upper bounds on the regret. \cite{BKS13,Agrawal2015a,Agrawal2015b} study extensions in which customers are associated with independently and identically distributed context vectors; the values of reward and resource consumption are determined by the customer context. \cite{Besbes2009,Besbes2012,Babaioff2011,Wang2011,FSLW16} study pricing strategies for revenue management problems, where a resource-constrained seller offers a price from a potential infinite price set to each arriving customer. Customers are homogeneous, in the sense that each customer has the same purchase probability under the same offered price.

Those models with resource constraints in the current literature assume that the type (if there is any) of each customer is drawn from a fixed distribution that does not change over time. As a result, there exists an underlying fixed randomized allocation strategy (typically based on an optimal linear programming solution) that converges to optimality as the number of customers becomes large. The idea of the online learning techniques involved in the above-mentioned research works is to try to converge to that fixed allocation strategy. In our model, however, there is no such fixed allocation strategy that we can discover over time. For instance, the optimal algorithm in our model may reject all the low-fare customers who arrive first and reserve all the resources for high-fare customers who arrive at the end. As a result, the optimal algorithm does not earn any reward at first, and thus cannot be identified as the best strategy by any learning technique. Our analysis is innovative as we construct learning algorithms with strong performance guarantees without trying to converge to any benchmark allocation strategy.

Finally, the LazyUCB oracle proposed in the paper is related to, and inspired by, a recent body of research \citep{BastaniBK17,KannanMRWW18} on (mostly) exploration-free approaches for the stochastic contextual multi-armed bandit problem. These works highlight the observation that, in stochastic contextual bandit settings, exploration free algorithms often \emph{empirically} out-perform traditional algorithms such as Upper-Confidence Bound (UCB) and Thompson Sampling (TS). These research works propose theoretical justifications by establishing regret bounds based on certain regularity assumptions on the contextual vectors and the latent parameters. While the theoretical guarantees for the (mostly) exploration-free approaches established in \citet{BastaniBK17,KannanMRWW18} are no better than the best-known theoretical guarantee for the stochastic contextual bandit problem, the authors demonstrate that their proposed algorithms are consistently superior to traditional algorithms in terms of the empirical performance.

Similar to these works, our proposed LazyUCB oracle reduces the amount of exploration in existing UCB algorithms. However, our work differ from \citet{BastaniBK17,KannanMRWW18} in three ways. First, our LazyUCB oracle still includes an exploration bonus in its computation of upper-confidence intervals, while \citet{BastaniBK17,KannanMRWW18} require full exploitation and no exploration. Second, we allow the contextual information of different customers to vary arbitrarily and adversarially without any assumption on how the contextual information varies among customers. By contrast, \citet{BastaniBK17,KannanMRWW18} require the contextual information of different customers to be drawn i.i.d. from a latent probability distribution, satisfying certain regularity assumptions, in order for the theoretical guarantees to hold. Third, we consider an inventory-constrained setting, while \citet{BastaniBK17,KannanMRWW18} consider settings without any constraint on the choices of arms.

\section{Model Formulation}
\label{sec:model}

Throughout this paper, we let $\bN$ denote the set of positive integers. For any $n \in \bN$, let $[n]$ denote the set $\{1,2, \ldots ,n\}$.

We consider the following class of online resource allocation problems. An online platform has a collection of $n\in \bN$ resources, denoted $[n]$, to be allocated to $T\in \bN$ customers who arrive sequentially. For each $i\in [n]$, the platform has $b_i\in \bN$ units of resource $i$, that are not replenishable during the allocation period. Each unit of resource $i$ is associated with reward $r_i$ normalized to lie in $(0, 1]$. In Sections~\ref{sec:numerical2} and~\ref{sec:multipleRates}, we consider a generalized setting where each resource is associated with multiple reward values as in \citet{MSL17}.

We now define the notation regarding an allocation to a customer.
Each customer is associated with a context $x$, and a context carries personal information about the customer. We denote ${\cal X}$ as the set of all possible contexts. The set  ${\cal X}$ is finite and is known to the online platform. The variation among contexts models the heterogeneity among the customers, and the context sequence is generated adversarially. There is an action set ${\cal A}$, which represents the set of allocations decisions. Each pair of $x\in {\cal X}$ and $a\in {\cal A}$ is associated with an outcome distribution $\rho_{x ,a}$, which is a probability distribution over $\{0, 1\}^{n}$. For each $\vy\in \{0, 1\}^n$, we let $\rho_{x, a}(\vy)$ denote the probability that the outcome if $\vy$.

\textbf{Dynamics.} The platform interacts with the customers in $T$ discrete time steps. For each $t\in \{0, 1, \ldots, T\}$ and each $i\in [N]$, we denote $N^t_i$ as the number of units of resource $i$ that have been consumed by the \textit{end} of time $t$. In particular, we have $N^0_i = 0$ for all $i$.
At time step $t\in [T]$, four events happen. First, customer $t$ arrives, and their context $x^t$ is revealed to the platform. Second, the platform selects an action $a^t\in \cA$, based on $x^t$ and the observations in time steps $1, \ldots, t-1$. Third, the platform observes the vectorial outcome ${\vy}^t = (\vy^t_i)_{i\in [n]} \in \{0, 1\}^n$, which is distributed according to the distribution $\rho_{x^t, a^t}$.\footnote{The outcome ${\vy}^t$ is described more precisely as follows. Before the online process, for each $x\in {\cal X}, a\in {\cal A}$ the nature generates i.i.d. samples $\{\vy^s_{x, a}\}_{s =1}^\infty$. At time $t$, when the context is $x^t$ and action $a^t$ is chosen, the nature reveals $\vy^{n^t}_{x^t, a^t}$ as the outcome, where $n^t = \sum^{t}_{s=1}\mathbf{1}(x^s = x, a^s = a)$ is the number of occurrences of $(x,a)$ from time 1 to $t$.} Fourth, if $\vy^t_i =1$ and resource $i$ is not yet depleted ($N^{t-1}_i < b_i$), then one unit of the inventory of resource $i$ is consumed ($N^t_i = N^{t-1}_i + 1$), and a reward of $r_i$ is earned. If $\vy^t_i = 1$ but resource $i$ is depleted, or if $\vy^t_i = 0$, then no resource $i$ is consumed ($N^t_i = N^{t-1}_i$) and no reward is earned.

It is worth noting that the feedback $\vy^t$ at each time $t$ is a partial feedback, which is more precisely known as \emph{bandit feedback} in the online learning literature. The feedback is partial in the sense that the platform only observes $\vy^t$ under the action $a^t$, but it does not observe the feedback under any other actions.

\textbf{Model Uncertainty.} The online allocation problem involves model uncertainty in two dimensions. First, the sequence of contexts $\{x^t\}^T_{t=1}$ is generated by an oblivious adversary, who cannot see any information related to $\{a^t\}^T_{t=1}, \{\vy^t\}^T_{t=1}$.
In particular, the contexts $x^1, x^2, \ldots, x^T$ do not generally come from any fixed distribution. Instead, they could vary arbitrarily. The adversarial uncertainty models the volatile and the unpredictable nature of customer arrivals in e-service operations settings. 

Second, for each $x\in {\cal X}, a\in {\cal A}$, the probability distribution $\rho_{x, a}$ is not known to the platform. Rather, the platform has to learn the distribution for each $x, a$ during the online process. It is of interest to learn the latent parameter $$p_{x, a, i} = \mathbb{E}_{\vy_{x, a}\sim \rho_{x, a}}[\vy_{x, a, i}] = \sum_{\vy\in \{0, 1\}^n}\rho_{x, a}(\vy)\vy_i,$$ which is the probability that the outcome for resource $i$ is 1, when the context is $x$ and the action is $a$. The Bernoulli random variables $\vy_{x, a, 1}, \vy_{x, a, 2}, \ldots, \vy_{x, a, n}$ can be correlated in general.

Altogether, our online resource allocation model requires the platform to \emph{hedge} against the adversarial uncertainty of customers' contexts $\{x^t\}^T_{t=1}$, while simultaneously balancing the \emph{explore vs.\ exploit} tradeoff on the uncertainty in the stochastic model  $\{\rho_{x, a}\}_{x\in {\cal X}, a\in {\cal A}}$.
The main thesis of this work is about how the platform manages the three-way trade-off among hedging, exploration, and exploitation.

\textbf{Objective.} The platform's objective is to maximize the total expected revenue. Mathematically, the platform maximizes $$\mathbb{E}\left[ \sum_{i\in [n]} r_i  \sum_{t\in [T]} \vy^t_i \cdot \mathbf{1}(N^{t-1}_i < b_i)\right] = \mathbb{E}\left[ \sum_{i\in [n]} r_i \min\left\{b_i , \sum_{t\in [T]} \vy^t_i \right\}\right].$$
The platform is subject to the inventory constraints that at most  $b_i$ units of resource $i$ are consumed for each $i\in [n]$. The expectation is taken over the randomness in the actions $a^1, \ldots, a^T$ and the stochastic outcomes $\vy^1, \ldots, \vy^T$.

Finally, we relate our online resource allocation model to the existing literature. If the probability distributions $\{\rho_{x, a}\}_{x\in \cX, a\in \cA}$ are known to the platform, then we essentially recover the setting in \cite{GNR14}.\footnote{The model in \cite{GNR14} uses the language of assortment optimization, but it is not hard to abstract it to match our resource allocation setting.}  If in addition the outcome $\vy$ is deterministic given $x, a$ (that is, $\rho_{x, a}$ is the distribution for a deterministic random variable for each $x, a$), we recover the Adwords problem of \citet{MSVV07}.


\textbf{Applications. }
Our problem represents a generic resource allocation model with general context set ${\cal X}$ and action set ${\cal A}$. For a concrete discussion, we consider the following two specializations of $[n], [T], {\cal X}, {\cal A}, \rho$, which capture important applications in e-service operations. We elaborate on these applications in Sections \ref{sec:app1}, \ref{sec:app2}, and summarize these applications in Table \ref{tab:app}.
\subsection{Application 1: Internet advertising / Crowd-sourcing} \label{sec:app1}
This model is based on the Online Matching with Stochastic Rewards problem of \citet{MehtaP12}, except there could be $K>1$ probabilities associated with each offline vertex, and these probabilities must be learned.

\textbf{Internet advertising.}
The first application concerns the dynamic allocation of internet advertisement from advertisers to web surfers, with the objective of maximizing the total pay-per-click \citep{MehtaP12,Mehta13,GoyalU19}. The advertisers are modeled as the resources  $[n]$, and the web-surfers are modeled as the customers $[T]$, who arrive sequentially at the platform during a certain planning horizon, say during a day. Each advertiser $i\in [n]$ is willing to spend at most $b_i\cdot r_i$ dollars for receiving clicks on their advertisements.

The context set is $\cX=\{0,1\}^n$. 
A customer with context $x$ only clicks on an advertisements from advertisers in $\{i : x_i = 1\}$. Therefore, it is sensible to match a customer with context $x$ with advertiser $i$ only if $x_i = 1$. For example, with a customer who is known to have recently purchased an android phone, it is sensible for the platform to allocate an advertisement on complementary products such as phone accessory, but not an advertisement on another android phone.

Next, we describe the action set ${\cal A}$. Each advertiser has $K\in\bN$ different advertisements, e.g., $K$ videos/banners.
The action set is ${\cal A} =\{(i, k) : i\in [n], k\in [K]\}$. When the action $(i, k)$ is taken, it means that the platform allocates the $k$'th advertisement of advertiser $i$ to the customer. The resulting click probability is $\mathbf{1}(x_i = 1)q_{(i, k)}$. The quantity $q_{(i, k)}$ is latent, whereas the quantity $\mathbf{1}(x_i = 1)$ is not since the context is revealed before an action is chosen. Collectively, the probability model $\{\rho_{x, (i, k)}\}_{x\in \{0, 1\}^n, i\in [n], k\in [K]}$ is defined as
\begin{align}
\rho_{x,(i,k)}(\ve_i) &= \mathbf{1}(x_i=1)\cdot q_{(i,k)}, \nonumber\\
\rho_{x,(i,k)}(\vzero) &=1-\rho_{x,(i,k)}(\ve_i), \label{eq:app1_prob}\\
\rho_{x,(i,k)}(\vy) &=0\text{ for all other outcomes $\vy$ in $\{0,1\}^n$}\nonumber.
\end{align}
There are $Kn$ many latent terms $\{q_{(i, k)}\}_{i\in [n], k\in [K]}$ to be learned.

The platform earns a revenue of $r_i$ when an advertisement from advertiser $i$ is clicked, and the platform earns nothing if there is no click. The objective is to maximize the total expected revenue based on the pay-per-click, subject to the budget constraints of the advertisers.
The adversarial uncertainty on $\{x^t\}^T_{t=1}$ reflects that web-surfers arrivals are highly volatile, and they are influenced by so many different factors that they are hard to be precisely forecast. The model uncertainty on $\{\rho_{x, a}\}_{x\in {\cal X}, a\in {\cal A}}$ reflects that customers' tastes have to be learned during the planning horizon.

\textbf{Crowd-sourcing. }The same mathematical model on $[n], [T], {\cal X}, {\cal A}, \rho$ captures a class of crowd-sourcing problems \citep{HoV12,KargerOS14}.\footnote{Nevertheless, we still refer to the above model as the online advertisement allocation problem.} We interpret $[n]$ as a collection of task owners, who pose their tasks on an online crowd-sourcing platform, for example Amazon Mechanical Turk. Each task owner $i\in [n]$ has $K$ tasks to be completed.  The workers, represented as $[T]$, arrive at the online platform sequentially. 
A worker with context $x \in {\cal X} = \{0, 1\}^n$ is only capable of accomplishing tasks from task owners in $\{ i : x_i = 1\}$.

When the action $(i, k)\in {\cal A} = \{(i, k)\}_{i\in [n], k\in [K]}$ is taken with a customer of context $x$, it means that task $k$ from task owner $i$ is assigned to the customer. The outcome $\vy_{x, (i, k)}$ is either the task owner $i$ has their task accomplished ($\vy_{x, (i, k)} = \ve_i$) or no task is accomplished ($\vy_{x, (i, k)} = \vzero$), according to the probability distribution $\rho$ in equations (\ref{eq:app1_prob}). If a worker successfully accomplishes a task from owner $i$, the worker earns a reward $r_i$, otherwise the worker does not earn any reward. The crowd-sourcing platform acts as a welfare maximizer, who aims to maximize the total amount of revenue earns by the workers in order to encourage participation into the platform.

\subsection{Application 2: Personalized Product Recommendation with Customer Segmentation.}\label{sec:app2}
Our model also applies when an inventory constrained seller conducts sales to a pool of heterogeneous customers. In contrast to Application 1, here the probabilities for successfully allocating the resources depend on the customer segment.
These can be learned over time as there are repeated customers from the same segments. The seller has $n$ types of products, denoted $[n]$. They have $b_i$ units of product $i$ for each $i\in [n]$, which are not replenishable during the planning horizon. There are $T$ customers, collectively denoted as $[T]$, who arrives at the seller's platform sequentially. The customers are heterogeneous, and they are segmented in terms of the customers' characteristics, such as their gender, age and occupation.

The context set ${\cal X}$ denotes the set of all customer segments. The action set ${\cal A} = [n]$ corresponds to the set of products. 
Based on the observed customer segment $x$, the seller recommends to the customer a product $i$, which corresponds to action $i$. 
The outcome $\vy\in \{0, 1\}^n$ is equal to $\ve_i$ (the customer buys the recommended product) with probability $p_{x, i}$, and is equal to $\vzero$ (the customer does not buy) with the complementary probability $1 - p_{x, i}$ . Altogether, $\rho_{x, i}( \ve_i) = p_{x, i} = 1 - \rho_{x, i}(\vzero)$. The resulting expected revenue is $r_i p_{x, i}$. The platform's objective is to maximize the total expected revenue, subject to the inventory constraints and the model uncertainty on $\{x^t\}_{t\in [T]}$, $\{\rho_{x, i}\}_{x\in \cX, i\in [n]}$.



\begin{table}
\centering
\begin{tabular}{|c|c|c|c|}
\hline
Application & Internet Ad (\S~\ref{sec:app1}) & Crowd-sourcing (\S~\ref{sec:app1}) & Personalized OM (\S~\ref{sec:app2})\\
\hline
$[n]$ & Advertisers & Task owners & Products \\
\hline
$[T]$ & Web-surfers & Workers & Customers \\
\hline
${\cal X}$ & Compatibility & Compatibility & Customer segments \\
\hline
${\cal A}$ & Ad allocation & Task allocation & Product recommendation\\
\hline
$\rho$ & Click probability & Success probability & Purchase probability \\
\hline
Objective & Pay-per-click & Workers' reward & Platform's revenue \\
\hline
To estimate & $\{q_{(i, k)}\}_{i\in [n], k\in [K]}$ & $\{q_{(i, k)}\}_{i\in [n], k\in [K]}$ & $\{p_{x, i}\}_{x\in \cX, i\in [n]}$ \\
\hline
\end{tabular}
\caption{Applications of our online model}\label{tab:app}
\end{table}

\section{Online Allocation Algorithm: IBOL}
\label{sec:alg}
We present a framework in Algorithm \ref{alg:ibol}, called IBOL (``Inventory Balancing with Online Learning''), for solving our online resource allocation problem. The IBOL algorithm involves two inputs: a potential function $\Psi$, and a multi-armed bandit (MAB) oracle $\{{\cal O}^t\}^T_{t=1}$. They are respectively used to hedge against the adversarial uncertainty on $\{x^t\}^T_{t=1}$ and to learn the uncertain model on $\{\rho_{x, a}\}_{x \in \cX, a\in \cA}$. The IBOL algorithm also requires the input of the \emph{exploitation parameter} $\epsilon\in [0, 1]$, which affects both our potential function $\Psi$ and MAB oracle $\{\cal{O}^t\}_{t\geq 1}$. On a high level, when the exploitation parameter $\epsilon$ increases, the algorithm conducts more exploitation but less exploration on the latent model  $\{\rho_{x, a}\}_{x \in \cX, a\in \cA}$.
A precise description about the role of $\epsilon$ and how to set it is deferred to Section~\ref{sec:analysis}, where we discuss the performance guarantee for IBOL.

\begin{algorithm}[t]
\caption{Inventory-Balancing with Online Learning (IBOL)}\label{alg:ibol}
\begin{algorithmic}[1]
\State Inputs: Inventory levels $\{b_i\}_{i\in [n]}$, exploitation parameter $\epsilon\in [0,1]$, $\epsilon$-perturbed potential function $\Psi$, MAB oracle $\{\cal{O}^t\}_{t\geq 1}$ (see Section \ref{sec:MAB_oracle} for concrete examples of $\{\cal{O}^t_\epsilon\}_{t\geq 1}$ on Applications 1, 2).
\State Initialize: $N^0_i = 0$ for all $i\in [n]$.
\For{$t = 1, \ldots, T$}
\State Observe the feature vector $x^t$ of the customer at time $t$.
\State Compute the discounted rewards $r^t_i$ for all $i$, where
\begin{equation}\label{eq:discountedReward}
r^t_i = r_i \left( 1 - \Psi\left(\frac{N_i^{t-1}}{b_i}\right)\right),
\end{equation}
with $\Psi$ being our $\epsilon$-perturbed potential function in~\eqref{eqn:epsPerturbed}.
\State Compute the action using the online learning algorithm ${\cal O}^t$:
\begin{equation}
\label{eq:ol}
a^t = {\cal O}^t({\cal F}^{t-1}, x^t, (r^t_i)_{i\in[n]} ; U).
\end{equation}
\State Observe the feedback $\vy^t_{x^t, a^t} = (\vy^t_{x^t, a^t, i})_{i\in [n]}$.
\State For each $i\in [n]$, if $\vy^t_{x^t, a^t, i} = 1$ and $N^{t-1}_i < b_i$, the platform earns $r_i$, and depletes 1 unit of resource $i$: $N^t_i = N^{t-1}_i + 1$. Otherwise, the platform earns nothing, and $N^t_i = N^{t-1}_i$.
\State Update the history ${\cal F}^{t+ 1} = {\cal F}^t \cup (x^t, a^t, \vy^t)$.
\EndFor
\end{algorithmic}
\end{algorithm}

\textbf{Definition of $\epsilon$-perturbed Potential Function $\Psi$.}
A potential function is a standard tool used in online resource allocation to generate discounted rewards which guide an algorithm's optimization.
More specifically, $\Psi : [0, 1] \rightarrow [0, 1]$ is a non-decreasing function satisfying $\Psi(0) = 0, \Psi(1) = 1$.
The rewards of resources are discounted by a factor of $1-\Psi(N^{t-1}_i / b_i)$, to penalize the allocation of resources which have been overutilized relative to their starting amounts (recall that $N^{t-1}_i$ denotes the units of resource $i$ allocated by the start of a time $t$), in anticipation of the adversarial contexts $x^1,x^2,\ldots$.
Note that if $N^{t-1}_i = b_i$, i.e.\ resource $i$ is depleted, then the discounted reward is 0, which is consistent with the assumption that no reward is earned from depleted resources.

For $\epsilon\in [0,1]$, we will be using a new ``$\epsilon$-perturbed'' potential function
\begin{align} \label{eqn:epsPerturbed}
\Psi(x) = \frac{e^{(1+\epsilon)x} - 1}{e^{1+\epsilon}-1},
\end{align}
which recovers the classical potential function of $\Psi(x) = \frac{e^x - 1}{e-1}$ from \citet{MSVV07} when $\epsilon=0$.
Our $\epsilon$-perturbed potential function is designed to maximize the competitive ratio of the online algorithm when its reward has been reduced by $\epsilon$.
For $\epsilon>0$, our $\epsilon$-perturbed Potential Function is steeper than the classical potential function, i.e.\ it places a relatively greater penalty on almost-depleted resources.
This is intuitive, because in our generalized problem where probabilities must be learned, there is relatively less reason to select almost-depleted resources, since there is less benefit to learning the probabilities for such resources.

Given a potential function $\Psi$,
a standard approach \citep[see e.g.][]{GNR14} for an online algorithm is to play at each period $t$ an action $a^t_*\in \cA$ which maximizes the discounted reward
\begin{align*}
R^t(a) &= \sum_{i\in [n]} r_i \left[ 1 - \Psi\left(\frac{N_i^{t-1}}{b_i}\right)\right] \sum_{\vy\in \{0, 1\}^{[n]}}  \rho_{x^t, a}(\vy)\vy_i
\\&= \sum_{i\in [n]}r_i\left[ 1 - \Psi\left(\frac{N_i^{t-1}}{b_i}\right)\right] \cdot p_{x^t, a, i}.
\end{align*}
However, in our generalized problem the probabilities $p_{x, a, i}$ are unknown, making the existing approach unapplicable. Instead, the platform needs to simultaneously: explore to learn the probabilities $p_{x, a, i}$, while maximizing $R^t(a)$, and also hedging against the adversarial uncertainty on $\{x^t\}^T_{t=1}$. We now formulate an \textit{auxiliary problem}, new to our work, which captures this three-way trade-off between hedging, exploration, and exploitation.

\textbf{Definition of Auxiliary Problem.} Our auxiliary problem is a contextual stochastic bandit problem with the same context set $\cX$, action set $\cA$, and distributions $\{\rho_{x, a}\}_{x\in \cX, a\in \cA}$ as the online resource allocation problem. The distributions $\{\rho_{x, a}\}_{x\in \cX, a\in \cA}$ are still unknown from the beginning and have to be learned. An MAB oracle $\{{\cal O}^t\}^T_{t=1}$ is a learning algorithm designed for solving the auxiliary problem, where ${\cal O}^t$ is used for the decision at time $t$.
In our auxiliary problem, we think about four events happening at each time $t$. First, customer $t$ arrives, and the platform is provided with the context $x^t$ and the discounted reward defined as $$r^t_i=r_i \Big( 1 - \Psi(\frac{N_i^{t-1}}{b_i})\Big).$$
Second, the platform chooses an action $a^t\in \cA$ using the oracle function ${\cal O}^t$. 
Mathematically, it is expressed as $a^t = {\cal O}^t({\cal F}^{t-1}, x^t, r^t)$, where ${\cal F}^{t-1} = \{(x^s, a^s, \vy^s)\}^{t-1}_{s=1}$ consists of the observations in time steps $1, \ldots, t-1$. Third, the platform observes the vectorial outcome $\vy^t$ that is distributed according to $\rho_{x^t, a^t}$. Fourth, the platform receives the reward $ \sum_{i\in [n]} r^t_i \cdot \vy^t_i$. Overall, the platform aims to design an MAB oracle that maximizes the total expected reward
$$
\mathbf{E}\left[\sum_{t\in [T]}\sum_{i\in [n]} r^t_i \cdot \vy^t_i \right] =\mathbf{E}\left[\sum_{t\in [T]}\mathbf{E}\left[\sum_{i\in [n]} r^t_i \cdot \vy^t_i \mid x^t, a^t, r^t_i\right]\right] = \mathbf{E}\left[\sum^T_{t=1} R^t(a^t)\right].
$$

\textbf{Justification for Auxiliary Problem.}
Our auxiliary problem can be seen as a way of ``abstracting'' the inventory constraints away from a typical contextual bandit problem, by introducing the discounted rewards $r^t_i$.
In contrast to traditional ``Bandits with Knapsacks'' approaches (reviewed in Section~\ref{sec:review}), which do not have adversarial contexts, we define these rewards based on the potential function and the current resource consumption at time $t$, to hedge against the adversarial contexts.
This leads to a contextual bandit problem with \textit{non-stationary} rewards,
in which the optimal action $a^t_*$ changes across time not only because of changes in the context $x^t$, but also because of the non-stationarity of $r^t_i$ over time.
Moreover, this change in $r^t_i$ is adaptively influenced by the platform's decisions in time $1, \ldots, t-1$, and is difficult to control.
Therefore, in our abstracted contextual bandit problem, we simply allow the values of $r^t_i$
to be generated by an adaptive adversary, who can decide them based on historical information.
A key part of our analysis is then to show that this is the ``correct'' learning problem to focus on, where inventory is unconstrained but regret is measured with respect to these non-stationary rewards $r^t_i$, instead of the the original problem where inventory was constrained but the rewards $r_i$ were fixed.


A priori, it might appear that, for the auxiliary problem, a learning algorithm needs to deviate from the traditional stochastic MAB framework and to adapt to the reward non-stationarity, in the same vein as the existing literature on non-stationary stochastic bandits \citep[e.g.][]{GarivierM11,GurZB14}. Nevertheless, in subsequent Sections, we demonstrate that it is still possible to adapt the existing stationary stochastic MAB tools, despite the auxiliary problem's non-stationarity, by decoupling the non-stationarity of $r^t_i$ from the learning problem on $\rho$, in the contexts of Applications 1, 2.  This is important for our solution of the auxiliary problem.

\textbf{$(1+\epsilon)$-Relaxed Regret in the Auxiliary Problem.}
The rewards for our auxiliary problem were generated by a ``perturbed'' potential function $\Psi$ which aimed to maximize the competitive ratio of an online algorithm whose reward has been reduced by a factor of $\epsilon$.
To compensate, in the auxiliary problem, the algorithm's reward is boosted by a factor of $\epsilon$ and we aim to minimize the notion of \emph{$(1+\epsilon)$-relaxed regret:}
\begin{equation}\label{eq:ep_reg}
\text{Reg}_\epsilon = \bE \left[\sum^T_{t=1}R^t(a^t_*) - (1 +\epsilon ) R^t(a^t)\right].
\end{equation}
Recall that $a^t_*$ denotes the action that maximizes the discounted reward $R^t(a)$ among all actions $a$.
When we set $\epsilon = 0$ in the definition of $(1+\epsilon)$-relaxed regret in (\ref{eq:ep_reg}), we recover the classical notion of regret, $\text{Reg}_0$.  We elaborate on how $\epsilon$ affects the performance of IBOL in Section~\ref{sec:analysis}.

The value of $\epsilon$ affects our choice of MAB oracle, because the goal of the MAB oracle is to make $\text{Reg}_\epsilon $ in (\ref{eq:ep_reg}) as small as possible in the worst case.
A salient difference in our case when $\epsilon>0$ is that a sublinear $\text{Reg}_\epsilon$ can be attained by identifying an action that $\epsilon$-optimal, instead of needing to eventually learn what the exact optimal action is.
For a given input $\epsilon\in[0, 1]$, we design MAB oracles that are variants of UCB optimized for $\text{Reg}_\epsilon $, in the context of Applications 1, 2.
These variants coincide with a traditional UCB algorithm when $\epsilon = 0$. On the other hand, when $\epsilon \in (0, 1]$, these variants end up being more greedy than a traditional UCB algorithm, in the sense that they conduct less exploration but more exploitation.  Thus, we refer to our variants using ``LazyUCB''.

The design of learning algorithms with less exploration than traditional approaches is similar in spirit to the recent papers by \citet{BastaniBK17,KannanMRWW18}, who show that less exploration leads to empirically better algorithms. As shown in our numerical experiments in Section \ref{sec:numerical}, our proposed greedy variants also achieve better empirical performances than the traditional approaches, in our setting where there are inventory constraints under unknown contexts.
Altogether, the motivation for perturbing both our potential function and notion of regret by $\epsilon$ is that the overall performance can be improved, when the MAB algorithm is essentially ``borrowing'' an $\epsilon$-share of the reward from the potential function, and then both of these are re-optimized.

\section{Analysis of the IBOL Algorithm}
\label{sec:analysis}
In this section, we bound the performance of our IBOL algorithm, with parameter $\epsilon$, in terms of the $(1+\epsilon)$-relaxed regret incurred by its underlying MAB oracle.
In the next section we develop MAB oracles which specifically minimize $(1+\epsilon)$-relaxed regret for Applications 1, 2.

\textbf{LP Upper Bound.}
We compare the performance of the IBOL algorithm to a benchmark defined by a linear program (LP) called \textbf{Primal}. Our benchmark is the optimal value $\OPT$ of LP \textbf{Primal}, which upper bounds the total expected reward of any algorithm that knows both $\{\rho_{x, a}\}_{x\in \cX, a\in \cA}$ and $\{x^t\}^T_{t=1}$ before the process begins. The formulation of this linear program benchmark for resource allocation is standard in the revenue management literature, and we formulate it below:
\begin{align}
\textbf{Primal: } \max    & \sum_{t\in [T]} \sum_{a\in \cA} s_{a, t} \cdot \left[\sum_{i\in [n]} r_i p_{x^t, a, i} \right] \label{eq:primal_obj}\\
\text{s.t. }   & \sum_{t\in [T]} \sum_{a\in \cA} s_{a, t} \cdot p_{x^t, a, i} \leq b_i    &\quad &\forall i \in [n] \label{eq:primal_1}\\
& \sum_{a\in \cA} s_{a, t} = 1 &\quad &\forall t\in [T] \label{eq:primal_2}\\
&s_{a, t}\geq 0      &\quad &\forall a\in  \cA, t\in [T]. \label{eq:primal_3}
\end{align}
The LP \textbf{Primal} serves as a fluid relaxation of the constrained online problem. As modeled by the constraints (\ref{eq:primal_2}, \ref{eq:primal_3}), the variable $s_{a,t}$ represents the unconditional probability of an algorithm taking action $a$ in period $t$. Consequently, the objective (\ref{eq:primal_obj}) of \textbf{Primal} is to maximize the total expected revenue. The set of constraints (\ref{eq:primal_1}) only requires the resource constraints to be satisfied in expectation, which is a relaxation to the online problem.
\begin{lemma}\label{lemma:benchmark}
For any online algorithm that satisfies the resource constraints $\sum^T_{t=1}\vy^t_i \leq b_i$ for all $i\in [n]$ with certainty, its total expected reward is at most $\OPT$.
\end{lemma}
For completeness we provide a proof of Lemma~\ref{lemma:benchmark} in Appendix~\ref{app:pf_lemma_benchmark}.

\textbf{Performance Guarantee.}
Equipped with Lemma~\ref{lemma:benchmark}, we are now ready to compare the total expected reward $\bE[\ALG]$ collected by the IBOL algorithm to the benchmark $\OPT$.  Theorem~\ref{thm:main} below maintains the generality of the algorithmic framework in Section \ref{sec:alg}, in that the performance guarantee holds for the general online resource allocation problem, not just Applications 1, 2, and the performance guarantee also holds for any MAB oracle.

\begin{theorem}\label{thm:main}
For any $\epsilon\in [0,1]$, the total reward $\ALG$ earned by the IBOL algorithm, using our $\epsilon$-perturbed potential function $\Psi(x) = \frac{e^{(1+\epsilon)x} - 1}{e^{1+\epsilon}-1}$, satisfies
\begin{align}
\label{eq:guarantee}
\bE[\ALG]
&\ge f_{\Psi}(\{b_i\}_{i\in [n]}, \epsilon) \cdot \left(\OPT-\bE\left[\sum_{t\in [T]} R^t(a^t_*) - (1+\epsilon) R^t(a^t)\right]\right),
\end{align}
where
\begin{equation}\label{eq:f_Psi}
f_{\Psi}(\{b_i\}_{i\in [n]}, \epsilon) = \min_{i\in [n]}\left\{ \frac{1-e^{-(1+\epsilon)}}{(b_i+1+\epsilon)(1-e^{-(1+\epsilon)/b_i})}\right\}.
\end{equation}
\end{theorem}
The proof of Theorem \ref{thm:main} is deferred to Appendix \ref{app:pf_thm_main}.
The expected reward $\bE[\ALG]$ of the algorithm is smaller than $\OPT$ in two ways: first, it is scaled down by the competitive ratio $f_{\Psi}(\{b_i\}_{i\in [n]}, \epsilon)$ which is less than 1; there is also an additive loss of the term $\bE\left[\sum_{t\in [T]} R^t(a^t_*) - (1+\epsilon) R^t(a^t)\right]$ which denotes the $(1+\epsilon)$-relaxed regret of the MAB oracle.
We note that choosing a larger $\epsilon$ in [0,1] for our IBOL algorithm will cause the competitive ratio $f_{\Psi}(\{b_i\}_{i\in [n]}, \epsilon)$ to decrease, but in return, the $(1+\epsilon)$-relaxed regret will be smaller.

\textbf{Justification for Form of Performance Guarantee.}
Our guarantee~\eqref{eq:guarantee} measures the regret in comparison to $\OPT$ after it has been scaled down by $f_{\Psi}(\{b_i\}_{i\in [n]}, \epsilon)$.
We now explain why a meaningful (i.e.\ sublinear) regret is impossible in our setting if we do not scale down $\OPT$.

The competitive ratio $f_{\Psi}(\{b_i\}_{i\in [n]}, \epsilon)$ is at its maximum for any vector $\{b_i\}_{i\in [n]}$ when $\epsilon=0$, in which case it can be re-expressed based on $b_\text{min} = \min_{i\in [n]}b_i$ as
\begin{align}
f_{\Psi}(\{b_i\}_{i\in [n]}, 0 )
= \min_{i\in [n]}\left\{ \frac{1-e^{-1}}{(b_i+1)(1-e^{-1/b_i})}\right\}
= \frac{1-1/e}{(1+\bmin)(1-e^{-1/\bmin})}. \label{eqn:bDependence}
\end{align}
Importantly, expression~\eqref{eqn:bDependence} represents the \textit{maximum fraction of $\OPT$ that can be obtained by any online algorithm} when there are both adversarial contexts $x^1,x^2,\ldots$ and capacity limits $\{b_i\}_{i\in [n]}$.
This fraction increases from 1/2 to $1-1/e$ as $\bmin$ increases from 1 to $\infty$. The asymptotic ratio of $1-1/e$ has been shown to be best-possible by \citet{MSVV07}, with the expression given in~\eqref{eqn:bDependence} denoting the best-known dependence on $\bmin$ due to \citet{MSL17}.
That is, even if all of the underlying probabilities $p_{x,a,i}$ are known and there is \textit{nothing to learn}, an online algorithm still cannot earn a fraction of $\OPT$ greater than~\eqref{eqn:bDependence}, which lies in [0.5, 0.632].
Consequently, if one attempts to directly measure the regret $\OPT-\bE[\ALG]$, then a regret sub-linear in $\OPT$ is impossible.

This is why we measure regret in comparison to $f_{\Psi}(\{b_i\}_{i\in [n]}, \epsilon) \cdot \OPT$.  In fact, we show this form of performance guarantee, with both a multiplicative and additive loss term like in~\eqref{eq:guarantee}, to be \textit{tight} for our Application 1 corresponding to online matching, in Section~\ref{sec:lb}.

\textbf{Tuning the $\epsilon$ Parameter.}
We let $\epsilon$ to be a parameter in [0,1], instead of fixing $\epsilon=0$, because it allows our algorithmic framework IBOL to balance between the two aforementioned losses caused by the competitive ratio $f_{\Psi}(\{b_i\}_{i\in [n]}, \epsilon)$ and the $(1+\epsilon)$-relaxed regret.
When $\epsilon$ increases, IBOL focuses on maximizing a more stringent competitive ratio but minimizing a $(1+\epsilon)$-relaxed regret, which causes its Inventory Balancing part to place a greater penalty on almost-depleted resources (through an $\epsilon$-perturbed potential function), and its Online Learning part to explore less (through our LazyUCB oracle).

Although there is no notion of ``optimal $\epsilon$'' given a problem instance due to the unknown probabilities and adversarial contexts, our Theorem~\ref{thm:main} provides a plausible method for setting $\epsilon$, based on maximizing its worst-case guarantee on $\bE[\ALG]$.
Denote $\textsf{BD}(\epsilon)$ as the upper bound on the $(1+\epsilon)$-relaxed regret of the underlying MAB oracle. An appropriate value of $\epsilon$ can then be found by solving the following tuning optimization problem, formulated below based on equation~\eqref{eq:guarantee}:
\begin{equation}\label{eq:tuning}
\max_{\epsilon\in [0, 1]} \left\{ f_{\Psi}(\{b_i\}_{i\in [n]}, \epsilon) \cdot \left(\OPT- \textsf{BD}(\epsilon)\right)\right\}.
\end{equation}
Although we have stated the tuning optimization problem for a general MAB oracle, in the next section we show how the regret bound $\textsf{BD}(\epsilon)$ materializes for different values of $\epsilon$, and we define the optimization problem over $\epsilon$ for Application~1 at the end of Section~\ref{sec:lazyUCB_1}. We also remark that the formulation of (\ref{eq:tuning}) involves knowing the value of $\OPT$, and our upper bounds $\textsf{BD}(\epsilon)$ on regret will involve knowing the value of $T$. The assumptions of knowing $\OPT, T$ could be justified when the optimal total reward and the number of customers can be estimated based on historical instances.
This provides a method for optimizing $\epsilon$ against the worst case, using less information than the full knowledge of $\rho, \{x^t\}^T_{t=1}$.  Of course, if one had full knowledge of $\rho, \{x^t\}^T_{t=1}$, then they could tune $\epsilon$ using simulation instead of using our bound, but the full knowledge assumption is much stronger than only needing an estimate of $\OPT$ and $T$.

Furthermore, we empirically find that setting $\epsilon$ to be larger, usually 1, will improve performance, justifying the benefit of having the tunable parameter $\epsilon$ in our IBOL framework.
When $\epsilon$ is larger, the Online Learning part of IBOL ends up focusing more on exploitation than exploration.
Such an insight is in line with the findings in a recent stream of papers \citep{BastaniBK17,KannanMRWW18} which show that reducing the amount of exploration in conventional MAB algorithms (more specifically, UCB algorithms) leads to better empirical performance, even though these (almost) exploration-free variants do not have a better theoretical performance guarantee than the conventional algorithms.

\textbf{Re-designing UCB for Worst-case $(1+\epsilon)$-Relaxed Regret.}
In addition to the tradeoff between the two sources of error, the notion of $(1+\epsilon)$-relaxed regret inspires the design of MAB oracles that differ significantly from the classical approach of UCB. Let's revisit definition~(\ref{eq:ep_reg}), and multiply both sides by $1/(1+\epsilon)$:
\begin{equation}\label{eq:relax_reg}
\frac{1}{1+\epsilon}\cdot \text{Reg}_\epsilon = \bE \left[\sum^T_{t=1}\frac{\max_{\bar{a}_t\in \cA}R^t(\bar{a}^t)}{1+\epsilon} - R^t(a^t)\right].
\end{equation}
The benchmark $\frac{1}{1+\epsilon}\sum^T_{t=1}\max_{\bar{a}_t\in \cA}R^t(\bar{a}^t)$ only requires the decision maker to identify an action that is $1/(1+\epsilon)$-optimal, i.e. an action $a^t$ such that $R^t(a^t) \geq \frac{1}{1+\epsilon} \max_{\bar{a}^t\in\cA}R^t(\bar{a}^t) = \frac{1}{1+\epsilon} R^t(a^t_*)$, which is an easier task than solving $\max_{\bar{a}^t\in \cA} R^t(\bar{a}^t)$. In particular, the former task requires less exploration on $\rho$ than the latter, and suggests that the decision maker could potentially perform less exploration on $\rho$ for achieving near-optimality for the online resource allocation problem.

Altogether, the main message of Theorem \ref{thm:main} is as follows. While we can adapt existing tools such as UCB to construct an MAB oracle for solving the online resource allocation problem, the problem in fact admits a much wider class of MAB oracles for achieving near-optimality. In particular, an MAB oracle that achieves a low $(1 +\epsilon)$-relaxed regret for some $\epsilon > 0$, which potentially involves less exploration than UCB, also leads us to near-optimality for the online resource allocation problem.

\section{MAB Oracles for Applications 1, 2}\label{sec:MAB_oracle}

In the previous sections, we proposed the IBOL algorithm that hedges against adversarial contexts $\{x^t\}^T_{t=1}$ while learning the outcome distribution $\rho$. In addition, we provided Theorem \ref{thm:main}, which related the expected reward of IBOL to the $(1+\epsilon)$-relaxed regret of the underlying MAB oracle. In this section we complete the picture by constructing MAB oracles, which conduct simultaneous exploration-exploitation, to solve the auxiliary problem and to overcome the uncertainty on the outcome distribution $\rho$.
We specialize to the settings of  $\rho, \cX, \cA$ under Applications 1, 2 (as defined in Sections~\ref{sec:app1}--\ref{sec:app2}) for our construction of MAB oracles.

In Section \ref{sec:UCB_1}, we construct the UCB oracle for our Application 1 in the case where $\epsilon = 0$. This UCB oracle is based on the classical UCB approach \citep{ACF02}, for which we upper-bound the unrelaxed regret $\text{Reg}_0$ in the auxiliary problem. In Section \ref{sec:lazyUCB_1}, we construct our LazyUCB oracle, which performs less exploration than the UCB oracle, in the case where $\epsilon\in (0, 1]$. We then demonstrate an upper bound to the $(1+\epsilon)$-relaxed regret for LazyUCB, hence showing that it is possible to achieve near-optimality with less exploration than the classical UCB approach. 
In Section \ref{sec:lb}, we present our negative result establishing tightness in the context of Application 1.
In Section \ref{sec:UCB_lazyUCB_2}, we show how the machinery developed in Sections \ref{sec:UCB_1}, \ref{sec:lazyUCB_1} for Application 1 can be generalized to Application 2.


\subsection{UCB Oracles for Application 1 ($\epsilon= 0$)}\label{sec:UCB_1}
We start with a reminder on Application 1. The action set is $\cA = \{(i, k)\}_{i\in [n], k\in [K]}$. When the action $(i, k)$ is taken at time $t$, where the customer has feature $x^t$, the feedback $\vy\in \{0, 1\}^n$ is equal to $\mathbf{1}(x^t_i = 1)\ve_i$ with latent probability $q_{i, k}$, and equal to $\vzero$ with latent probability $1 - q_{i, k}$ (there is only any reason to take an action $(i, k)$ at time $t$ if context $x^t_i=1$). For the auxiliary problem, the discounted reward at time $t$ under action $a = (i, k)$ is
\begin{equation}\label{eq:aux_1}
R^t(a) = \sum_{i\in [n]} r^t_{i}p_{x^t, a, i} = r^t_i \cdot \mathbf{1}(x^t_i = 1) \cdot q_{(i,k)}.
\end{equation}
The UCB oracle at time $t$ is provided in Algorithm \ref{alg:UCB_app1}. This oracle is to be used on Application 1 when $\epsilon =0$. The oracle inputs the information ${\cal F}^{t-1}, x^t, (r^t_i)_{i\in[n]}$ that are known at the start of time $t$, and output the action $a^t = (i^t, k^t)$ for the time step. For estimating the latent probability $q_{(i, k)}$ for each $i\in [n], k\in [K]$, we consider
$$
M^t_{(i, k)} = \sum^{t-1}_{s=1} \mathbf{1}(a^s = (i, k) ),\qquad \bar{q}^t_{(i, k)} = \frac{\sum^{t-1}_{s=1} \mathbf{1}(a^s = (i, k), \vy^s_i = 1 ) }{\max\{M^{t}_{(i, k)} , 1\}}.
$$
The parameter $M^t_{(i, k)} $ counts the number of times the algorithm takes the action $(i, k)$ during time steps $1, \ldots, t-1$. The statistic $\bar{q}^t_{(i, k)}$ serves to estimate the latent parameter $q_{(i, k)}$.
For each and every $(i, k)$, the quantities $M^t_{(i, k)}, \bar{q}^t_{(i, k)}$ can be constructed based on the observations ${\cal F}^{t-1}$ during time $1, \ldots, t-1$.

\begin{algorithm}[t]
\caption{UCB oracle at time $t$ for Application 1, in the case where $\epsilon= 0$}\label{alg:UCB_app1}
\begin{algorithmic}[1]
\State Input: observation ${\cal F}^{t-1}$ from time 1 to $t-1$, context $x^t$ and discounted rewards $(r^t_i)_{i\in [n]}$.
\State Use ${\cal F}^{t-1}$ to compute the statistics $\{M^t_{(i, k)}\}_{i\in [n], k\in [K]}, \{\bar{q}^t_{(i, k)}\}_{i\in [n], k\in [K]}$.
\State \label{alg:UCB_app1_step2}  Set $\delta_t = \frac{1}{(1 + t)^2}$.
\State \label{alg:UCB_app1_step3} For each action $a = (i, k)$, compute a UCB for the discounted revenue with action $(i, k)$:
$$
 \mathsf{UCB}^t (a) = r^t_i \cdot \mathbf{1}(x^t_i = 1) \cdot \left[ \bar{q}^t_{(i, k)} +  \mathsf{rad}(\bar{q}^t_{(i, k)}, M^t_{(i, k)}, \delta^{(t)}) \right].
$$
\State \label{alg:UCB_app1_step4} Output an action $a^t = (i^t, k^t)$ which satisfies $$a^t \in \underset{a = (i, k)\in \cA}{\text{argmax }}\mathsf{UCB}^t (a) .$$
\end{algorithmic}
\end{algorithm}

While the empirical mean $\bar{q}^t_{(i, k)}$ is a natural estimate to $q_{(i, k)}$, the decision maker needs to quantify the accuracy of the estimate $\bar{q}^t_{(i, k)}$, in order to decide if it wishes to explore other actions' probabilities, or if it wishes to use the estimate $\bar{q}^t_{(i, k)}$ for exploitation. The accuracy of the estimate $\bar{q}^t_{(i, k)}$ is quantified by a confidence radius for the estimate.

In the forthcoming UCB and LazyUCB Oracle, the decision maker conducts simultaneous exploration and exploitation by replacing the latent probability $q_{(i, k)}$ with an \emph{optimistic estimate}, which is equal to the sum of the the empirical mean $\bar{q}^t_{(i, k)}$ (exploitation) and a confidence radius (exploration). The use of an optimistic estimate embodies the famous ``optimism in the face of uncertainty'' principle in the multi-armed bandit literature.
For the UCB oracle, we follow the approach by \citet{ACF02,KSU08} to define the confidence radius. 
For $p>0, M\in \mathbb{Z}_{\geq 0}, \delta\in (0, 1) $, let
\begin{equation}\label{eq:rad}
\mathsf{rad}(p, M; \delta) := \sqrt{\frac{2p\log(1/\delta)}{ \max\{M, 1\} }} + \frac{3\log(1/\delta)}{ \max\{M, 1\}}.
\end{equation}
In Line \ref{alg:UCB_app1_step3} in the UCB oracle, we replace the latent $q_{(i, k)}$ with the optimistic estimate $\bar{q}^t_{(i, k)}+ \mathsf{rad}(\bar{q}^t_{(i, k)}, M^t_{(i, k)}, \delta_t)$, where $\delta_t \in (0, 1)$ is a confidence parameter defined in Line \ref{alg:UCB_app1_step2}. The definition of $\mathsf{rad}$ is justified by the following Lemma.
\begin{lemma}[\cite{KSU08}]\label{lemma:bd_UCB_1}
For each $t\in [T]$, consider the event ${\cal E}^t = \cap_{i\in [n], k\in [K]}{\cal E}^t_{i, k}$, where ${\cal E}^t_{(i, k)}$ is defined as
$${\cal E}^t_{(i, k)} = \left\{\left|\bar{q}^t_{(i, k)}  - q_{(i, k)} \right| \leq \mathsf{rad}(\bar{q}^t_{(i, k)}, M^t_{(i, k)}, \delta_t ) \leq 3\cdot \mathsf{rad}(q_{(i, k)}, M^t_{(i, k)}, \delta_t)\right\}.$$
Then we have $\Pr({\cal E}^t) \geq 1 - \frac{4nK}{1+t}$.
\end{lemma}
Lemma is proved in Appendix \ref{sec:pf_lem_bd_UCB_1}. While the Lemma is first proposed in \citet{KSU08}, we still provide the proof to make the constants involved in $\mathsf{rad}$ explicit.
Lemma \ref{lemma:bd_UCB_1} is crucial for justifying the UCB in step \ref{alg:UCB_app1_step3}. Indeed, if the event ${\cal E}^t$ holds, then for any $a = (i, k)$,
\begin{align}
R^t(a) & = r^t_i \cdot \mathbf{1}(x^t_i = 1) \cdot q_{i,k} \nonumber\\
&\leq  r^t_i \cdot \mathbf{1}(x^t_i = 1) \cdot \left[\bar{q}^t_{(i, k)} +\mathsf{rad}(\bar{q}^t_{(i, k)}, M^t_{(i, k)}, \delta_t)\right] \nonumber\\
& = \mathsf{UCB}^t(a).\label{eq:app_1_UCB}
\end{align}
Therefore, the quantity $\mathsf{UCB}^t(a)$ defined in Line \ref{alg:UCB_app1_step3} is a bona fide upper bound of the discounted reward $R^t(a)$ with high probability.
En route, we show that even though the auxiliary problem involves non-stationary rewards, we can still harness existing machinery on UCB algorithms.

Finally, in Line \ref{alg:UCB_app1_step4} we choose an action $a^t$ that maximizes the UCB. Without loss of generality, we assume that $x^t_{i^t} = 1$.
To demonstrate the salience of the UCB oracle, we bound its regret $\text{Reg}_0 = \bE \left[\sum^T_{t=1}R^t(a^t_*) -  R^t(a^t)\right]$ in the theorem below.

\begin{theorem}\label{thm:UCB_app1}
Consider the UCB oracle (Algorithm \ref{alg:UCB_app1}) for Application 1. The oracle has regret
$$\text{Reg}_0 = O\left( \sqrt{nK \cdot \OPT \cdot \log(T)} + nK\log(T)\log\frac{T}{nK} \right) = \tilde{O}\left( \sqrt{nK\cdot \OPT }\right),$$
where $\OPT $ is the optimal value of the LP \textbf{Primal}, and the notation $\tilde{O}(\cdot)$ hides the logarithmic dependence on $n, K, T$.
\end{theorem}
Theorem \ref{thm:UCB_app1} is proved in Appendix \ref{app:pf_thm_UCB_app1}. On a high level, the Theorem is proved by incorporating the analytical tools on UCB algorithms from \citet{ACF02}, with extra care that yields to dependence on $\OPT$. Clearly, we know that $\OPT = O(T)$, and by replacing $\OPT$ with the upper bound $O(T)$, we have $\text{Reg}_0 = \tilde{O}(\sqrt{nKT})$, which coincides with the $\tilde{O}(\cdot)$ bound for an $nK$-armed bandits problem with $T$ time steps. In passing, we remark that in the large-volume regime \citep{BZ11} where $b_\text{min}$ grows linearly with $T$, we do have $\OPT = c T$ for some constant $c$ that depends on the model but is independent of $T$. The dependence on $\OPT$ provides a more refined guarantee than the dependence on $T$ when $b_\text{min} = o(T)$.

Combined with the IBOL algorithm (Algorithm \ref{alg:ibol}), we achieve the following performance guarantee for Application 1.
\begin{corollary}\label{cor:UCB_app1}
For Application 1, the IBOL algorithm (Algorithm \ref{alg:ibol}) with the UCB oracle (Algorithm \ref{alg:UCB_app1}) yields expected reward $\bE[\ALG]$ that satisfies
\begin{align}
 \bE[\ALG] &\geq  \frac{1-1/e}{(1+\bmin)(1-e^{-1/\bmin})} \cdot \left[\OPT -  \tilde{O}\left( \sqrt{nK\cdot \OPT }\right)\right]. \label{eq:cor_UCB_1}
\end{align}
\end{corollary}
As an illustration of Theorem \ref{thm:main}, the Corollary illustrates the two sources of error for the online resource allocation problem in Application 1. The competitive ratio $\frac{1-1/e}{(1+\bmin)(1-e^{-1/\bmin})}$ is due to the adversarial uncertainty on $x^1, \ldots, x^T$, and the regret bound $\tilde{O}\left( \sqrt{nK\cdot \OPT }\right)$ is due to the model uncertainty on the probabilities $q_{(i, k)}$.

\subsection{LazyUCB Oracles for Application 1 ($\epsilon \in (0, 1]$)}\label{sec:lazyUCB_1}
After the construction of the UCB oracle for Application 1, which is for the case where $\epsilon =0$, we construct the LazyUCB oracle, which is for the case where $\epsilon \in (0, 1]$. The LazyUCB oracle, which involves $\epsilon$, is exhibited in Algorithm \ref{alg:Lazy_UCB_app1}. 

\begin{algorithm}
\caption{LazyUCB Oracle at time $t$ for Application 1, in the case where $\epsilon\in (0, 1]$}\label{alg:Lazy_UCB_app1}
\begin{algorithmic}[1]
\State Input: exploitation parameter $\epsilon \in (0, 1]$,  observation ${\cal F}^{t-1}$ from time 1 to $t-1$, context $x^t$ and discounted rewards $(r^t_i)_{i\in [n]}$.
\State Use ${\cal F}^{t-1}$ to compute the statistics $\{M^t_{(i, k)}\}_{i\in [n], k\in [K]}, \{\bar{q}^t_{(i, k)}\}_{i\in [n], k\in [K]}$.
\State Set $\delta_t = \frac{1}{(1 + t)^2}$.
\State \label{alg:lazy_UCB_app1_step3} For each action $a = (i, k)$, compute a \emph{lazy} UCB for the discounted revenue with action $(i, k)$:
$$
 \mathsf{LazyUCB}^t (a) = r^t_i \cdot \mathbf{1}(x^t_i = 1) \cdot \left[ \bar{q}^t_{(i, k)} + \mathsf{LazyRad}^t(i, k)  \right].
$$
\State \label{alg:lazy_UCB_app1_step5} Output an action $a^t = (i^t, k^t)$ which satisfies $$a^t \in \underset{a = (i, k)\in \cA}{\text{argmax }}\mathsf{LazyUCB}^t (a) .$$
\end{algorithmic}
\end{algorithm}

Similar to the UCB oracle, the LazyUCB oracle also hinges on constructing an optimistic estimate for each latent probability $q_{(i, k)}$. Different from the UCB oracle, however, the LazyUCB oracle employs a smaller confidence radius. Hence, the LazyUCB oracle focuses more on exploitation, and less on exploration in comparison to the UCB oracle. The confidence radius employed by the LazyUCB oracle is shown in Line \ref{alg:lazy_UCB_app1_step3} in the algorithm.

To construct the confidence radii for the LazyUCB oracle, for $\epsilon \in [0, 1], M\in \mathbb{Z}_{\geq 0}, \delta\in (0, 1)$, we define
\begin{equation}\label{eq:lad}
\mathsf{lad}(\epsilon, M ; \delta) = \frac{2 + \epsilon}{\epsilon }\cdot \frac{\log (1/\delta)}{\max\{M, 1\}}.
\end{equation}
The optimistic estimate for $q_{(i, k)}$ at time $t$ under the LazyUCB oracle is
$$
\bar{q}^t_{(i, k)} + \mathsf{LazyRad}^t(i, k) ,
$$
where we define
$$
\mathsf{LazyRad}^t(i, k) = \min\left\{\mathsf{lad}(\epsilon, M^t_{(i, k)} ; \delta_t), \mathsf{rad}(\bar{q}^t_{(i, k)}, M^t_{(i, k)}, \delta^{(t)})\right\},
$$
with $\mathsf{rad}$ as defined in (\ref{eq:rad}). The other confidence radius $\mathsf{lad}(\epsilon, M^t_{(i, k)} ; \delta_t)$ can be can be interpreted as follows. The confidence radius $\mathsf{lad}$ involves an \emph{exploitation parameter} $\epsilon \in [0, 1]$, which controls the amount of exploitation conducted by the LazyUCB oracle. As $\epsilon$ increases,  $\mathsf{lad}(\epsilon, M ; \delta)$ decreases. In particular, when $\epsilon =1$, we have
$$
\mathsf{lad}(\epsilon, M ; \delta) < \mathsf{rad}(p, M; \delta)
$$
for any $p, M, \delta$. For any $
\epsilon > 0$, it is critical to observe that we still have $
\mathsf{lad}(\epsilon, M ; \delta) \leq \mathsf{rad}(p, M; \delta)
$ as long as $p > 0$ and $M$ is sufficiently large, since the dominant term in $\mathsf{rad}(p, M; \delta)$ is of order $\sqrt{p / M}$ while $\mathsf{lad}(\epsilon, M ; \delta) $ scales as $1 / (\epsilon M)$.

In the extreme case where we set $\epsilon =0$, we have $\mathsf{lad}(\epsilon, M^t_{(i, k)};\delta) = 1$, and the LazyUCB oracle is reduced to the UCB oracle. In another extreme case when we set $\epsilon$ to be 1, we have $\mathsf{lad}(\epsilon, M ; \delta) = \frac{3\log (1/\delta)}{\max\{M, 1\}}. $ It is worth noting that the lazy confidence radius $\mathsf{lad}(\epsilon, M ; \delta) $ does not shrink to zero, as we define $\mathsf{lad}(\epsilon, M ; \delta) $ in a way that still induces a minute amount of optimistic exploration when $\epsilon$ is large. Finally, similar to the UCB oracle, in Line \ref{alg:lazy_UCB_app1_step5} we chooses an action $a^t$ that maximizes the optimistic estimate.

We justify the definition of the lazy confidence radius $\mathsf{rad}$ in the following Lemma:
\begin{lemma}\label{lemma:bd_LazyUCB_1}
For each $t$, consider the event ${\cal E}^t = \left(\cap_{i\in [n], k\in [K]} {\cal U}^t_{(i, k)}\right) \cap \left(\cap_{i\in [n], k\in [K]} {\cal L}^t_{(i, k)}\right)$, where ${\cal U}^t_{(i, k)} $ is the event
\begin{equation}\label{eq:lemma_bd_Lazy_1_lower}
q_{(i, k)} \leq \left( 1 + \frac{\epsilon}{2}\right) \cdot \left[ \bar{q}^t_{(i, k)} + \mathsf{LazyRad}^t(i, k) \right],
\end{equation}
and ${\cal L}^t_{(i, k)} $ is the event
\begin{equation}\label{eq:lemma_bd_Lazy_1_upper}
\bar{q}^t_{(i, k)} \leq \left(1 + \frac{\epsilon}{2 + \epsilon}\right)\left[ q_{(i, k)} + \mathsf{LazyRad}^t(i, k) \right] .
\end{equation}
Then we have $\Pr({\cal E}^t) \geq 1 - \frac{7nK}{1+t}$.
\end{lemma}
Lemma~\ref{lemma:bd_LazyUCB_1} is proved in Appendix \ref{app:pf_lemma_bd_LazyUCB_1}.
Inequality (\ref{eq:lemma_bd_Lazy_1_lower}) shows that the lazy optimistic estimate $\bar{q}^t_{(i, k)} + \mathsf{LazyRad}^t(i, k)$ is ``optimistic'' in an approximate sense, captured by the multiplicative factor $1 + \epsilon /2$. Inequality (\ref{eq:lemma_bd_Lazy_1_upper}) shows that, despite the approximate nature, the lazy optimistic estimate is still close to the actual latent probability, as quantified in the inequality. It is useful to note that when we set $\epsilon = 0$, we recover Lemma \ref{lemma:bd_UCB_1} from the original UCB algorithm.

If the event ${\cal E}^t$ holds, then for any action $a = (i, k)$, the LazyUCB in Line \ref{alg:lazy_UCB_app1_step3} satisfies
\begin{align}
R^t(a) & = r^t_i \cdot \mathbf{1}(x^t_i = 1) \cdot q_{i,k} \nonumber\\
&\leq  r^t_i \cdot \mathbf{1}(x^t_i = 1) \cdot \left( 1 + \frac{\epsilon}{2}\right) \cdot \left[\bar{q}^t_{(i, k)} +\mathsf{LazyRad}^t(i,k)\right] \nonumber\\
& = \left( 1 + \frac{\epsilon}{2}\right)\cdot   \mathsf{LazyUCB}^t(a).\label{eq:app_1_LazyUCB}
\end{align}

We provide the following performance guarantee on the LazyUCB oracle for the auxiliary problem, with the metric of $(1+ \epsilon)$-relaxed regret $\text{Reg}_\epsilon$.
\begin{theorem}\label{thm:LazyUCB_app1}
Consider the LazyUCB oracle (Algorithm \ref{alg:Lazy_UCB_app1}) for Application 1. The oracle has $\frac{1}{1+\epsilon}\text{Reg}_\epsilon$ at most
\begin{align}
 &  \min\left\{ O\left( \sqrt{nK \cdot \OPT \cdot \log T}\right) , O\left( \left(1 + \frac{1}{\epsilon}\right) \cdot nK\log(T)  \log\left(\frac{T}{nK}\right) \right) \right\} + O\left( nK\log(T) \log\left(\frac{T}{nK}\right) \right)\nonumber\\
= &  \min\left\{\tilde{O}\left(\sqrt{nK \OPT}\right), \tilde{O}\left(\left(1 + \frac{1}{\epsilon}\right) \cdot nK \right) \right\}. \nonumber
\end{align}
\end{theorem}
Theorem~\ref{thm:LazyUCB_app1} is proved in Appendix \ref{app:pf_thm_LazyUCB_app1}. While the regret bound in Theorem~\ref{thm:LazyUCB_app1} is smaller than the regret bound for the UCB oracle in Theorem \ref{thm:UCB_app1}, it does not mean that the LazyUCB oracle earns a greater reward on the auxiliary problem than the UCB oracle. It is important to note that Theorems \ref{thm:UCB_app1}, \ref{thm:LazyUCB_app1} involve different notions of regret. These Theorems together suggest that the auxiliary problem can be solved by a variety of MAB oracles, such as UCB or LazyUCB, but they do not suggest that one oracle is better than the other.

In conjunction with Theorem \ref{thm:main}, we arrive at the following performance guarantee for the IBOL algorithm using the LazyUCB oracle.
\begin{corollary}\label{cor:LazyUCB_app1}
For Application 1, the IBOL algorithm (Algorithm \ref{alg:ibol}) with the LazyUCB oracle (Algorithm \ref{alg:Lazy_UCB_app1}) yields expected reward $\bE[\ALG]$ that satisfies
\begin{align}
 \bE[\ALG] &\geq f_\Psi(\{b_i\}_{i\in [n]}, \epsilon) \cdot \left[\OPT  - \min\left\{\tilde{O}\left( \sqrt{nK \OPT}\right), \tilde{O}\left(\left(1 + \frac{1}{\epsilon}\right)\cdot nK \right) \right\}   \right] \label{eq:cor_LazyUCB_1},
\end{align}
where $f_\Psi(\{b_i\}_{i\in [n]}, \epsilon)$ is defined in equation (\ref{eq:f_Psi}).
\end{corollary}
To this end, it is important to note that when we specify $\epsilon  = 0$ in Corollary \ref{cor:LazyUCB_app1}, we arrive at the bound in Corollary \ref{cor:UCB_app1}. Indeed, it is crucial to recall that when we set $\epsilon = 0$ in the LazyUCB oracle, we recover the UCB oracle.

We conclude our discussion with two remarks. First, it is useful to compare the performance guarantee under the LazyUCB oracle in Corollary \ref{cor:LazyUCB_app1} (where $\epsilon\in (0, 1]$) with that under the UCB oracle in Corollary \ref{cor:UCB_app1} (where $\epsilon = 0)$. On one hand, the competitive ratio $f_\Psi(\{b_i\}_{i\in [n]}, 0)$ in Corollary \ref{cor:UCB_app1} is greater than or equal to the competitive ratio $f_\Psi(\{b_i\}_{i\in [n]}, \epsilon)$ in Corollary \ref{cor:LazyUCB_app1}. On the other hand, the regret term in (\ref{eq:cor_LazyUCB_1}) is less than or equal to the regret term in (\ref{eq:cor_UCB_1}).

Second, going back to the question of tuning $\epsilon$, when $\OPT, T$ are known, an appropriate choice for $\epsilon$ can be made by solving the optimization problem \begin{equation}\label{eq:opt_app1}
\min_{\epsilon\in [0, 1]} \left\{f_\Psi(\{b_i\}_{i\in [n]}, \epsilon) \cdot \left[\OPT  - \min\left\{\tilde{O}\left( \sqrt{nK \OPT}\right), \tilde{O}\left(\left(1 + \frac{1}{\epsilon}\right)\cdot nK \right) \right\}   \right]\right\}.
\end{equation}
Although the optimization problem (\ref{eq:opt_app1}) is not convex in $\epsilon$ in general, an optimal $\epsilon$ can still be identified by a one-dimensional line search on $[0, 1]$.  While the exact expressions of $\tilde{O}\left( \sqrt{nK \OPT}\right), \tilde{O}\left(\left(1 + \frac{1}{\epsilon}\right)\cdot nK \right) $ are suppressed due to the use of $\tilde{O}(\cdot)$ notation, the optimization problem (\ref{eq:opt_app1}) can be explicitly defined by replacing $\tilde{O}\left( \sqrt{nK \OPT}\right), \tilde{O}\left(\left(1 + \frac{1}{\epsilon}\right)\cdot nK \right) $ respectively with their explicit expressions \eqref{eq:explicit_1}, \eqref{eq:pf_app_1_LazyUCB_8} in Appendix \ref{app:pf_thm_LazyUCB_app1}.


\subsection{Tightness of our Guarantee for Application 1}\label{sec:lb}
We conclude our discussion on Application 1 by providing the following negative result on the expected reward achieved by any feasible online algorithm.
\begin{theorem} \label{thm:negative}
Let $n,b,K$ be any positive integers satisfying $b\ge K\ge3$. For any online algorithm that is feasible to Application 1, there exists a problem instance under which
\begin{align*}
\bE[\ALG] & \leq \left(1 - \frac{1}{e}\right)\OPT - \Omega(\sqrt{K \OPT}).
\end{align*}
\end{theorem}
Theorem~\ref{thm:negative} is proved in Appendix \ref{app:thm_lowerBound}. The main message of the theorem is that any feasible online algorithm must suffer a loss in reward from both the adversarial uncertainty on $\{x^t\}^T_{t=1}$ and the model uncertainty on $\{q_{(i, k)}\}_{i\in [n], k\in [K]}$. Our paper is the first to study online problems with both sources of uncertainty in a resource constrained setting.  The proof involves crafting a special class of problem instances.

To elaborate, the adversarial uncertainty construction requires having an upper-triangular graph whose ordering of offline vertices is hidden to the online algorithm, in which case it is impossible to do better than arbitrarily ``guessing'' an offline vertex to probe at each stage.
In our combined construction, each offline vertex actually corresponds to $2b$ arms, one of which is a ``secret'' arm which successfully matches with probability $1/2+\varepsilon$ (instead of $1/2-\varepsilon$) upon a probe.
The online algorithm also suffers from not being able to learn the secret arm for each offline vertex, and hence loses an additional $\varepsilon$ in the matches made at each stage.
However, this means that more offline vertices remain unmatched, making the online algorithm less likely to get stuck in the future.
To see that this $\varepsilon$-loss is not later recouped by the online algorithm (in terms of first-order regret) requires an intricate analysis, combining the information-theoretic framework in \citet{auer2002nonstochastic} with the Yao's-minimax proof in \citet{MSVV07}.
To the best of our understanding, such an analysis is new to our paper, and our negative result is not possible to confirm without this detailed analysis.

\subsection{UCB and LazyUCB Oracles for Application 2}
\label{sec:UCB_lazyUCB_2}
Analogous versions of the UCB and LazyUCB oracles for Application 1 can be constructed for Application 2. We start with a reminder on the mathematical model of Application 2. The action set $\cA$ is $[n]$. When the context (customer segment) is $x$ and the action is $i$, the outcome $\vy$ is equal to $\ve_i$ with probability $p_{x, i}$ and is equal to $\vzero$ with probability $1 - p_{x, i}$. 
The probability terms in $\{p_{x, i}\}_{x\in \cX, i\in [n]}$ are not known but are to be learned. We consider the statistics
$$
L^t_{x, i} = \sum^{t-1}_{s=1} \mathbf{1}(x^s = x,  a^s = i ),\qquad \bar{p}^t_{x, i} = \frac{\sum^{t-1}_{s=1} \mathbf{1}(x^s = x, a^s = i, \vy^s_i = 1 ) }{\max\{L^{t}_{x, i} , 1\}}.
$$
The statistics $L^t_{x, i}, \bar{p}^t_{x, i}$ can be constructed from the observations ${\cal F}^{t-1}$ during time $1, \ldots, t-1$. The UCB and LazyUCB oracles for Application 2 are provided in Algorithms \ref{alg:UCB_app2}, \ref{alg:Lazy_UCB_app2} respectively.

\begin{algorithm}
\caption{UCB oracle at time $t$ for Application 2}\label{alg:UCB_app2}
\begin{algorithmic}[1]
\State Input: observation ${\cal F}^{t-1}$ from time 1 to $t-1$, context $x^t$ and discounted rewards $(r^t_i)_{i\in [n]}$.
\State Use ${\cal F}^{t-1}$ to compute the statistics $\{L^t_{x, i}\}_{x\in {\cal X}, i\in [n]}, \{\bar{p}^t_{x, i}\}_{x\in {\cal X}, i\in [n]}$.
\State Set $\delta_t = \frac{1}{(1 + t)^2}$.
\State For each action $i\in [n]$, compute a UCB for associated the discounted revenue:
$$
 \mathsf{UCB}^t (x^t , i) = r^t_i  \cdot \left[ \bar{p}^t_{x^t , i} +  \mathsf{rad}(\bar{p}^t_{x^t , i}, L^t_{x^t , i}, \delta_{(t)}) \right].
$$
\State Output an action $i^t$ which satisfies $$i^t \in \underset{i\in [n]}{\text{argmax }}\mathsf{UCB}^t (x^t , i) .$$
\end{algorithmic}
\end{algorithm}

\begin{algorithm}
\caption{LazyUCB Oracle at time $t$ for Application 2}\label{alg:Lazy_UCB_app2}
\begin{algorithmic}[1]
\State Input: exploitation parameter $\epsilon \geq 0$,  observation ${\cal F}^{t-1}$ from time 1 to $t-1$, context $x^t$ and discounted rewards $(r^t_i)_{i\in [n]}$.
\State Use ${\cal F}^{t-1}$ to compute the statistics $\{L^t_{x, i}\}_{x\in {\cal X}, i\in [n]}, \{\bar{p}^t_{x, i}\}_{x\in {\cal X}, i\in [n]}$.
\State Set $\delta_t = \frac{1}{(1 + t)^2}$.
\State For each action $i\in [n]$,  compute a \emph{lazy} UCB for the discounted revenue with action $(i, k)$:
$$
 \mathsf{LazyUCB}^t (x^t , i) = r^t_i \cdot \left[ \bar{p}^t_{x^t, i} + \mathsf{LazyRad}^t(x^t , i) \right],
$$
where
$$
\mathsf{LazyRad}^t(x^t, i) = \min\left\{\mathsf{lad}(\epsilon, L^t_{x^t , i} ; \delta_t), \mathsf{rad}(\bar{p}^t_{x^t, i}, L^t_{x^t, i}, \delta_{t})\right\}.
$$
\State Output an action $i^t$ which satisfies $$i^t \in \underset{i\in [n]}{\text{argmax }}\mathsf{LazyUCB}^t (x^t , i) .$$
\end{algorithmic}
\end{algorithm}
Note that Algorithms \ref{alg:UCB_app2}, \ref{alg:Lazy_UCB_app2} are analogous to Algorithms \ref{alg:UCB_app1}, \ref{alg:Lazy_UCB_app1} respectively. Their performance guarantees are also analogous. Let $K = |{\cal X}|$ denote the total number of customer segments.
\begin{corollary}\label{cor:UCB_app2}
For Application 2, the IBOL algorithm (Algorithm \ref{alg:ibol}) with the UCB oracle (Algorithm \ref{alg:UCB_app2}) yields expected reward $\bE[\ALG]$ that satisfies
\begin{align*}
 \bE[\ALG] &\geq   \frac{1-1/e}{(1+\bmin)(1-e^{-1/\bmin})} \cdot \left[\OPT -  \tilde{O}\left( \sqrt{nK\cdot \OPT }\right)\right].
\end{align*}
\end{corollary}
\begin{corollary}\label{cor:LazyUCB_app2}
For Application 2, the IBOL algorithm (Algorithm \ref{alg:ibol}) with the LazyUCB oracle (Algorithm \ref{alg:Lazy_UCB_app2}) yields expected reward $\bE[\ALG]$ that satisfies
\begin{align*}
 \bE[\ALG] &\geq f_\Psi(\{b_i\}_{i\in [n]}, \epsilon) \cdot \left[\OPT  - \min\left\{\tilde{O}\left( \sqrt{nK \OPT}\right), \tilde{O}\left(\left(1 + \frac{1}{\epsilon}\right)\cdot nK\right) \right\} \right]  ,
\end{align*}
where $f_\Psi(\{b_i\}_{i\in [n]}, \epsilon)$ is defined in equation (\ref{eq:f_Psi}).
\end{corollary}
The proofs for these corollaries hinges on proving bounds on $\text{Reg}_0, \text{Reg}_{\epsilon}$ for the UCB, LazyUCB oracles respectively, and these proofs can be reproduced by replacing $\{q_{(i, k)}\}_{i\in [n], k\in [K]}$ in Appendices \ref{app:pf_thm_UCB_app1}, \ref{app:pf_thm_LazyUCB_app1} with $\{p_{x,i}\}_{x\in {\cal X}, i\in [n]}$.  The comparison between the UCB oracle and the LazyUCB oracle, as well as the tuning of $\epsilon$, are similar to that in Application 1, so we do not repeat the discussion here.

\section{Numerical Studies}
\label{sec:numerical}

In this section, we conduct numerical experiments to demonstrate the performance of the proposed algorithms. First, in Section \ref{sec:numerical2}, we use synthetic data to test the three-way trade-off between hedging, exploration, and exploitation, using our LazyUCB oracle for Application 1. Then in Section \ref{sec:numerical1}, we simulate a dynamic assortment optimization problem using a real-world dataset.

\subsection{Experiments on Synthetic Data}\label{sec:numerical2}

We conduct experiments for the online matching with unknown matching probabilities model described in Section \ref{sec:app1}.
We test the role of $\epsilon$ in our algorithmic framework IBOL by using our LazyUCB oracle with $\epsilon$ ranging from 0 to 1.
Recall that $\epsilon=0$ corresponds to the classical UCB oracle, while $\epsilon=0.1,\ldots,1$ corresponds to a LazyUCB oracle that does progressively less exploration.

In all test cases, we set the number of unknown arms per resource to be $K=5$,
independently draw their unknown probabilities $q_{(i,k)}$ from $[0.2,0.5]$ uniformly at random, and
independently draw the resource adjacencies $x^t_i=1$ from $\{0,1\}$ uniformly at random.
We set the reward values $r_i$ to be identical for all resources $i$.
We consider different scales of the problem, with the number of resources $n$ lying in $\{5,50\}$, the number of times steps $T$ lying in $\{10^5,10^6,10^7\}$, and the capacity $b_i$ of each resource $i$ being identical to some $B$ which varies depending on the combination of $n$ and $T$.

We report the simulation results in Figures \ref{fig:lazy_new3} to \ref{fig:lazy_new2}.
For each test case, the expected total reward $\bE[\ALG]$ is an average value based on $500$ simulation replications.

\begin{figure}[h]
\begin{center}
\includegraphics[scale=0.4,trim={1cm 3cm 1cm 3cm},clip]{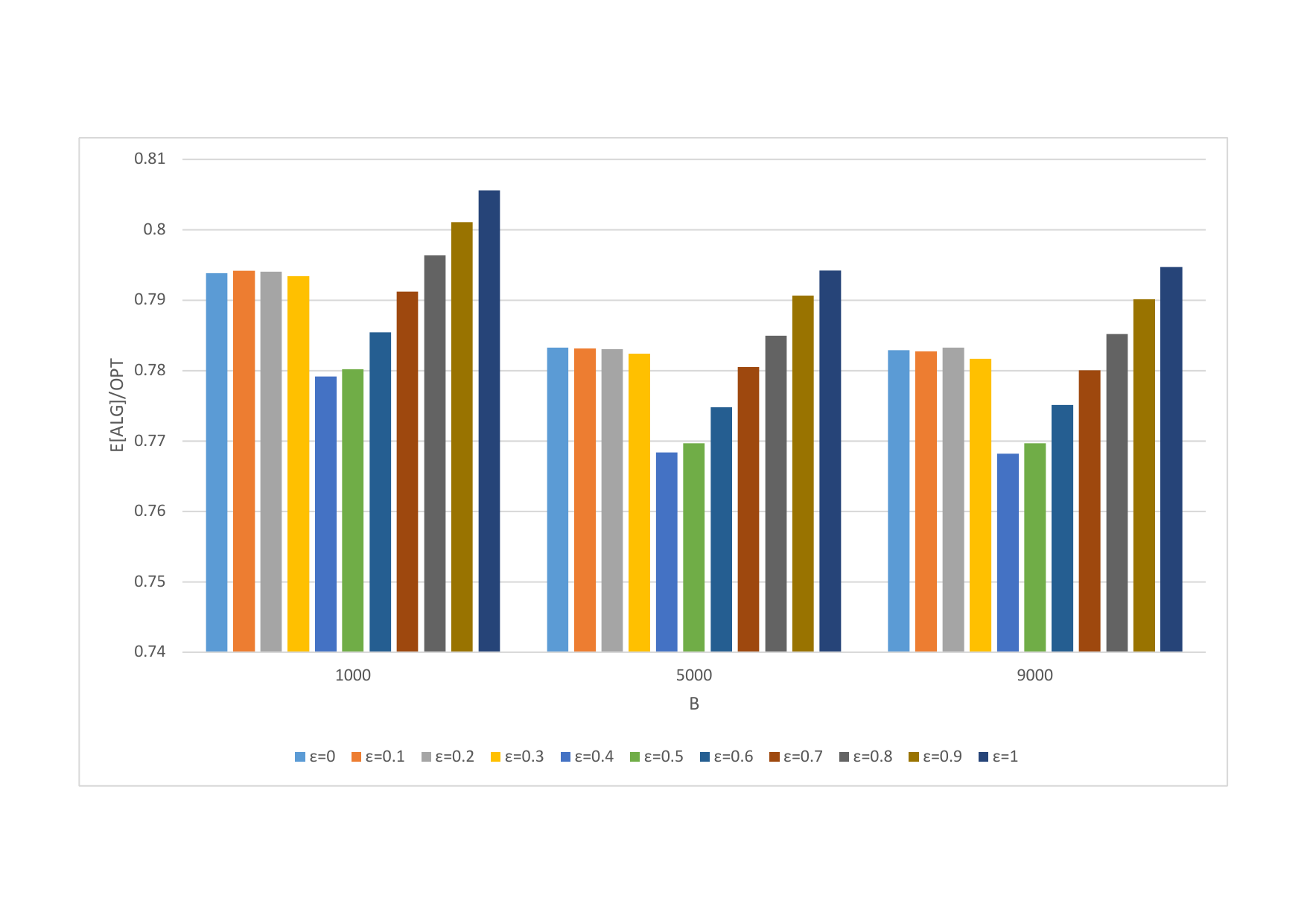}
\caption{Performance ratios of algorithms using UCB and LazyUCB oracles. $n=5$. $b_i = B$ for all resources $i \in [n]$. $T = 10000$.}
\label{fig:lazy_new3}
\end{center}
\end{figure}

\begin{figure}[h]
\begin{center}
\includegraphics[scale=0.4,trim={1cm 3cm 1cm 3cm},clip]{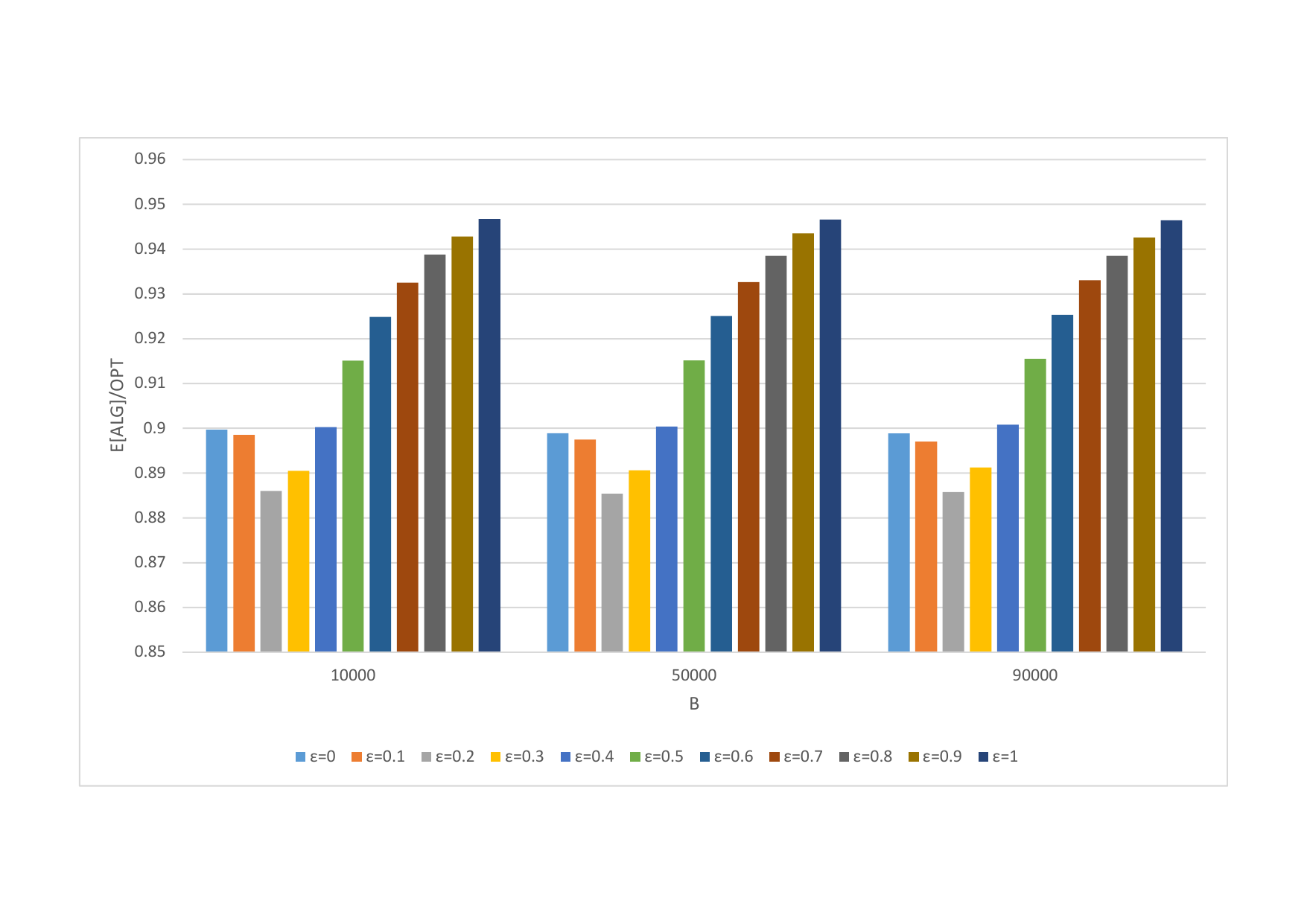}
\caption{Performance ratios of algorithms using UCB and LazyUCB oracles. $n=5$. $b_i = B$ for all resources $i \in [n]$. $T = 100000$.}
\label{fig:lazy_new1}
\end{center}
\end{figure}

\begin{figure}[h]
\begin{center}
\includegraphics[scale=0.4,trim={1cm 3cm 1cm 3cm},clip]{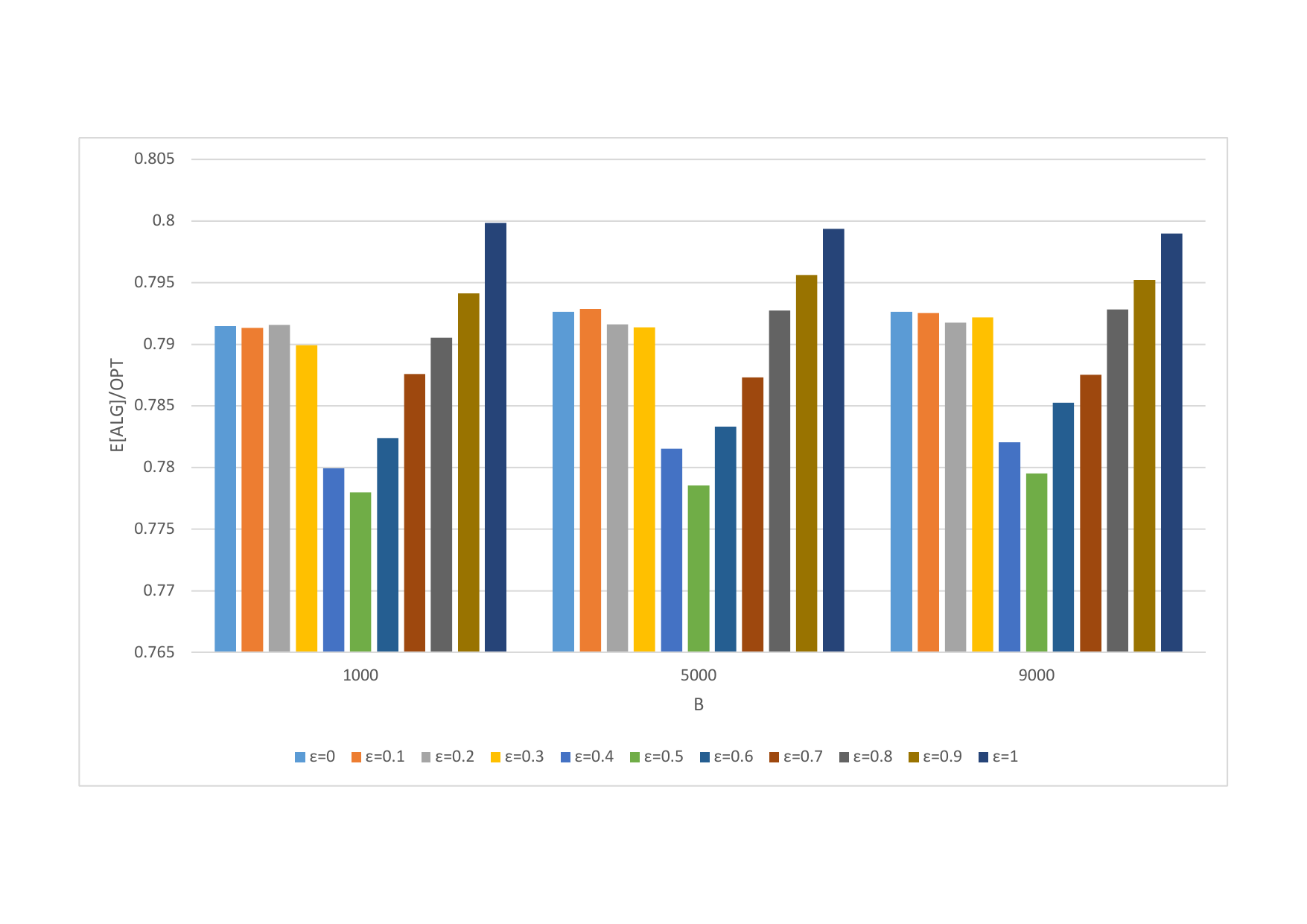}
\caption{Performance ratios of algorithms using UCB and LazyUCB oracles. $n=50$. $b_i = B$ for all resources $i \in [n]$. $T = 100000$.}
\label{fig:lazy_new4}
\end{center}
\end{figure}

\begin{figure}[h]
\begin{center}
\includegraphics[scale=0.4,trim={1cm 3cm 1cm 3cm},clip]{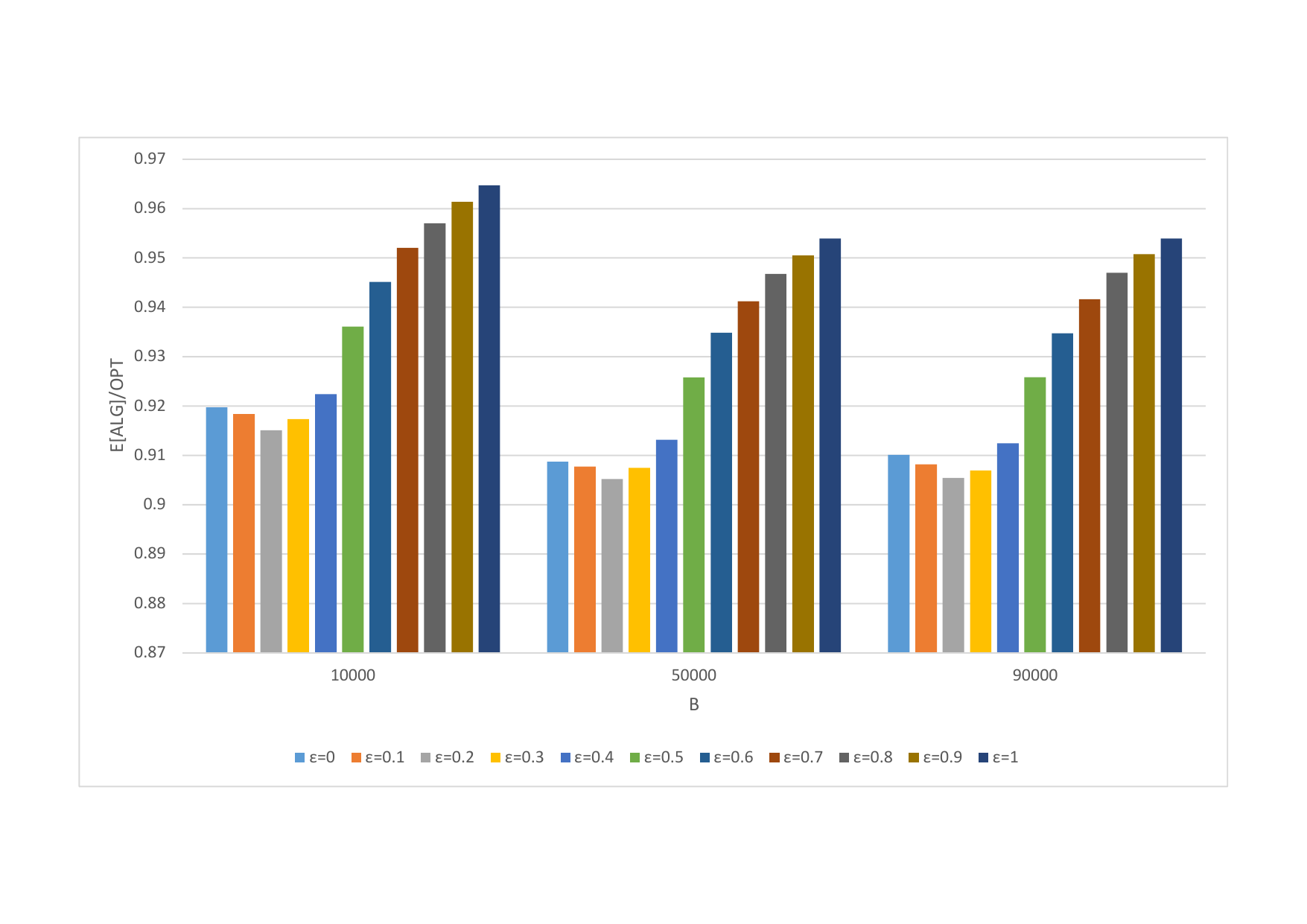}
\caption{Performance ratios of algorithms using UCB and LazyUCB oracles. $n=50$. $b_i = B$ for all resources $i \in [n]$. $T = 1000000$.}
\label{fig:lazy_new2}
\end{center}
\end{figure}

\textbf{Discussion of results.}
The empirical performance is consistently best when $\epsilon$ takes its maximum value of 1, i.e.\ when LazyUCB does the least exploration.
This is consistent with recent findings \citep{BastaniBK17,KannanMRWW18} that reducing forced exploration in contextual bandit settings will generally improve practical performance, despite not having a better worst-case guarantee.
Interestingly, in our setting there is a drop in performance for $\epsilon$ between 0 and 1, because it is better to either fully explore (which is optimal if the arrivals $x^t$ were to continue on indefinitely) or minimize exploration (which is optimal if the arrivals $x^t$ were to suddenly end).

In our graphs, all performances are worse for higher $n$ because there is more uncertainty in the resource adjacencies, and there are more unknown probabilities to learn.
On the other hand, all performances are better for higher $T$ because there is more time to learn the unknown probabilities.
The dependence on the capacities $B$ varies based on its relation to $n$ and $T$, but generally the performances are worse for higher $B$.
This is because when $B$ is small, it is less punishing to waste attempts on low-probability arms, since the capacity is the bottleneck and most of it will end up being exhausted anyway.

\subsection{Experiments on Real-World Data} \label{sec:numerical1}

We conduct numerical experiments using dataset Hotel 1 of \cite{BFG09}. Our numerical setting is a dynamic assortment planning problem, similar to that in \cite{MSL17}, but we consider their extension in which customer purchase probabilities are not observable.

We focus on a dynamic assortment planning problem where each room could be sold at multiple different prices.  Our results can be extended to this setting (see Appendix \ref{sec:multipleRates}). We consider a hotel with $n=4$ room categories: King rooms, Queen rooms, Suites, and Two-double rooms. Each room category is a resource, indexed by $i=1,2,3,4$. The inventory level of each of these resources is the number of available rooms in the corresponding category.

Rooms of each category $i$ can be offered at two prices $\cP_i = \{P_{i,1}, P_{i,2}\}$, for $i=1,2,3,4$. Each of the $m=8$ combinations, indexed by $j=1,2,...,8$, of room category and price is a product. Table \ref{tab:pricesNumerical} summarizes the prices of all the $m=8$ products from the data set. In the experiments, we double the higher price $P_{i,2}$ of each room category $i$ in order to differentiate the performance of different algorithms.

\begin{table}
\caption{Prices of 8 products from the dataset.}
\label{tab:pricesNumerical}
\centering
\begin{tabular}{|c|c|c|}
\hline
Category & $P_{i,1}$ & $P_{i,2}$\\
\hline
\hline
King & 307 & 361\\
\hline
Queen & 304& 361\\
\hline
Suites & 384& 496\\
\hline
Two-double & 306& 342\\
\hline
\end{tabular}
\end{table}

Each customer has a feature (context) vector $x \in \cX \subseteq \bR^9$. $x_1 = 1$ is a constant feature. Features $x_2,...,x_9$ represent the customer's personal information, such as the party size and the VIP level. (See Appendix \ref{app:supp_details_num} for a more detailed discussion on feature selection.) Each product $j \in \{1,2,...,8\}$ has a latent vector $\beta_j^* \in \bR^9$.  We assume that customers follow the MNL choice model. For each customer $x \in \cX$, the personalized attraction value of product  $j$ is $e^{x^\top \beta_j^*}$. The action set $\cA$ consists of all the possible assortments formed by the 8 products. When assortment $a \subseteq \{1,2,...,8\}$ is offered to customer $x \in \cX$, the customer will purchase product $j \in a$ with probability
\[ \frac{e^{x^\top \beta_j^*}}{ v_0 + \sum_{j' \in a} e^{x^\top \beta_{j'}^*}}, \]
where $v_0$ is the attraction value for the no-purchase option.  We vary $v_0$ in the experiments.

We consider a Bayesian environment. The prior distribution for each $\beta_j^*$, $j \in \{1,2,...,8\}$, is generated as follows.  First, calculate the maximum likelihood estimator $\bar \beta_j$ for $\beta_j^*$ from all the transactions in the dataset. Then, we assume that each element $\beta_{j,k}^*$, for $k=1,2,...,9$, of $\beta_j^*$ is an independent uniform random variable over $[\bar \beta_{j,k} -\epsilon, \bar \beta_{j,k} + \epsilon]$. We vary the uncertainty level $\epsilon$ in the tests. $\epsilon = 0$ corresponds to the model of \cite{MSL17}, in which the algorithms know the true values of $\beta_j^*$.

   This numerical setting essentially follows \cite{CSC17} except that we impose inventory constraints here. The Thompson sampling algorithm in \cite{CSC17} solves the auxiliary problem of this setting.
\begin{proposition}[\cite{CSC17}]
Suppose that $\beta = (\beta_1,\beta_2,...,\beta_8)$ is drawn from a known prior distribution $\pi_0$. For the auxiliary problem, there is a Thompson sampling algorithm with Bayesian regret
\[ \bE_{\beta\sim \pi_0}[\REG(\cF_T)] = \bE_{\beta\sim \pi_0}[ \bE[\REG(\cF_T)| \beta] ] =  \tilde O(Dm\sqrt{BT}).\]
\end{proposition}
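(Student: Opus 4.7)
The plan is to invoke the Bayesian Thompson sampling regret bound of \citet{CSC17} for the contextual MNL bandit, after verifying that it extends to the auxiliary problem defined in Section~\ref{sec:auxiliaryProblem}. The auxiliary problem coincides with CSC17's setting except that (i) there are no hard inventory constraints, which only simplifies matters, and (ii) each combination $(i,P)$ carries a per-period reward multiplier $r^t_{(i,P)}$ revealed at the start of period $t$. Because $r^t_{(i,P)}$ is $\cF_t$-measurable, it enters the algorithm only through the decision step and never through the likelihood used for posterior updates.

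The Thompson sampling algorithm itself is standard. Maintain the posterior $\pi_t$ over $\beta=(\beta_1,\ldots,\beta_m)$ given $\cF_t$; at the start of period $t$, sample $\hat\beta^t \sim \pi_t$; play the assortment $a^t \in \cA$ that maximizes expected reward under $\hat\beta^t$, $x^t$, and the current multipliers $(r^t_{(i,P)})$; observe $\vy^t$; and update the posterior via the MNL likelihood $\Pr[\vy^t \mid \beta, x^t, a^t]$, which does not depend on $(r^t_{(i,P)})$. The inner optimization is exactly the dynamic assortment subproblem that CSC17 solve, with the role of their fixed per-product prices played here by the virtual rewards.

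For the analysis I would follow CSC17's information-theoretic argument essentially verbatim. The posterior-matching identity $a^t \stackrel{d}{=} a^t_* \mid \cF_t$ that underlies the Bayesian regret decomposition continues to hold because $(r^t_{(i,P)})$ is $\cF_t$-measurable, so the sampled-versus-true parameter decomposition of per-round regret is undisturbed by the adaptive multipliers. The cumulative information the algorithm can acquire about $\beta$ over $T$ rounds remains $\tilde O(D)$, since this is a property solely of the MNL likelihood and the feature dimension. Combining these through CSC17's information-ratio decomposition and Cauchy--Schwarz yields $\bE_{\beta \sim \pi_0}[\REG(\cF_T)] = \tilde O(Dm\sqrt{BT})$, with $D$, $m$, $B$ inherited as the problem parameters of CSC17.

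The main obstacle is not conceptual but bookkeeping: tracking the reward multipliers through each step of CSC17's chain of inequalities to confirm that the payoff scaling appears only as the stated $\sqrt{B}$ factor rather than a weaker $B$. The key cancellation is that, conditional on $\cF_t$, both $a^t_*$ and $a^t$ are evaluated against the same vector $(r^t_{(i,P)})$, so the multipliers scale the per-round regret and the information ratio compatibly, and the final $\sqrt{B}$ dependence emerges after applying Cauchy--Schwarz across the $T$ rounds together with the $\tilde O(D)$ information-gain budget on $\beta$.
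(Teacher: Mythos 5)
Your proposal takes essentially the same route as the paper, which offers no independent proof and simply imports the Bayesian regret bound from \citet{CSC17}; your added verification that the adaptive, $\cF_t$-measurable reward multipliers enter only the decision step (not the likelihood or the posterior-matching identity) is exactly the justification the paper gestures at in Section~\ref{sec:alg} when it treats $r_i^t$ as adaptively chosen contextual information. The sketch is sound; the only loose end is the bookkeeping you already flag (and, in the multi-price setting, that the virtual rewards can be negative though still bounded), neither of which changes the stated $\tilde O(Dm\sqrt{BT})$ rate.
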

In our numerical model, $D=9$ is the length of feature vectors, $m=8$ is the number of products, $B=8$ is the maximum size of any assortment, and $T =231$ is the number of customers.

Applying this Thompson sampling algorithm to our framework, and letting $\bmin \to \infty$,  we can obtain the following performance guarantee by Theorem \ref{thm:multiPrice}
\[ \bE_{\beta\sim \pi_0}[\OPT] \le \frac{1}{1-\exp(-\min_{i\in [n]}\alpha^{(1)}_i)} \cdot \bE_{\beta\sim \pi_0}[\ALG]+ \tilde O(Dm\sqrt{BT}).\]
Based on the prices in Table \ref{tab:pricesNumerical}, we can easily calculate $1-\exp(-\min_{i\in [n]}\alpha^{(1)}_i) \approx 0.58$. For details of the calculation, we refer to \citet{MSL17}.


For each test case, we simulate 500 replicates and report the average performance of each algorithm. For each replicate, we uniformly draw a sample path of customer arrivals, i.e., a sequence of feature vectors, from $31$ different instances constructed in \cite{MSL17}. Each sample path contains about 200 customers. For each replicate, we also randomly draw the latent vectors $\beta_j^*$ for all products $j \in \{1,2,...,8\}$ from their prior distributions.

We compare the following algorithms
\begin{itemize}
\item IB-TS: the inventory-balancing algorithm generated by our framework using the Thompson sampling algorithm in \cite{CSC17} as the oracles.
\item Gdy-TS: same as  IB-TS but the framework uses the original reward values, instead of the virtual rewards, as the input for the oracles.
\item Conserv-TS: same as  IB-TS but the algorithm assumes that there are only 4 higher-price products, i.e., products with prices $P_{\cdot, 2}$.
\end{itemize}

Tables \ref{tab:numerical1} to \ref{tab:numerical5} report the performance of these algorithms under different test parameters. In particular, the first column of each table is a parameter that scales the initial inventory levels of all the four resources.  In general, Gdy-TS performs better when inventory is more abundant. This is because the greedy algorithm is the optimal algorithm when there is no need to reserve resources. On the other hand, Conserv-TS has better performance when inventory is more scarce. This is because there is no need to sell resources at lower prices when we can sell all of them. Overall, our IB-TS algorithm performs much better when total inventory is close to total demand.

\section{Conclusion}
We study a general class of resource allocation problems, which involve both uncertainty on the contextual information of each customer, as well as on the functional relationship between a customer's contextual information to their behavior. We propose the Inventory Balancing with Online Learning (IBOL) algorithm that handles both sources of uncertainty simultaneously. In addition, we harness existing tools from the online learning literature to construct the Upper Confidence Bound (UCB) oracle, and we also design a new LazyUCB oracle that conducts substantially less exploration and more exploitation than the LazyUCB oracle. The performance guarantees of our algorithms are shown to be near optimal, and they are corroborated by numerical experiments on both synthetic and actual datasets.

To finish off, we would like to discuss the benefit of describing our resource allocation problem using generic ``actions'', especially in the context of the Inventory Balancing literature.  Previously, the most general description of an Inventory Balancing algorithm under adversarial arrivals was that of offering an \textit{assortment} of multiple resources, introduced by \citet{GNR14}.  However, our treatment allows for even more general actions, such as offering a \textit{sequence} of resources to each online customer, as in the online matching with timeouts problem \citep{bansal2012lp}.  Our Theorems~\ref{thm:main} and~\ref{thm:multiPrice} directly imply that an online algorithm can be $1/2$-competitive in general, and $(1-1/e)$-competitive as resource capacities approach $\infty$, in the online vertex-weighted matching with timeouts problem of \citet{bansal2012lp}, in which matching probabilities are known exactly (i.e. the regret from learning is 0).  This further demonstrates the benefit of our unified and generic framework for online resource allocation.

\begin{table}
\centering
\caption{Performance of algorithms relative to $\OPT$. $v_0 = 5$, $\epsilon = 1$. }\label{tab:numerical1}
\begin{tabular}{|c|c|c|c|}
\hline
Inventory scale &  IB-TS & Gdy-TS & Conserv-TS\\
\hline
$	0.1	$&$	93.6\%	$&$	90.3\%	$&$	99.3\%	$\\
$	0.15	$&$	95.5\%	$&$	90.7\%	$&$	98.2\%	$\\
$	0.2	$&$	95.7\%	$&$	90.8\%	$&$	98.0\%	$\\
$	0.25	$&$	96.1\%	$&$	91.4\%	$&$	97.0\%	$\\
$	0.3	$&$	95.8\%	$&$	92.1\%	$&$	96.1\%	$\\
$	0.35	$&$	95.1\%	$&$	92.6\%	$&$	96.0\%	$\\
$	0.4	$&$	94.2\%	$&$	92.5\%	$&$	95.1\%	$\\
$	0.45	$&$	93.5\%	$&$	92.8\%	$&$	94.5\%	$\\
$	0.5	$&$	93.2\%	$&$	93.2\%	$&$	94.2\%	$\\
$	0.55	$&$	91.5\%	$&$	92.9\%	$&$	92.9\%	$\\
$	0.6	$&$	91.0\%	$&$	92.9\%	$&$	93.2\%	$\\
\hline
\end{tabular}
\end{table}

\begin{table}
\centering
\caption{Performance of algorithms relative to $\OPT$. $v_0 = 40$, $\epsilon = 1$. }\label{tab:numerical2}
\begin{tabular}{|c|c|c|c|}
\hline
Inventory scale & IB-TS & Gdy-TS & Conserv-TS\\
\hline
$	0.1	$&$	87.7\%	$&$	84.1\%	$&$	92.8\%	$\\
$	0.15	$&$	90.4\%	$&$	85.5\%	$&$	91.4\%	$\\
$	0.2	$&$	91.8\%	$&$	87.2\%	$&$	89.3\%	$\\
$	0.25	$&$	91.5\%	$&$	87.5\%	$&$	88.3\%	$\\
$	0.3	$&$	92.0\%	$&$	88.4\%	$&$	87.8\%	$\\
$	0.35	$&$	91.6\%	$&$	89.2\%	$&$	86.5\%	$\\
$	0.4	$&$	91.3\%	$&$	89.0\%	$&$	86.4\%	$\\
$	0.45	$&$	91.4\%	$&$	89.8\%	$&$	86.1\%	$\\
$	0.5	$&$	92.8\%	$&$	90.8\%	$&$	86.5\%	$\\
$	0.55	$&$	91.8\%	$&$	90.1\%	$&$	86.1\%	$\\
$	0.6	$&$	92.2\%	$&$	90.7\%	$&$	86.7\%	$\\
\hline
\end{tabular}
\end{table}

\begin{table}
\centering
\caption{Performance of algorithms relative to $\OPT$. $v_0 = 100$, $\epsilon = 1$. }\label{tab:numerical3}
\begin{tabular}{|c|c|c|c|}
\hline
Inventory scale & IB-TS & Gdy-TS & Conserv-TS\\
\hline
$	0.1	$&$	87.1\%	$&$	86.2\%	$&$	87.7\%	$\\
$	0.15	$&$	89.9\%	$&$	87.6\%	$&$	86.4\%	$\\
$	0.2	$&$	90.5\%	$&$	88.0\%	$&$	86.2\%	$\\
$	0.25	$&$	91.9\%	$&$	90.2\%	$&$	85.6\%	$\\
$	0.3	$&$	91.7\%	$&$	90.2\%	$&$	84.3\%	$\\
$	0.35	$&$	91.5\%	$&$	91.1\%	$&$	84.3\%	$\\
$	0.4	$&$	92.1\%	$&$	90.8\%	$&$	83.5\%	$\\
$	0.45	$&$	92.4\%	$&$	91.5\%	$&$	84.7\%	$\\
$	0.5	$&$	93.3\%	$&$	91.4\%	$&$	85.1\%	$\\
$	0.55	$&$	93.2\%	$&$	92.2\%	$&$	84.7\%	$\\
$	0.6	$&$	92.4\%	$&$	92.6\%	$&$	84.2\%	$\\
\hline
\end{tabular}
\end{table}

\begin{table}
\centering
\caption{Performance of algorithms relative to $\OPT$. $v_0 = 40$, $\epsilon = 0.01$. }\label{tab:numerical4}
\begin{tabular}{|c|c|c|c|}
\hline
Inventory scale & IB-TS & Gdy-TS & Conserv-TS\\
\hline
$	0.1	$&$	93.3\%	$&$	91.9\%	$&$	99.2\%	$\\
$	0.15	$&$	93.5\%	$&$	89.9\%	$&$	97.4\%	$\\
$	0.2	$&$	92.7\%	$&$	88.5\%	$&$	95.3\%	$\\
$	0.25	$&$	93.2\%	$&$	89.6\%	$&$	93.5\%	$\\
$	0.3	$&$	92.9\%	$&$	91.1\%	$&$	92.7\%	$\\
$	0.35	$&$	94.9\%	$&$	95.0\%	$&$	92.4\%	$\\
$	0.4	$&$	96.4\%	$&$	95.7\%	$&$	93.1\%	$\\
$	0.45	$&$	96.8\%	$&$	97.4\%	$&$	93.7\%	$\\
$	0.5	$&$	98.4\%	$&$	98.4\%	$&$	95.2\%	$\\
$	0.55	$&$	98.3\%	$&$	99.6\%	$&$	95.2\%	$\\
$	0.6	$&$	97.9\%	$&$	99.0\%	$&$	95.0\%	$\\
\hline
\end{tabular}
\end{table}

\begin{table}
\centering
\caption{Performance of algorithms relative to $\OPT$. $v_0 = 40$, $\epsilon = 5$. }\label{tab:numerical5}
\begin{tabular}{|c|c|c|c|}
\hline
Inventory scale & IB-TS & Gdy-TS & Conserv-TS\\
\hline
$	0.1	$&$	84.8\%	$&$	82.0\%	$&$	91.4\%	$\\
$	0.15	$&$	87.5\%	$&$	84.2\%	$&$	89.9\%	$\\
$	0.2	$&$	88.5\%	$&$	83.7\%	$&$	89.6\%	$\\
$	0.25	$&$	88.9\%	$&$	84.3\%	$&$	88.5\%	$\\
$	0.3	$&$	89.3\%	$&$	84.7\%	$&$	87.7\%	$\\
$	0.35	$&$	89.4\%	$&$	86.1\%	$&$	86.3\%	$\\
$	0.4	$&$	89.9\%	$&$	86.2\%	$&$	86.0\%	$\\
$	0.45	$&$	89.1\%	$&$	85.3\%	$&$	85.4\%	$\\
$	0.5	$&$	88.9\%	$&$	85.8\%	$&$	84.6\%	$\\
$	0.55	$&$	88.6\%	$&$	85.3\%	$&$	84.5\%	$\\
$	0.6	$&$	88.6\%	$&$	85.6\%	$&$	84.7\%	$\\
\hline
\end{tabular}
\end{table}


\bibliographystyle{ormsv080}
\bibliography{bibliography}
\newpage
\begin{APPENDICES}
\section{Proofs for Section \ref{sec:analysis}}
\subsection{Proof of Lemma \ref{lemma:benchmark}}\label{app:pf_lemma_benchmark}
For an online algorithm, let's denote $\gamma_{a, t}$ as the probability that the algorithm chooses  action $a$ at time $t$. We first claim that $\{ \gamma_{a, t} \}_{a\in {\cal A}, t\in [T]}$ is a feasible solution to the LP \textbf{Primal}. Indeed, for each $t$, $\{ \gamma_{a, t} \}_{a\in {\cal A}}$ forms a probability distribution over the action set ${\cal A}$, therefore the constraints (\ref{eq:primal_2}, \ref{eq:primal_3}) are satisfied. To check the constraints (\ref{eq:primal_1}), we observe that $\sum^T_{t=1} \vy^t_i \leq b_i$ for all $i\in [n]$ with certainty, since we assume that the online algorithm is feasible. In particular, we also have $\mathbb{E}\left[\sum^T_{t=1} \vy^t_i\right] \leq b_i$ for all $i\in [n]$. Observe that we have
$$\mathbb{E}\left[ \vy^t_i\right] = \sum_{a\in {\cal A}}\Pr( y^t_i = 1  | a^t = a) \Pr(a^t = a) = \sum_{a\in {\cal A}} p_{x^t, a, i} \gamma_{a, t} $$
for all $i\in [n]$, by our model definition of $p_{x^t, a, i} $. Altogether, the constraints (\ref{eq:primal_1}) are also satisfied, which shows that $\{ \gamma_{a, t} \}_{a\in {\cal A}, t\in [T]}$ is feasible to the LP \textbf{Primal}.

To finish the proof, observe that the expected total reward is equal to the objective value:
\begin{align}
&\mathbb{E}\left[ \sum_{i\in [n]} r_i  \sum_{t\in [T]} \vy^t_i \cdot \mathbf{1}(N^{t-1}_i < b_i)\right]\nonumber\\
= & \mathbb{E}\left[ \sum_{i\in [n]} r_i  \sum_{t\in [T]} \vy^t_i \right] \label{eq:pf_lemma_benchmark_1}\\
= & \sum_{i\in [n]} r_i  \sum_{a\in {\cal A}} p_{x^t, a, i} \gamma_{a, t}\label{eq:pf_lemma_benchmark_2}.
\end{align}
Step (\ref{eq:pf_lemma_benchmark_1}) is by the Lemma's assumption that the online algorithm is feasible. Observing that (\ref{eq:pf_lemma_benchmark_2}) is the objective value of the LP \textbf{Primal} under the solution $\{ \gamma_{a, t} \}_{a\in {\cal A}, t\in [T]}$, we have altogether shown that $\{ \gamma_{a, t} \}_{a\in {\cal A}, t\in [T]}$  is feasible to the LP, and the expected total reward under the algorithm is at most $\OPT$.
$\Halmos$
\subsection{Proof of Theorem \ref{thm:main}}\label{app:pf_thm_main}

Throughout the proof, we fix $\epsilon \geq 0$ as a constant.
We demonstrate the Theorem by showing the following inequality:
\begin{equation}\label{eq:guarantee_rewrite}
\OPT \le \frac{e}{e - 1}f_{\Psi}(b_\text{min}, \epsilon) \bE[\ALG] + \bE \left[\sum^T_{t=1}R^t(a^t_*) - (1 +\epsilon ) R^t(a^t)\right],
\end{equation}

The proof of Theorem \ref{thm:main} begins by considering a dual formulation of the LP \textbf{Primal}:
\begin{align}
\textbf{Dual: } \min    & \sum_{t \in [T]} \gamma_t + \sum_{i \in [n]}b_i \lambda_i \label{eq:dual_obj}\\
\text{s.t. }   & \gamma_t \geq  \sum_{i \in [n]}  p_{x^t, a, i} (r_i - \lambda_i)   &\quad &a \in \cA, t \in [T]  \label{eq:dual_1}\\
p_{x^t, a, i}
& \lambda_i \geq 0 &\quad &\forall i\in [n]. \label{eq:dual_2}
\end{align}
We prove the performance guarantee using a primal dual approach. More precisely, we construct a solution $(\Lambda, \Gamma)$ feasible to \textbf{Dual}, where  $(\Lambda, \Gamma)$ are constructed based on the dynamics of Algorithm \ref{alg:ibol}. Then, we relate the algorithm's performace to the expected value of the solution $(\Lambda, \Gamma)$  under objective (\ref{eq:dual_obj}), which upper bounds the benchmark by the linear duality.

We define the solution $\Lambda = (\Lambda_i)_{i\in [n]}, \Gamma = (\Gamma_t)_{t\in [T]}$ as
\begin{align}
\Lambda_i &=r_i\cdot\Psi\left(\frac{N_i^{T}}{b_i}\right)\label{eq:Lambdai} \\
\Gamma_t &=\max_{a \in \cA}\left\{  \sum_{i \in [n]}  p_{x^t, a, i} r_i \left[ 1 - \Psi\left(\frac{N_i^{t-1}}{b_i}\right)\right] \right\} =\max_{a \in \cA}\left\{  \sum_{i \in [n]}  p_{x^t, a, i} r^t_i \right\} = R^t(a^t_*) \label{eq:Gammat}.
\end{align}
Recall that $a^t_*$ and $\Gamma^t$ are respectively an optimal action and the optimal reward at time $t$ in the auxiliary problem. 
We first claim the feasibility of $(\Lambda, \Gamma)$ to \textbf{Dual}.
\begin{claim}\label{claim:dual_feas}
For any realization of $\{N^t_i\}_{t\in [T], i\in [n]}$, the solution $(\Lambda, \Gamma)$ defined in (\ref{eq:Lambdai}, \ref{eq:Gammat}) is feasible to \textbf{Dual}. Moreover, we have $\OPT \leq \bE [\sum_{t\in [T]}\Gamma_t + \sum_{i\in [n]}b_i \Lambda_i ]$.
\end{claim}
\begin{proof}{Proof of Claim \ref{claim:dual_feas}.}
The constraints in (\ref{eq:dual_2}) are clearly satisfied by $\Lambda$, since $\Psi(x)\geq 0$ for all $x\in [0, 1]$. To verify the feasibility to the constraints (\ref{eq:dual_1}), for any $a\in \cA, t\in [T]$ we check that
\begin{align}
\Gamma_t & \geq \sum_{i \in [n]}  p_{x^t, a, i} r_i \left[ 1 - \Psi\left(\frac{N_i^{t-1}}{b_i}\right)\right] \label{eq:byGammat}\\
&\geq \sum_{i \in [n]}  p_{x^t, a, i} r_i \left[ 1 - \Psi\left(\frac{N_i^T}{b_i}\right)\right] \label{eq:bydefPsi}\\
& = \sum_{i \in [n]}  p_{x^t, a, i} (r_i  - \Lambda_i ). \label{eq:byLambdai}
\end{align}
Step (\ref{eq:byGammat}) is by the first equation (\ref{eq:Gammat}) in the definition of $\Gamma^t$, step (\ref{eq:bydefPsi}) is by the fact that $\Psi$ is an increasing function and $N^T_i \geq N^{t-1}_i$, step (\ref{eq:byLambdai}) is by the definition of $\Lambda_i$ in (\ref{eq:Lambdai}). Altogether, the Claim is proved.$\Halmos$
\end{proof}

First, we use Claim \ref{claim:dual_feas} to argue that
\begin{align}
\OPT \le & \bE\left[\sum_{t\in [T]} \Gamma_t + \sum_{i\in [n]}b_i \Lambda_i\right]\nonumber\\
= & \sum_{i \in [n]} b_i r_i\cdot \sum_{t \in [T]} \bE\left[\Psi\left(\frac{N_i^{t}}{b_i}\right)- \Psi\left(\frac{N_i^{t-1}}{b_i}\right)\right] + \bE\left[\sum_{t\in [T]} R^t(a^t_*)\right].\label{eq:main_1}
 \end{align}
Step (\ref{eq:main_1}) is by stating the first summation in (\ref{eq:main_1}) as a telescoping sum.

To proceed, recall that $\cF_t$ is the input to the auxiliary problem at the start of time $t$, which determines the values of $N^{t-1}_i$.  Conditioned on $\cF_t$, the algorithm's action $a^t= \cO^t(\cF_t)$ is determined.  Thus, for any resource $i$,
\begin{align}
\bE\left[ \Psi\left(\frac{N_i^{t}}{b_i}\right)\bigg | \cF_t\right] - \Psi\left(\frac{N_i^{t-1}}{b_i}\right) &= \sum_{\vy \in \cY} \rho_{x^t,a^t}(\vy) \vy_i \left[ \Psi\left(\frac{\min\{b_i,N_i^{t-1}+1\}}{b_i}\right) - \Psi\left(\frac{N_i^{t-1}}{b_i}\right) \right]. \label{eqn:towerProperty}
\end{align}
We explain equation~(\ref{eqn:towerProperty}). We claim that, conditioned on $\cF_t$, we have $N^t_i = \min\{b_i, N^{t-1}_i  + 1\}$. Indeed, the vector of outcomes $\vy$ is distributed according to $\rho_{x^t,a^t}$. If $\vy_i = 1$ and resource $i$ is not yet depleted, i.e. $N^{t-1}_i < b_i$, then a unit of resource $i$ is consumed, leading to $N^t_i=N^{t-1}_i+1 \leq b_i$. If $\vy_i = 1$ but resource $i$ is depleted, i.e. $N^{t-1}_i = b_i$, then resource $i$ cannot be consumed further, leading to $N^t_i=b_i$. Altogether, equation (\ref{eqn:towerProperty}) is justified.

Using, (\ref{eqn:towerProperty}) and the towering property of conditional expectation, we can express the summands in the first sum in (\ref{eq:main_1}) as
\begin{align}
&\bE\left[\Psi\left(\frac{N_i^{t}}{b_i}\right)- \Psi\left(\frac{N_i^{t-1}}{b_i}\right)\right] = \bE\left[\bE\left[\Psi\left(\frac{N_i^{t}}{b_i}\right) | {\cal F}_t\right] - \Psi\left(\frac{N_i^{t-1}}{b_i}\right)\right]  \nonumber\\
= & \bE\left[\sum_{\vy \in \cY} \rho_{x^t,a^t}(\vy) \vy_i \left[ \Psi\left(\frac{\min\{b_i,N_i^{t-1}+1\}}{b_i}\right) - \Psi\left(\frac{N_i^{t-1}}{b_i}\right) \right]\right]\nonumber\\
= & \bE\left[  p_{x^t, a^t, i}\left[ \Psi\left(\frac{\min\{b_i,N_i^{t-1}+1\}}{b_i}\right) - \Psi\left(\frac{N_i^{t-1}}{b_i}\right) \right] \right]\label{eq:main_2}
\end{align}

Next, we continue with (\ref{eq:main_1}, \ref{eq:main_2}):
\begin{align}
\OPT &\leq \sum_{i\in [n]}b_i r_i \sum_{t\in [T]} \bE\left[  p_{x^t, a^t, i}\left[ \Psi\left(\frac{\min\{b_i,N_i^{t-1}+1\}}{b_i}\right) - \Psi\left(\frac{N_i^{t-1}}{b_i}\right) \right] \right] + \bE \left[\sum_{t\in [T]} R^t(a^t_*)\right]\nonumber\\
& =\sum_{i\in [n]}  r_i \cdot \bE \left[ \sum_{t \in [T]} p_{x^t, a^t, i}\cdot\left\{ \frac{\Psi\left( \frac{ \min\{b_i,N_i^{t-1}+1\} }{ b_i}\right) - \Psi\left(\frac{N_i^{t-1}}{b_i} \right)}{1/b_i} + (1 + \epsilon)\left[1 - \Psi\left(\frac{N_i^{t-1}}{b_i}\right)\right] \right\} \right] \nonumber\\
& \qquad + \bE \left[\sum_{t\in [T]} R^t(a^t_*) - (1+\epsilon) R^t(a^t)\right]  \label{eq:main_3}
\end{align}
Step (\ref{eq:main_3}) uses the definition of $R^t(a^t) = \sum_{i\in [n]} r_ip_{x^t, a^t, i} [ 1 - \Psi( N^{t-1}_i / b_i )] $.

We now need to derive our $\epsilon$-perturbed potential function $\Psi$, and establish the following guarantee.

\begin{lemma}[Guarantee for $\epsilon$-perturbed potential function] \label{lem:epsPerturbed}
As long as the $\epsilon$-perturbed potential function $\Psi(x) = \frac{e^{(1+\epsilon)x} - 1}{e^{1+\epsilon}-1}$ is used, for any resource $i$, time $t$, and possible value of $N^{t-1}_i$ in $\{0,\ldots,b_i\}$,
\begin{equation}
\label{eq:main_4}
\frac{\Psi\left( \frac{ \min\{b_i,N_i^{t-1}+1\} }{ b_i}\right) - \Psi\left(\frac{N_i^{t-1}}{b_i} \right)}{1/b_i} + (1 + \epsilon)\left[1 - \Psi\left(\frac{N_i^{t-1}}{b_i}\right)\right] \leq \mathbf{1}(N^{t-1}_i < b_i)\frac{(b_i+1+\epsilon)(1-e^{-(1+\epsilon)/b_i})}{1-e^{-(1+\epsilon)}}.
\end{equation}
\end{lemma}

\begin{proof}{Proof.}
Consider a generic $i$ and $t$.
We omit scripts $i,t$ and let $s\in\{0,1/b,\ldots,1\}$ denote $N/b$.
Both sides of the desired inequality are 0 when $s=1$, so in the sequel we assume $s<1$.
We would like to show
\begin{align} \label{eqn:6980}
\frac{\Psi(s+1/b)-\Psi(s)}{1/b}+(1+\epsilon)(1-\Psi(s)) &\le\frac{(b+1+\epsilon)(1-e^{-(1+\epsilon)/b})}{1-e^{-(1+\epsilon)}}.
\end{align}

This difference between~\eqref{eqn:6980} and the typical constraint from primal-dual analysis \citep{BJN07} is the multiplication on the LHS by the term $1+\epsilon$.
Consequently, the RHS has also been relaxed by an expression dependent on $\epsilon$.
The LHS of~\eqref{eqn:6980} can be analyzed as follows:
\begin{align*}
&b\left(\frac{e^{(1+\epsilon)(s+1/b)} - 1}{e^{1+\epsilon}-1}-\frac{e^{(1+\epsilon)s} - 1}{e^{1+\epsilon}-1}\right)+(1+\epsilon)\left(1-\frac{e^{(1+\epsilon)s} - 1}{e^{1+\epsilon}-1}\right)
\\ &=\frac{be^{(1+\epsilon)(s+1/b)}-be^{(1+\epsilon)s}+(1+\epsilon)e^{1+\epsilon}-(1+\epsilon)e^{(1+\epsilon)s}}{e^{1+\epsilon}-1}
\\ &=\frac{
e^{(1+\epsilon)s}(be^{(1+\epsilon)/b}-b-1-\epsilon)
+(1+\epsilon)e^{1+\epsilon}
}{e^{1+\epsilon}-1}
\\ &\le\frac{
e^{(1+\epsilon)(1-1/b)}(be^{(1+\epsilon)/b}-b-1-\epsilon)
+(1+\epsilon)e^{1+\epsilon}
}{e^{1+\epsilon}-1}
\\ &=\frac{
be^{1+\epsilon}(1-e^{-(1+\epsilon)/b})
+(1+\epsilon)e^{1+\epsilon}(1-e^{-(1+\epsilon)/b})
}{e^{1+\epsilon}-1}
\\ &=\frac{(b+1+\epsilon)(1-e^{-(1+\epsilon)/b})}{1-e^{-(1+\epsilon)}}
\end{align*}
where the inequality holds because the maximum possible value of $s$ is $1-1/b$ and the expression $be^{(1+\epsilon)/b}-b-1-\epsilon$ is non-negative.
This completes the proof of Lemma~\ref{lem:epsPerturbed}.
\halmos
\end{proof}

Substituting the result from Lemma~\ref{lem:epsPerturbed} back into~\eqref{eq:main_3}, we get
\begin{align*}
\OPT
&\le\sum_{i\in [n]}  r_i \cdot \bE \left[ \sum_{t \in [T]} p_{x^t, a^t, i}\cdot\mathbf{1}(N^{t-1}_i < b_i)\frac{(b_i+1+\epsilon)(1-e^{-(1+\epsilon)/b_i})}{1-e^{-(1+\epsilon)}} \right] + \bE \left[\sum_{t\in [T]} R^t(a^t_*) - (1+\epsilon) R^t(a^t)\right].
\end{align*}
The RHS can be upper-bounded by $\bE[\ALG]\max_i\frac{(b_i+1+\epsilon)(1-e^{-(1+\epsilon)/b_i})}{1-e^{-(1+\epsilon)}} + \bE [\sum_{t\in [T]} R^t(a^t_*) - (1+\epsilon) R^t(a^t)]$, by definition of $\bE[\ALG]$.
This implies
\begin{align*}
\bE[\ALG]
&\ge\left(\OPT-\bE\left[\sum_{t\in [T]} R^t(a^t_*) - (1+\epsilon) R^t(a^t)\right]\right)\min_i\frac{1-e^{-(1+\epsilon)}}{(b_i+1+\epsilon)(1-e^{-(1+\epsilon)/b_i})}.
\end{align*}

\section{Concentration Inequalities and Their Proofs}
\subsection{Lemma \ref{lem:UCB_0_conc} for UCB and its Proof}
\begin{lemma}[\cite{KSU08}]\label{lem:UCB_0_conc}
Let $Y_1, \ldots, Y_M$ be i.i.d. Bernoulli random variables with mean $p$. Denote $\bar{Y} = \frac{1}{M} \sum^M_{i=1}Y_i$. For any $0 < \delta < 1$, it holds that
\begin{equation*}
\Pr \left( \left| \bar{Y} - p \right|  \leq  \mathsf{rad}(\bar{Y}, M; \delta) \leq 3\cdot \mathsf{rad}(p, M; \delta)\right) \geq 1 - 4\delta.
\end{equation*}
\end{lemma}

We first recall that for $p>0, M\in \mathbb{Z}_{\geq 0}, \delta\in (0, 1) $, we have defined
$$
\mathsf{rad}(p, M; \delta) = \sqrt{\frac{2p\log(1/\delta)}{ \max\{M, 1\} }} + \frac{3\log(1/\delta)}{ \max\{M, 1\}}.
$$
We prove Lemma \ref{lem:UCB_0_conc} with the following two concentration inequalities.
\begin{theorem}[Theorem 1 in \citep{AudibertMS09}]\label{thm:Audibert}
Let $Y_1, \ldots, Y_M$ be i.i.d. Bernoulli random variables with mean $p$. Denote $\bar{Y} = \frac{1}{M} \sum^M_{i=1}Y_i$. For any $0 < \delta < 1$, it holds that
$$
\Pr \left( \left| \bar{Y} - p \right|  \leq \mathsf{rad}(\bar{Y}, M; \delta) \right) \geq 1- 3\delta.
$$
\end{theorem}
\begin{theorem}[Theorem 4.4 (3) in \citep{MitzenmacherU05}]\label{thm:superlighttail}
Let $Y_1, \ldots, Y_M$ be i.i.d. Bernoulli random variables with mean $p$. Denote $\bar{Y} = \frac{1}{M} \sum^M_{i=1}Y_i$. For any $R > 6 p$, it holds that
$$
\Pr(\bar{Y} > R) \leq 2^{-nR}.
$$
\end{theorem}

\begin{proof}{Proof of Lemma \ref{lem:UCB_0_conc} }
We have
\begin{align}
& \Pr \left( \left| \bar{Y} - p \right|  \leq \mathsf{rad}(\bar{Y}, M; \delta) \leq 3\cdot \mathsf{rad}(p, M; \delta)\right) \nonumber\\
\geq & 1 -  \Pr \left( \left| \bar{Y} - p \right|  \geq \mathsf{rad}(\bar{Y}, M; \delta) \right)   -\Pr\left( \mathsf{rad}(\bar{Y}, M; \delta)  \geq 3\cdot \mathsf{rad}(p, M; \delta)\right)\label{eq:pf_UCB_0_conc_0}.
\end{align}
By Theorem \ref{thm:Audibert}, we know that
\begin{equation}\label{eq:pf_UCB_0_conc_1}
\Pr \left( \left| \bar{Y} - p \right|  \geq    \mathsf{rad}(\bar{Y}, M; \delta) \right) \leq 3\delta.
\end{equation}
Next, we have
\begin{align}
&\Pr\left(\mathsf{rad}(\bar{Y}, M; \delta)  \geq 3\cdot \mathsf{rad}(p, M; \delta)\right)\nonumber\\
= &\Pr\left(  \sqrt{\frac{2\bar{Y}\log(1/\delta)}{ M }} + \frac{3\log(1/\delta)}{M} \geq 3\left( \sqrt{\frac{2p\log(1/\delta)}{ M }} + \frac{3\log(1/\delta)}{M} \right)\right) \nonumber\\
= & \Pr\left( \frac{2\bar{Y}\log(1/\delta)}{M} > \frac{18 p \log(1/\delta)}{M} + 36 \sqrt{\frac{2 p \log(1/\delta)}{M}} \cdot \frac{\log(1/\delta)}{M} + \frac{36\log^2(1/\delta)}{M^2}\right)\nonumber\\
\leq &  \Pr\left( \bar{Y} > 9p + \frac{18 \log(1/\delta)}{M} \right) \nonumber \\
\leq & 2^{-9pM -18 \log(1/\delta)} \label{eq:pf_UCB_0_conc_2}\\
< &\delta \label{eq:pf_UCB_0_conc_3}.
\end{align}
Step (\ref{eq:pf_UCB_0_conc_2}) is by applying Theorem \ref{thm:superlighttail}. Combining bounds (\ref{eq:pf_UCB_0_conc_1}, \ref{eq:pf_UCB_0_conc_3}) and applying to (\ref{eq:pf_UCB_0_conc_0}), the Lemma is proved. \Halmos
\end{proof}

\subsection{Proof of Lemma \ref{lemma:bd_UCB_1} for the UCB Oracle for Application 1}\label{sec:pf_lem_bd_UCB_1}
While Lemma \ref{lemma:bd_UCB_1} is first discovered by \citep{KSU08}, the explicit constants in their confidence radii are not expressed explicitly, and they are instead hidden in $O(\cdot)$. We re-derive Lemma \ref{lemma:bd_UCB_1} in order to uncover those constants.
Lemma \ref{lemma:bd_UCB_1} is proved by a direct application of Lemma \ref{lem:UCB_0_conc} and the union bound.

\begin{proof}{Proof of Lemma \ref{lemma:bd_UCB_1}}
 To this end, let $Y_1, \ldots, Y_{t-1}$ be i.i.d. Bernoulli random variables with mean $q_{(i, k)}$. Then
\begin{align}
&\Pr({\cal E}^t_{(i, k)})\nonumber\\
= & \Pr\left( \left|\bar{q}^t_{(i, k)}  - q_{(i, k)} \right| \leq \mathsf{rad}(\bar{q}^t_{(i, k)}, M^t_{(i, k)}, \delta_t ) \leq 3\cdot \mathsf{rad}(q_{(i, k)}, M^t_{(i, k)}, \delta_t) \right)\nonumber\\
\geq & \Pr\left( \left|\frac{1}{m}\sum^m_{s = 1}Y_s  - q_{(i, k)} \right| \leq \mathsf{rad}(\frac{1}{m}\sum^m_{s = 1}Y_s ,m, \delta_t ) \leq 3\cdot \mathsf{rad}(q_{(i, k)}, m, \delta_t) \text{ for all $m\in \{0, \ldots, t-1\}$}\right)\label{eq:pf_lemma_bd_UCB_1_1}\\
\geq & 1 - 4t\delta_t\label{eq:pf_lemma_bd_UCB_1_2}\\
\geq & 1 - \frac{4}{1 + t} \nonumber.
\end{align}
Step (\ref{eq:pf_lemma_bd_UCB_1_1}) is by a union bound over the possible values of $M^t_{(i, k)}\in\{0, 1, \ldots, t-1\}$. Step (\ref{eq:pf_lemma_bd_UCB_1_2}) is by the application of Lemma \ref{lem:UCB_0_conc}. Finally, by a union bound over all $i\in [n], k\in [K]$, the Lemma is proved.
\Halmos
\end{proof}

\subsection{Lemma \ref{lem:lazy_UCB} for LazyUCB and its Proof}\label{app:pf_lem_lazy_UCB}

\begin{lemma}\label{lem:lazy_UCB}
Let $Y_1, \ldots, Y_M$ be i.i.d. Bernoulli random variables with mean $p$. Denote $\bar{Y} = \frac{1}{M}\sum^M_{m=1} Y_m $. For any $\epsilon > 0$, we have
\begin{align}
\Pr\left( p \leq \left( 1 + \frac{\epsilon}{2}\right)\left[ \bar{Y} + \frac{2 + \epsilon}{\epsilon}\cdot \frac{\log(1/\delta)}{M}\right]  \right) & \geq 1 - \delta, \label{eq:lazy_UCB_lower}\\
\Pr\left( \bar{Y} \leq \left(1 + \frac{\epsilon}{2 + \epsilon}\right)\left[ p + \frac{2 + \epsilon}{\epsilon} \cdot \frac{\log(1/\delta)}{ M}\right] \right) & \geq 1 - 2\delta.\label{eq:lazy_UCB_upper}
\end{align}
\end{lemma}

The proof of the Lemma crucially uses the following concentration inequalities:

\begin{proposition}[Theorem 1 in \citep{Janson99}, Theorem 4 in \citep{ChungLu06}]\label{prop:refined_conc}
Let $Y_1, \ldots, Y_M$ be i.i.d. Bernoulli random variables with mean $p$. Denote $\bar{Y} = \frac{1}{M}\sum^M_{m=1} Y_m $. For any $\varepsilon \geq 0$, the following inequalities hold:
\begin{align}
\Pr\left( \bar{Y} \geq p + \varepsilon\right) &\leq  \exp\left[-\frac{M \varepsilon^2}{2(p + \varepsilon / 3)}\right],\label{eq:conc_upper}\\
\Pr\left( \bar{Y}  \leq p - \varepsilon\right) &\leq  \exp\left[-\frac{M \varepsilon^2}{2p}\right]\label{eq:conc_lower}.
\end{align}
\end{proposition}

We start with proving (\ref{eq:lazy_UCB_lower}). First, by unraveling (\ref{eq:conc_upper}) in Proposition \ref{prop:refined_conc}, we  deduce that
\begin{align}
\Pr\left( \bar{Y} \geq p + \varepsilon\right) & \leq  \exp\left[-\frac{M \varepsilon^2}{2(p + \varepsilon / 3)}\right]\nonumber\\
&\leq \exp\left[-\frac{M \varepsilon^2}{\max\{4p, 4\varepsilon / 3\}}\right]\nonumber\\
& \leq \exp\left[-\frac{M \varepsilon^2}{4p}\right] + \exp\left[ -\frac{3M\varepsilon}{4}\right].\nonumber
\end{align}
Now,
\begin{align}
&\Pr\left( p \geq \left( 1 + \frac{\epsilon}{2}\right)\left[ \bar{Y} + \frac{(2+\epsilon)\log(1/\delta)}{\epsilon M}\right]  \right)\nonumber\\
= & \Pr\left( \bar{Y} \leq p - \left[ \frac{\epsilon}{2 + \epsilon}\cdot  p + \frac{(2 + \epsilon) \log(1/\delta)}{\epsilon M} \right] \right)\nonumber\\
\leq & \exp\left\{ -\frac{M}{2p} \left[ \frac{\epsilon}{2 + \epsilon}\cdot  p + \frac{(2 + \epsilon) \log(1/\delta)}{\epsilon M} \right]^2\right\}\label{eq:sUCB_by_lowertail}\\
\leq & \exp\left\{ -\frac{M}{2p} \left[4 \cdot  \frac{\epsilon}{2 + \epsilon}\cdot  p \cdot \frac{(2 + \epsilon) \log(1/\delta)}{\epsilon M} \right]\right\}\label{eq:sUCB_AMGM}\\
\leq & \exp(-2\log(1/\delta)) = \delta^2 < \delta.\nonumber
\end{align}
Step (\ref{eq:sUCB_by_lowertail}) is by applying (\ref{eq:conc_lower}) from Proposition (\ref{prop:refined_conc}). Step (\ref{eq:sUCB_AMGM}) is by the inequality that $(a+b)^2 \geq 4ab$ for any $a, b\geq 0$.

To complete the proof, we prove (\ref{eq:lazy_UCB_upper}).
\begin{align}
& \Pr\left( \bar{Y} \geq \left(1 + \frac{\epsilon}{2 + \epsilon}\right)\left[ p + \frac{(2+\epsilon) \log(1/\delta)}{\epsilon M}\right] \right)\nonumber\\
= &\Pr\left( \bar{Y} \geq p + \left[\frac{\epsilon}{2  +\epsilon}\cdot p +  \left(1 + \frac{\epsilon}{2 + \epsilon}\right) \frac{(2+\epsilon) \log(1/\delta)}{\epsilon M}\right] \right)\nonumber\\
\leq & \exp\left\{ - \frac{M}{4p} \cdot  \left[\frac{\epsilon}{2  +\epsilon}\cdot p +  \left(1 + \frac{\epsilon}{2 + \epsilon}\right) \frac{(2+\epsilon) \log(1/\delta)}{\epsilon M}\right]^2 \right\} \label{eq:sUCB_by_lowertail_1}\\
&\qquad + \exp\left\{-\frac{3M}{4}\cdot \left[\frac{\epsilon}{2  +\epsilon}\cdot p +  \left(1 + \frac{\epsilon}{2 + \epsilon}\right) \frac{(2+\epsilon) \log(1/\delta)}{\epsilon M}\right] \right\}.\label{eq:sUCB_by_lowertail_2}
\end{align}
The term (\ref{eq:sUCB_by_lowertail_1}) is bounded as
\begin{align*}
&\exp\left\{ - \frac{M}{4p} \cdot  \left[\frac{\epsilon}{2  +\epsilon}\cdot p +  \left(1 + \frac{\epsilon}{2 + \epsilon}\right) \frac{(2+\epsilon) \log(1/\delta)}{\epsilon M}\right]^2 \right\}\nonumber\\
\leq & \exp\left\{ - \frac{M}{4p} \cdot \left[4 \cdot \frac{\epsilon}{2  +\epsilon}\cdot p \cdot  \left(1 + \frac{\epsilon}{2 + \epsilon}\right) \frac{(2+\epsilon) \log(1/\delta)}{\epsilon M}\right]\right\}\nonumber\\
= & \exp\left\{ -\left(1 + \frac{\epsilon}{2 + \epsilon}\right)\log(1/\delta) \right\} \leq \delta.
\end{align*}
The term (\ref{eq:sUCB_by_lowertail_2}) is bounded as
\begin{align*}
&\exp\left\{-\frac{3M}{4}\cdot \left[\frac{\epsilon}{2  +\epsilon}\cdot p +  \left(1 + \frac{\epsilon}{2 + \epsilon}\right) \frac{(2+\epsilon) \log(1/\delta)}{\epsilon M}\right] \right\} \nonumber\\
= & \exp\left\{-\frac{3Mp\epsilon}{4(2 + \epsilon)}\right\} \cdot \exp\left\{- \frac{3 + 3\epsilon}{2\epsilon} \log\frac{1}{\delta}\right\}\nonumber\\
< & \delta.
\end{align*}
Combining these bounds for (\ref{eq:sUCB_by_lowertail_1}, \ref{eq:sUCB_by_lowertail_2}), the inequality  (\ref{eq:lazy_UCB_upper}) is proved. $\Halmos$

\subsection{Proof of Lemma \ref{lemma:bd_LazyUCB_1}}\label{app:pf_lemma_bd_LazyUCB_1}
First, note that by Lemma \ref{lemma:bd_UCB_1}, we have derived that , for any $t$, the inequalities
\begin{align}
q_{(i, k)} &\leq \bar{q}^t_{(i, k)} + \mathsf{rad}(\bar{q}^t_{(i, k)}  , M^t_{(i, k)} ; \delta_t) \leq \left(1 + \frac{\epsilon}{2}\right) \left[ \bar{q}^t_{(i, k)} + \mathsf{rad}(\bar{q}^t_{(i, k)}  , M^t_{(i, k)} ; \delta_t)\right] \nonumber\\
\bar{q}^t_{(i, k)} &\leq q_{(i, k)} + \mathsf{rad}(q_{(i, k)}  , M^t_{(i, k)} ; \delta_t) \leq \left(1 + \frac{\epsilon}{2 + \epsilon}\right) \left[ q_{(i, k)} + \mathsf{rad}(q_{(i, k)}  , M^t_{(i, k)} ; \delta_t)\right] \nonumber
\end{align}
hold for each an every $i\in [n], k\in [K]$ with probability at least $1 - \frac{4nK}{1+t}$. Therefore, it suffices to show that the inequalities
\begin{align}
q_{(i, k)} &\leq\left(1 + \frac{\epsilon}{2}\right) \left[\bar{q}^t_{(i, k)} + \mathsf{lad}(\epsilon, M^t_{(i, k)} ; \delta_t)\right]  \label{eq:pf_lemma_bd_LazyUCB_1}\\
\bar{q}^t_{(i, k)} & \leq \left(1 + \frac{\epsilon}{2 + \epsilon}\right) \left[ q_{(i, k)} + \mathsf{lad}(\epsilon, M^t_{(i, k)} ; \delta_t)\right]  \label{eq:pf_lemma_bd_LazyUCB_2}
\end{align}
hold simultaneously for all $i\in [n], k\in [K]$ with probability at least $1 - \frac{3nK}{1+t}$.

To show this, we fix a pair $(i, k)$. By applying (\ref{eq:lazy_UCB_lower}, \ref{eq:lazy_UCB_upper}) with the union bout on all the possible values of $M^t_{(i, k)} \in \{0, 1, 2, \ldots, t-1\}$, we see that for the fixed pair $(i, k)$ the inequalities (\ref{eq:pf_lemma_bd_LazyUCB_1}, \ref{eq:pf_lemma_bd_LazyUCB_2}) hold with probability  at least $1 - t\delta_t, 1 - 2t\delta_t$ respective. Finally, by a union bound on all possible $i\in [n], k\in [K]$, we show that the inequalities (\ref{eq:pf_lemma_bd_LazyUCB_1}, \ref{eq:pf_lemma_bd_LazyUCB_2}) hold for all $i, k$ with probability at least $1 - 3nKt \delta_t \geq 1 - \frac{3nK}{1+t}$, hence the Lemma is proved. $\Halmos$

\section{Regret Analysis of the MAB Oracles for Application 1}
\subsection{Proof of Theorem \ref{thm:UCB_app1}, Bounding $\text{Reg}_0$ of the UCB Oracle}\label{app:pf_thm_UCB_app1}

Denote the event $\bar{\cal E}^t$ as the complement of the event ${\cal E}^t$. Denote $a^t_* = (i^t_*, k^t_*)$ as an optimal action for the auxiliary problem at time $t$.  We have 
\begin{align}
\text{Reg}_0 &= \bE \left[\sum^T_{t=1}R^t(a^t_*) -  R^t(a^t)\right] \nonumber\\
& = \sum^T_{t=1}\left\{\bE \left[\left(R^t(a^t_*) -  R^t(a^t)\right)\cdot \mathbf{1}({\cal E}^t)\right] + \bE \left[\left(R^t(a^t_*) -  R^t(a^t)\right)\cdot \mathbf{1}(\bar{\cal E}^t)\right]\right\}\nonumber\\
&\leq \sum^T_{t=1}\left\{\bE \left[\left(R^t(a^t_*) -  R^t(a^t)\right)\cdot \mathbf{1}({\cal E}^t)\right] + \frac{4nK}{1+t}\right\}\label{eq:UCB_1_step_1}.
\end{align}
Step (\ref{eq:UCB_1_step_1}) is by the model assumption that $r^t_i\in [0, 1]$ for all $i, t$.
We focus on upper bounding the first term:
\begin{align}
& \bE \left[R^t(a^t_*) \mathbf{1}({\cal E}^t )\right]\nonumber\\
\leq & \bE \left[  \mathsf{UCB}^t(a^t_*) \mathbf{1}({\cal E}^t )\right]\label{eq:UCB_1_step_2}\\
\leq & \bE \left[\mathsf{UCB}^t(a^t) \mathbf{1}({\cal E}^t )\right] \label{eq:UCB_1_step_3}\\
= & \bE \left[ r^t_{i^t} \cdot \mathbf{1}(x^t_{i^t} = 1) \cdot \bar{q}^t_{(i^t, k^t)}\cdot \mathbf{1}({\cal E}^t )\right]  + \bE \left[r^t_{i^t} \cdot \mathbf{1}(x^t_{i^t} = 1) \cdot \mathsf{rad}(\bar{q}^t_{(i^t, k^t)}, M^t_{(i^t, k^t)}, \delta_t)\cdot \mathbf{1}({\cal E}^t ) \right]\nonumber\\
\leq & \bE \left[r^t_{i^t} \cdot \mathbf{1}(x^t_{i^t} = 1) \cdot \left(q_{i^t, k^t} +  3\cdot \mathsf{rad}(q_{(i^t, k^t)}, M^t_{(i^t, k^t)}, \delta_t) \right)\right]\nonumber\\
&\quad  + \bE \left[r^t_{i^t} \cdot \mathbf{1}(x^t_{i^t} = 1) \cdot 3\mathsf{rad}(q_{(i^t, k^t)}, M^t_{(i^t, k^t)}, \delta_t)\cdot \mathbf{1}({\cal E}^t ) \right] \label{eq:UCB_1_step_4}\\
\leq & \bE \left[R^t(a^t) \mathbf{1}({\cal E}^t )\right] + 6 \bE \left[r^t_{i^t}\cdot\mathsf{rad}(q_{(i^t, k^t)}, M^t_{(i^t, k^t)}, \delta_t)\right]\label{eq:UCB_1_step_5}.
\end{align}
Step (\ref{eq:UCB_1_step_2}) is by the property of $\mathsf{UCB}$ as shown in (\ref{eq:app_1_UCB}). Step (\ref{eq:UCB_1_step_3}) is by the choice of $a^t$ in Line \ref{alg:UCB_app1_step4} in the UCB oracle Algorithm \ref{alg:UCB_app1}. Step (\ref{eq:UCB_1_step_4}) is by applying Lemma \ref{lemma:bd_UCB_1} twice. Step (\ref{eq:UCB_1_step_5}) is by the definition of $r^t_i$.

Applying the bound in (\ref{eq:UCB_1_step_5}) to the intermediate step (\ref{eq:UCB_1_step_1}), we continue bounding $\text{Reg}_0$ as follows:
\begin{align}
\text{Reg}_0 & \leq 6 \bE \left[\sum^T_{t=1}r^t_{i^t}\cdot\mathsf{rad}(q_{(i^t, k^t)}, M^t_{(i^t, k^t)}, \delta_t)\right] + 4nK\log(1+T)\nonumber\\
& = 6 \bE \left[ \sum^T_{t=1} r^t_{i^t}\cdot \left\{\sqrt{\frac{2q_{(i^t, k^t)}\log(1/\delta_t)}{ \max\{M^t_{(i^t, k^t)}, 1\} }} + \frac{3\log(1/\delta_t)}{ \max\{M^t_{(i^t, k^t)}, 1\}} \right\}\right] + 4nK\log(1+T)\nonumber\\
& \leq 6 \bE \left[ \sum^T_{t=1} \left\{\sqrt{\frac{4 r^t_{i^t} q_{(i^t, k^t)}\log(1+ T )}{ \max\{M^t_{(i^t, k^t)}, 1\} }} + \frac{6\log(1+T)}{ \max\{M^t_{(i^t, k^t)}, 1\}} \right\}\right] + 4nK\log(1+T)\label{eq:UCB_1_step_6}
\end{align}
Step (\ref{eq:UCB_1_step_6}) is by invoking the definition of $\delta_t$ and the fact that $r^t_i\in [0, 1]$.
Let's examine the two sums in the expectation.

\textbf{The first sum.}
\begin{align}
&\bE \left[ \sum^T_{t=1} \sqrt{\frac{4 r^t_{i^t} q_{(i^t, k^t)}\log(1+T )}{ \max\{M^t_{(i^t, k^t)}, 1\} }}\right]\nonumber\\
= & \bE \left[ \sum_{i\in [n], k\in [K]}\sum_{t = 1}^T \sqrt{\frac{4 r^t_{i^t} q_{(i^t, k^t)}\log(1+T )}{ \max\{M^t_{(i^t, k^t)}, 1\} }} \cdot \mathbf{1}((i^t, k^t) = (i, k)) \right]\nonumber\\
= & \bE \left[ \sum_{i\in [n], k\in [K]}\sqrt{4 r_{i} q_{(i, k)}\log(1+ T )} \sum_{t = 1}^T \sqrt{\frac{ \mathbf{1}((i^t, k^t) = (i, k)) \cdot \mathbf{1}(N^{t-1}_i > 0) }{ \max\{M^t_{(i, k)}, 1\} }}   \right]  \label{eq:UCB_1_step_7}.
\end{align}
Step (\ref{eq:UCB_1_step_7}) is by the fact that $r^t_i \leq r_i \mathbf{1}(N^{t-1}_i > 0)$.
To proceed from (\ref{eq:UCB_1_step_7}), note that the summand $\sqrt{\frac{ \mathbf{1}((i^t, k^t) = (i, k)) \cdot \mathbf{1}(N^{t-1}_i > 0) }{ \max\{M^t_{(i, k)}, 1\} }}$ is positive only when action $(i, k)$ is taken at time $t$, and the amount of inventory of $i$ at time $t$ is still positive. Denote $\tau_i = \text{argmax}\left\{t\in [T] : N^{t-1}_i > 0\right\}$. Then we know that $M^{\tau_i}_{(i, k)}$ is the number of time steps when action $(i, k)$ is taken, and there is still remaining inventory for item $i$. By the fact that $\sum^n_{i=1}1/\sqrt{i}\leq \sqrt{2n}$ for all $n$, we have
$$
\sum_{t = 1}^T \sqrt{\frac{ \mathbf{1}((i^t, k^t) = (i, k)) \cdot \mathbf{1}(N^{t-1}_i > 0) }{ \max\{M^t_{(i, k)}, 1\} }}  \leq 1 + \sqrt{2 M^{\tau_i}_{(i, k)}},
$$
and we can proceed from step (\ref{eq:UCB_1_step_7}) as
\begin{align}
& \bE \left[ \sum_{i\in [n], k\in [K]}\sqrt{4 r_{i} q_{(i, k)}\log(1+ T )} \sum_{t = 1}^T \sqrt{\frac{ \mathbf{1}((i^t, k^t) = (i, k)) \cdot \mathbf{1}(N^{t-1}_i > 0) }{ \max\{M^t_{(i, k)}, 1\} }}   \right] \nonumber\\
\leq & 2nK\sqrt{\log(1+T)} + \bE \left[ \sum_{i\in [n], k\in [K]}\sqrt{4 r_{i} q_{(i, k)}\log(1+ T )} \cdot \sqrt{2 M^{\tau_i}_{(i, k)}}\right]\nonumber\\
\leq & 2nK\sqrt{\log(1+T)} + \sqrt{8 nK\cdot  \bE \left[ \sum_{i\in [n], k\in [K]}  r_{i} q_{(i, k)}M^{\tau_i}_{(i, k)}\right]\log(1+ T )}   \label{eq:UCB_1_step_8}\\
= & 2nK\sqrt{\log(1+T)} + \sqrt{8 nK \cdot \bE \left[ \ALG \right]\log(1+ T )}  \label{eq:UCB_1_step_9}\\
\leq & 2nK\sqrt{\log(1+T)} + \sqrt{8 nK \cdot  \OPT \log(1+ T )} \label{eq:UCB_1_step_9.5}
\end{align}
Step (\ref{eq:UCB_1_step_8}) is by the Cauchy Schwartz inequality, and step (\ref{eq:UCB_1_step_9}) is by the fact that $r_i q_{(i, k)} M^{\tau_i}_{(i, k)}$ is the amount of reward earned in the $T$ rounds by the algorithm in taking action $(i, k)$, and hence the total reward $\ALG$ earned by the algorithm is
$$
\ALG = \sum_{i\in [n], k\in [K]}  r_{i} q_{(i, k)}M^{\tau_i}_{(i, k)}.
$$
Step (\ref{eq:UCB_1_step_9.5}) is by the fact that the offline benchmark $\OPT$, which is the optimal value of the \textbf{Primal LP}, upper bounds $\mathbf{E}[\ALG]$.
\textbf{The second sum.} We first re-express the sum:
\begin{align}
& \bE\left[\sum^T_{t=1} \frac{6\log(1+T)}{ \max\{M^t_{(i^t, k^t)}, 1\}} \right]\nonumber\\
= & 6\log(1+T)\cdot  \bE\left[\sum_{i\in [n], k\in[K]}\sum^T_{t=1} \frac{\mathbf{1}((i^t, k^t = (i, k)))}{ \max\{M^t_{(i, k)}, 1\}} \right]\nonumber\\
\leq  & 6\log(1+T) \cdot  \bE\left[\sum_{i\in [n], k\in[K]} (2 + \log(M^T_{i, k})) \right]\label{eq:UCB_1_step_10}\\
\leq & 6nK\log(1+T)\left(2 + \log\frac{T}{nK}\right).\label{eq:UCB_1_step_11}
\end{align}
Step (\ref{eq:UCB_1_step_10}) is by the fact that $\sum^N_{i=1}\frac{1}{i}\leq 1 + \log n$, and step (\ref{eq:UCB_1_step_11}) is by the Jensen's inequality and the fact that $\sum_{i\in [n], k\in [K]} M^T_{i, k} = T$.

Combining the bounds (\ref{eq:UCB_1_step_9}) and (\ref{eq:UCB_1_step_11}), we finally have
\begin{align*}
\text{Reg}_0 & \leq 12\sqrt{2 nK \cdot \OPT \log(1+ T )}  + 12nK\sqrt{\log(1+T)} + 36 nK\log(1+T)\left(2 + \log\frac{T}{nK}\right) + 4nK\log(1+T)\nonumber\\
& \leq 12\sqrt{2 nK \cdot  \OPT \log(1+ T )}  + 52 nK\log(1+T)\left(2 + \log\frac{T}{nK}\right)\nonumber\\
& = O\left( \sqrt{nK\OPT\log(T)} + nK\log(T)\log(T / nK) \right).\Halmos
\end{align*}

\subsection{Proof of Theorem \ref{thm:LazyUCB_app1}, Bounding $\frac{1}{1+\epsilon} \text{Reg}_\epsilon$ for the LazyUCB Oracle}\label{app:pf_thm_LazyUCB_app1}
Similar to the analysis for the UCB oracle, we denote $a^t_* = (i^t_* , k^t_*)$.
\begin{align}
\frac{1}{1+\epsilon}\cdot \text{Reg}_\epsilon &= \frac{1}{1+\epsilon}\cdot \bE  \left[\sum^T_{t=1}R^t(a^t_*) \right] -\bE \left[\sum^T_{t=1} R^t(a^t)\right]\nonumber\\
& =  \frac{1}{1+\epsilon}\cdot \bE  \left[\sum^T_{t=1}R^t(a^t_*) \mathsf{1}({\cal E}^t) \right] + \frac{1}{1+\epsilon}\cdot \bE  \left[\sum^T_{t=1}R^t(a^t_*) \mathsf{1}(\bar{\cal E}^t) \right] -\bE\left[\sum^T_{t=1} R^t(a^t)\right]\nonumber\\
& \leq  \frac{1}{1+\epsilon}\cdot \bE  \left[\sum^T_{t=1}R^t(a^t_*) \mathsf{1}({\cal E}^t) \right] - \bE\left[\sum^T_{t=1} R^t(a^t)\right] + \frac{1}{1+\epsilon}\sum^T_{t=1} \frac{7nK}{1+t}. \label{eq:pf_app_1_LazyUCB_1}
\end{align}
Step (\ref{eq:pf_app_1_LazyUCB_1}) is by applying Lemma \ref{lemma:bd_LazyUCB_1}.

To proceed, we focus on the first term:
\begin{align}
& \frac{1}{1+\epsilon}\cdot \bE  \left[\sum^T_{t=1}R^t(a^t_*)\cdot \mathbf{1}({\cal E}^t) \right] \nonumber\\
\leq & \frac{1 + \epsilon / 2}{1+\epsilon}\cdot \bE  \left[\sum^T_{t=1}\mathsf{LazyUCB}^t(a^t_*) \cdot \mathbf{1}({\cal E}^t) \right]\label{eq:pf_app_1_LazyUCB_2}\\
\leq & \frac{1 + \epsilon / 2}{1+\epsilon}\cdot \bE  \left[\sum^T_{t=1}\mathsf{LazyUCB}^t(a^t) \cdot \mathbf{1}({\cal E}^t) \right]\label{eq:pf_app_1_LazyUCB_3}\\
\leq &  \frac{1 + \epsilon / 2}{1+\epsilon}\cdot \bE  \left[\sum^T_{t=1}r^t_{i^t} \cdot \mathbf{1}(x^t_{i^t} = 1) \cdot \left[\bar{q}^t_{(i^t, k^t)} + \mathsf{LazyRad}^t(i^t, k^t) \right] \cdot \mathbf{1}({\cal E}^t) \right]\label{eq:pf_app_1_LazyUCB_4}\\
\leq & \frac{1 + \epsilon / 2}{1+\epsilon}\cdot \bE  \left[\sum^T_{t=1}r^t_{i^t} \cdot \mathbf{1}(x^t_{i^t} = 1) \cdot \left[\left(1 + \frac{\epsilon}{2 + \epsilon}\right)\left[ q_{(i^t, k^t)} + \mathsf{LazyRad}^t(i^t, k^t) \right] +\mathsf{LazyRad}^t(i^t, k^t)\right] \cdot \mathbf{1}({\cal E}^t) \right]\label{eq:pf_app_1_LazyUCB_5}\\
\leq &  \bE  \left[\sum^T_{t=1}r^t_{i^t} \cdot \mathbf{1}(x^t_{i^t} = 1) \cdot q_{(i^t, k^t)}\right] + 2\cdot \bE\left[ \sum^T_{t=1} \mathsf{LazyRad}^t(i^t, k^t) \right]\nonumber\\
= & \mathbb{E}\left[\sum^T_{t=1} R^t(a^t)\right] + 2\cdot \bE\left[ \sum^T_{t=1} \mathsf{LazyRad}^t(i^t, k^t) \right] .\label{eq:pf_app_1_LazyUCB_6}
\end{align}
Step (\ref{eq:pf_app_1_LazyUCB_2}) is by Lemma \ref{lemma:bd_LazyUCB_1}, step (\ref{eq:pf_app_1_LazyUCB_3}) is by Line \ref{alg:lazy_UCB_app1_step5} in the LazyUCB oracle. Step (\ref{eq:pf_app_1_LazyUCB_4}) is by applying inequality (\ref{eq:lemma_bd_Lazy_1_lower}) in Lemma \ref{lemma:bd_LazyUCB_1}, and step (\ref{eq:pf_app_1_LazyUCB_5}) is by applying inequality (\ref{eq:lemma_bd_Lazy_1_upper}) in Lemma \ref{lemma:bd_LazyUCB_1}. We next proceed with bounding the confidence radii in (\ref{eq:pf_app_1_LazyUCB_6}):
\begin{align*}
 \bE\left[ \sum^T_{t=1} \mathsf{LazyRad}^t(i^t, k^t) \right] & =  \bE\left[ \sum^T_{t=1} \min\left\{\mathsf{lad}(\epsilon, M^t_{(i, k)} ; \delta_t), \mathsf{rad}(\bar{q}^t_{(i, k)}, M^t_{(i, k)}, \delta^{(t)})\right\} \right]\nonumber\\
 &\leq \bE\left[ \min\left\{\sum^T_{t=1} \mathsf{lad}(\epsilon, M^t_{(i, k)} ; \delta_t), \sum^T_{t=1} \mathsf{rad}(\bar{q}^t_{(i, k)}, M^t_{(i, k)}, \delta^{(t)})\right\} \right]\nonumber\\
 &\leq \min\left\{ \bE\left[ \sum^T_{t=1} \mathsf{lad}(\epsilon, M^t_{(i, k)} ; \delta_t)\right], \bE\left[\sum^T_{t=1} \mathsf{rad}(\bar{q}^t_{(i, k)}, M^t_{(i, k)}, \delta^{(t)}) \right] \right\} \nonumber.
\end{align*}
By the analysis in the proof of Theorem \ref{thm:UCB_app1}, we see that
\begin{align}
\bE\left[\sum^T_{t=1} \mathsf{rad}(\bar{q}^t_{(i, k)}, M^t_{(i, k)}, \delta^{(t)}) \right]& \leq 2\sqrt{2 nK \cdot  \OPT \log(1+ T )}  + 8 nK\log(1+T)\left(2 + \log\frac{T}{nK}\right)\label{eq:explicit_1}\\
&= O\left( \sqrt{nK \cdot \OPT \cdot \log(T)} + nK\log(T)\log\frac{T}{nK} \right).\label{eq:pf_app_1_LazyUCB_7}
\end{align}
Next, we can upper bound the sum on $\mathsf{lad}$ as follows:
\begin{align}
&  \bE\left[ \sum^T_{t=1} \mathsf{lad}(\epsilon, M^t_{(i^t, k^t)} ; \delta_t) \right] \nonumber\\
\leq & \bE\left[ \sum_{i\in [n], k\in [K]}\sum^T_{t=1}  \mathsf{lad}(\epsilon, M^t_{(i, k)} ; \delta_t)\mathbf{1}((i^t, k^t) = (i, k)) \right] \nonumber\\
 \leq & \frac{2(2+\epsilon)\log(1+T)}{\epsilon} \cdot \bE\left[ \sum_{i\in [n], k\in [K]}\sum^T_{t=1}\frac{1}{M^t_{(i, k)}} \cdot\mathbf{1}((i^t, k^t) = (i, k)) \right] \nonumber\\
\leq &\frac{2(2+\epsilon)\log(1+T)}{\epsilon} \cdot \bE\left[ \sum_{i\in [n], k\in [K]} \left(1 +  \log(M^T_{i, k})\mathbf{1}(M^T_{(i, k)} > 0)\right)\right] \nonumber\\
\leq &\frac{2(2+\epsilon)\log(1+T)}{\epsilon} \cdot nK \cdot \log\frac{T}{nK}  \label{eq:pf_app_1_LazyUCB_8}\\
 =& O\left( \frac{(1+\epsilon)nK\log(T) \log(T/nK)}{\epsilon} \right). \label{eq:pf_app_1_LazyUCB_9}
\end{align}
Step (\ref{eq:pf_app_1_LazyUCB_8}) is by the Jensen inequality and the concavity of $\log$. Finally, applying (\ref{eq:pf_app_1_LazyUCB_5}, \ref{eq:pf_app_1_LazyUCB_7}, \ref{eq:pf_app_1_LazyUCB_9})  back to (\ref{eq:pf_app_1_LazyUCB_1}), we have
\begin{align*}
&\frac{1}{1+\epsilon}\cdot \text{Reg}_\epsilon  = \min\left\{O\left( \sqrt{nK \cdot \OPT \cdot \log T} + nK\log(T)\log\frac{T}{nK} \right), O\left( \frac{(1+\epsilon) nK\log(T) \log(T/nK)}{\epsilon} \right) \right\}\nonumber\\
= & \min\left\{ O\left( \sqrt{nK \cdot \OPT \cdot \log T}\right) , O\left( \frac{(1+\epsilon) nK\log(T/nK)\log(T)}{\epsilon} \right) \right\} + O\left( nK\log(T) \log\left(\frac{T}{nK}\right) \right)\nonumber\\
= & \min\left\{\tilde{O}(\sqrt{nK \OPT}), \tilde{O}\left(\frac{(1+\epsilon) nK}{\epsilon}\right) \right\}. \Halmos
\end{align*}

\section{Proof of Theorem \ref{thm:lowerBound} (Lower Bound on Regret of Algorithm)}\label{app:thm_lowerBound}

In this section, we establish a lower bound on the overall loss of any online algorithm for the online matching problem. Specifically, we prove that the performance guarantee in Corollary \ref{cor:UCB_app1} is tight in the sense that both of the loss terms $\OPT/e$, $\tilde O(\sqrt{\bE[\OPT]})$ are unavoidable due to the uncertainty on the probabilities $q_{(i,k)}$ and the uncertainty on the sequence of customer contexts.

We construct a randomized worst-case instance as follows. The capacity values are the same $b_i = b$ for all $i \in [n]$. Let $\pi$ be a random permutation of $[n]$. There are $T=2bn$ customers, split into $n$ ``groups'' of $2b$ customers each. The customers in each group $j\in[n]$ all have the same context (feature) vector $x^{(j)}$, where
\begin{equation*}
x^{(j)}_i=1\text{ if and only if }\pi(i)\ge j.
\end{equation*}
In other words, if we view $\pi(i)$ as a random score of resource $i$, then the customers become increasingly selective as customers in group $j$ are only interested in resources $i$ with scores higher than $j$.

Let $\vell=(\ell_1,\ldots,\ell_n)\in[K]^n$ be a random vector of ``secret arms''. The distribution $\rho_{x,(i,k)}$ is given by
\begin{align*}
\rho_{x,(i,k)}(\ve_i) &=\bI(x_i=1)\left(\frac{1-\vare}{2}+\bI(k=\ell_i)\cdot\vare\right) \\
\rho_{x,(i,k)}(\vzero) &=1-\rho_{x,(i,k)}(\ve_i) \\
\rho_{x,(i,k)}(\vy) &=0\text{ for all other outcomes $\vy$ in $\{0,1\}^n$}
\end{align*}
Here, $\vare\in(0,1/2]$ will be defined in our analysis. We choose $\vare\le1/2$  just for technical convenience.

This problem instance is a randomized one because we draw both $\pi$ and $\vell$ uniformly at random. Note that for all realization of $\pi$ and $\vell$, $\OPT$ will be $bn$.


A \textit{deterministic policy} is a mapping, for any $t\in\bN$, from any history of observed contexts and outcomes, $(x^1,\vy^1,\ldots,x^t)$ in $\cX^t\times \{0,1\}^{n \times(t-1)}$, to an action to play on context $x^t$, in $\cA$.
Our proof strategy is to upper-bound the performance of any deterministic policy on this randomized instance (it suffices to consider deterministic policies because when given the randomized instance, there always exists an optimal policy which is deterministic).  

\begin{theorem}[Lower Bound]
\label{thm:lowerBound}
Let $n,b,K$ be any positive integers satisfying $b\ge K\ge3$.  Then there exists a randomized instance (with a random arrival sequence and a random mapping from contexts to outcomes)
such that for any deterministic or randomized algorithm,
\begin{align*}
\OPT-\bE[\ALG] &\ge\frac{\OPT}{e}+ \Theta(\sqrt{K\OPT}).
\end{align*}
\end{theorem}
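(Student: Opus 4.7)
The strategy is to establish the two loss terms separately via an intermediate ``clairvoyant-arm'' benchmark $\ALG^\star$, defined as the best expected reward of any online algorithm that is granted oracle access to the secret arms $\vell$. First, I observe that $\OPT = bn$ on every realization of $(\pi, \vell)$: the feasible primal LP solution assigning all $2b$ customers of group $j$ to action $(\pi^{-1}(j), \ell_{\pi^{-1}(j)})$ gives expected consumption $b(1+\vare) \ge b$ on each resource, so $z_i = b$ for every $i$. The goal then reduces to proving $\OPT - \bE[\ALG^\star] \ge \OPT/e$ and $\bE[\ALG^\star] - \bE[\ALG] \ge \Omega(\sqrt{K \OPT})$ for every online algorithm $\ALG$.

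For the arrival loss, knowing $\vell$ makes the optimal arm on each matched resource $i$ equal to $\ell_i$, which succeeds with probability $(1+\vare)/2$, so the problem reduces to a stochastic online $b$-matching instance whose arrival order is the uniformly random permutation $\pi$. Applying the $b$-matching lower bound of Kalyanasundaram--Pruhs gives a competitive ratio of at most $1 - (1-1/b)^b \le 1 - 1/e$ against the LP optimum $\OPT = bn$, and the stochastic fluctuation in match outcomes is absorbed by a standard Chernoff bound into lower-order terms that are dominated by the $\sqrt{K \OPT}$ learning loss (using $K \ge 3$).

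For the learning loss, I apply a per-resource multi-armed bandit lower bound. Couple the Bernoulli click realizations so that, on each period in which $\ALG^\star$ mirrors $\ALG$'s resource choice but plays the secret arm, the per-period reward gap equals $\vare \cdot \bI(\ALG \text{ picks the wrong arm})$. For each resource $i$ with $m_i := \bE[\text{plays of resource } i \text{ by } \ALG]$, Le Cam's method---Pinsker's inequality applied to the uniform prior over $\ell_i \in [K]$---shows that the expected number of wrong-arm plays is at least $\Omega(\min(m_i, K/\vare^2))$, yielding per-resource regret $\Omega(\min(\vare m_i, \sqrt{K m_i}))$. Choosing $\vare = \Theta(\sqrt{K/\OPT}) \le 1/2$ and using $\sum_i m_i = \Theta(\OPT)$ together with the subadditivity $\sum_i \sqrt{m_i} \ge \sqrt{\sum_i m_i}$, the total learning loss is at least $\Omega(\sqrt{K \OPT})$, regardless of how $\ALG$ distributes its plays.

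Combining the two bounds gives $\OPT - \bE[\ALG] \ge \OPT/e + \Theta(\sqrt{K \OPT})$. The main obstacle lies in formalizing the bandit step: the naive coupling in which $\ALG^\star$ simply echoes $\ALG$'s resource choices is delicate because $\ALG$'s future decisions depend on its own (noisy) click observations, which differ from $\ALG^\star$'s. The workaround is to invoke Yao's minimax principle against the uniform prior on $(\pi, \vell)$ and to bound the regret of the best deterministic policy through a KL-divergence expansion along the sequence of plays, so that the per-resource bandit lower bounds aggregate into a clean $\Omega(\sqrt{K \OPT})$ without an inflated $\sqrt{n}$ factor that would come from a naive union bound over resources.
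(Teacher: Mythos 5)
Your high-level toolkit is the right one --- a random permutation to force the $1-1/e$ loss and a KL/Pinsker argument over the secret arms to force the $\sqrt{K\OPT}$ loss --- and these are indeed the two engines of the paper's proof. But the additive decomposition through the clairvoyant-arm benchmark $\ALG^\star$ has a genuine gap, because the two losses interact at exactly the order you are trying to control. The secret arm succeeds with probability $\frac{1+\vare}{2}$, so an algorithm that knows $\vell$ faces a $b$-matching instance in which each group of $2b$ customers supplies $b(1+\vare)$ expected successes against capacity $b$. The Kalyanasundaram--Pruhs/KVV upper bound then only gives $\bE[\ALG^\star]\le(1-e^{-(1+\vare)})\OPT+O(b)=(1-1/e)\OPT+\Theta(\vare\,\OPT)$, and with $\vare=\Theta(\sqrt{K/\OPT})$ the surplus $\Theta(\vare\,\OPT)=\Theta(\sqrt{K\OPT})$ is of the same order as the learning loss you establish in step (b). So your claim $\OPT-\bE[\ALG^\star]\ge\OPT/e$ is not available at the required precision, and the sum of your two bounds becomes $\OPT/e-\Theta(\sqrt{K\OPT})+\Omega(\sqrt{K\OPT})$, where the sign of the net $\sqrt{K\OPT}$ term depends on constants you have not tracked. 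The paper avoids this by never introducing $\ALG^\star$: it bounds $\bE[\ALG]$ in one pass, where each play contributes success probability $\frac{1-\vare}{2}\Pr[a^t\in\cA_{\pi^{-1}(i)}]+\vare\Pr[a^t=(\pi^{-1}(i),\ell_{\pi^{-1}(i)})]$, the information-theoretic lemma caps the second term by roughly $\frac{1}{K}$ of the first plus a KL correction, and the net per-play coefficient becomes $\frac{1}{2}[1-\vare(1-2/K)]$. The factor $(1-2/K)$ being positive is precisely where the hypothesis $K\ge3$ is used --- it guarantees the learning loss strictly dominates the secret-arm bonus --- whereas your write-up uses $K\ge3$ only to absorb Chernoff fluctuations, which signals that this cancellation has not been confronted.

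A second, related gap is in step (b): the per-resource bandit regret $\Omega(\min(\vare m_i,\sqrt{Km_i}))$ is a regret in \emph{attempted} reward, but the objective truncates each resource's contribution at $b$ via $\min\{\cdot,b\}$. On a resource that both $\ALG$ and $\ALG^\star$ saturate, playing the wrong arm costs nothing in collected reward, so $\sum_i\sqrt{Km_i}$ does not lower-bound $\bE[\ALG^\star]-\bE[\ALG]$. The paper handles this by splitting resources at the index $n'$ where the harmonic sum $H^n_{n-i}$ crosses $1$ and charging the $\vare$-dependent loss only to the unsaturated resources $i\le n'$ (Proposition~\ref{prop::harmonic} then shows these still constitute a constant fraction of all resources). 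Your aggregation $\sum_i\sqrt{m_i}\ge\sqrt{\sum_i m_i}$ is fine as an inequality but is being applied to a quantity that does not bound the reward gap. To repair the proposal you would essentially have to merge the two steps back into a single bound on $\bE[\ALG]$ that tracks the truncation and the $\frac{1+\vare}{2}$ bonus simultaneously --- which is the paper's argument.
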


We prove this theorem through Lemmas \ref{lm:lbProof1}, \ref{lem::simplified}, \ref{lem::KVV}, and Proposition \ref{prop::harmonic}. The proof is based on an information-theoretic analysis.

Let $\cT_j = \{2b(j-1)+1,\ldots,2bj\}$ denote the indices of the customers in group $j$, for all $j\in[n]$. Let $\cA_i = \{(i,k):k\in[K]\}$ denote the set of actions that correspond to resource $i$, for all $i\in[n]$. Let  $Y_t$ be the indicator random variable for whether customer $t$ accepted her offer, for all $t\in[T]$.


We can write $\ALG$, the random variable for the total reward earned by the deterministic policy, as
\begin{equation}\label{eqn::ALG}
\ALG=\sum_{i=1}^n\min\Big\{\sum_{j=1}^i\sum_{t\in\cT_j}\bI(Y_t=1\cap a^t\in\cA_{\pi^{-1}(i)}),b\Big\}.
\end{equation}

To upper-bound $\bE[\ALG]$, we need to upper-bound $\bE[\sum_{j=1}^i\sum_{t\in\cT_j}\bI(Y_t=1\cap a^t\in\cA_{\pi^{-1}(i)})]$. Thus, we will focus on analyzing $\Pr[Y_t=1\cap a^t\in\cA_{\pi^{-1}(i)}]$ for an arbitrary $i\in[n]$, $j\le i$, and $t\in\cT_j$.
\begin{align}
& \Pr[Y_t=1\cap a^t\in\cA_{\pi^{-1}(i)}] \nonumber \\
= &\Pr[Y_t=1|a^t=(\pi^{-1}(i),\ell_{\pi^{-1}(i)})]\cdot\Pr[a^t=(\pi^{-1}(i),\ell_{\pi^{-1}(i)})] \nonumber \\
&+\Pr[Y_t=1|a^t\in\cA_{\pi^{-1}(i)}\cap a^t\neq(\pi^{-1}(i),\ell_{\pi^{-1}(i)})]\cdot\Pr[a^t\in\cA_{\pi^{-1}(i)}\cap a^t\neq(\pi^{-1}(i),\ell_{\pi^{-1}(i)})] \nonumber \\
= &\frac{1+\vare}{2}\Pr[a^t=(\pi^{-1}(i),\ell_{\pi^{-1}(i)})]+\frac{1-\vare}{2}\Pr[a^t\in\cA_{\pi^{-1}(i)}\cap a^t\neq(\pi^{-1}(i),\ell_{\pi^{-1}(i)})] \nonumber \\
= &\frac{1-\vare}{2}\Pr[a^t\in\cA_{\pi^{-1}(i)}]+\vare\cdot\Pr[a^t=(\pi^{-1}(i),\ell_{\pi^{-1}(i)})] \label{eqn::splitByEps}
\end{align}

The difficult term to analyze is $\Pr[a^t=(\pi^{-1}(i),\ell_{\pi^{-1}(i)})]$. Note that the distribution of $a^t$ is affected by the entire realized vector of secret arms $\vell$, as well as the realized values of $\pi^{-1}(1),\ldots,\pi^{-1}(j-1)$.

Now, consider an alternate universe where for each resource $m \in [n]$, all of the actions $(m,1),\ldots,(m,K)$ result in the customer accepting with probability $\frac{1-\vare}{2}$, regardless of the value of $\ell_m$.  We can also consider the execution of the fixed, deterministic policy in this alternate universe, where we will use random variables $\oa^t,\oY_t$ to refer to its execution.

\begin{lemma}[Using information theory to get an initial bound]\label{lm:lbProof1}
Let $j \in [n]$ be any customer group and let $t$ be any customer from $\cT_j$. Let $S \subseteq [n]$ be any set of resources. Condition on any sequence of $j-1$ resources with lowest scores
\begin{equation*}
\pi^{-1}([j-1]):=(\pi^{-1}(1),\ldots,\pi^{-1}(j-1))
\end{equation*}
and vector of secret arms $\vell$. Then
\begin{align}
\begin{split}\label{eqn::infoTheory}
\sum_{m \in S} \Pr[a^t=(m,\ell_m)|\pi^{-1}([j-1]),\vell]\le & \sum_{m \in S} \Pr[\oa^t=(m,\ell_m)|\pi^{-1}([j-1]),\vell] \\
&+\vare\sqrt{\sum_{s=1}^{t-1} \sum_{m \not\in \pi^{-1}([s-1])} \Pr[\oa^s=(m,\ell_m)|\pi^{-1}([j-1]),\vell]}.
\end{split}
\end{align}
\end{lemma}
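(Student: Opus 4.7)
The plan is to reduce (\ref{eqn::infoTheory}) to a total-variation bound between two probability laws on the sequence of observed click outcomes, and then to control this via Pinsker's inequality combined with the chain rule for KL divergence, exploiting the fact that the two laws only diverge at time steps where the policy happens to play a secret arm on a matchable resource.

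First I would set up the measures. Condition throughout on $\pi^{-1}([j-1])$ and $\vell$, so the context sequence $x^1,\ldots,x^t$ is determined. Let $H_s := (Y_1,\ldots,Y_s)$ denote the click history up to time $s$, and let $P$ and $\bar P$ denote the conditional laws of $H_{t-1}$ under the true and alternate universes, respectively. Since the policy is deterministic, $a^t$ is a deterministic function of $H_{t-1}$; for each $m$, the event $\{a^t=(m,\ell_m)\}$ corresponds to a subset $A_m\subseteq\{0,1\}^{t-1}$, and the $\{A_m\}_{m\in[n]}$ are pairwise disjoint. Writing $A_S:=\bigcup_{m\in S}A_m$, we get $\sum_{m\in S}\Pr[a^t=(m,\ell_m)\mid\cdot]=P(A_S)$ and likewise $\bar P(A_S)$ for the alternate process, so it suffices to bound $P(A_S)-\bar P(A_S)$.

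Next, apply $P(A_S)-\bar P(A_S)\leq\|P-\bar P\|_{\mathrm{TV}}\leq\sqrt{\tfrac{1}{2}\KL(\bar P\,\|\,P)}$ by Pinsker's inequality, and expand the KL divergence via the chain rule:
\[
\KL(\bar P\,\|\,P)=\sum_{s=1}^{t-1}\bE_{H_{s-1}\sim\bar P}\bigl[\KL\bigl(\mathrm{Law}_{\bar P}(Y_s\mid H_{s-1})\,\bigl\|\,\mathrm{Law}_P(Y_s\mid H_{s-1})\bigr)\bigr].
\]
Given any $H_{s-1}$, the action $\tilde a^s:=\cO^s(\cF_s)$ is the same deterministic function of the history under both universes (both see the same observed outcomes), and the two conditional laws of $Y_s$ are the \emph{same} Bernoulli except when $\tilde a^s=(m,\ell_m)$ for some $m$ with $x^s_m=1$, i.e., $m\notin\pi^{-1}([j_s-1])$ for $j_s$ the group index of customer $s$; in that case the parameters are $(1-\vare)/2$ under $\bar P$ and $(1+\vare)/2$ under $P$, contributing per-step KL equal to $\vare\log\tfrac{1+\vare}{1-\vare}$. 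Accumulating these nonzero contributions, $\KL(\bar P\,\|\,P)$ is exactly $\vare\log\tfrac{1+\vare}{1-\vare}$ times the expected number of such events under $\bar P$, which is the sum appearing under the square root in (\ref{eqn::infoTheory}). A Taylor-type bound $\vare\log\tfrac{1+\vare}{1-\vare}\leq 2\vare^2$ (valid, up to a small constant slack, for $\vare\in(0,1/2]$) then combines with Pinsker's factor of $\tfrac{1}{2}$ to deliver the desired $\vare\sqrt{\cdot}$ coefficient.

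The main obstacle is the bookkeeping in the KL expansion step: one must verify that conditioning on $\vell$ makes the action at time $s$ a common deterministic function under both universes given the shared $H_{s-1}$; that the per-step KL vanishes except at ``secret-arm-on-matchable-resource'' events; that the relevant expectation is under the \emph{alternate} measure (which is why the sum features $\Pr[\bar a^s=(m,\ell_m)\mid\cdot]$ rather than the true-universe probability, and dictates the Pinsker direction $\KL(\bar P\,\|\,P)$); and that the matchability condition translates cleanly to $m\notin\pi^{-1}([j_s-1])$ as written on the right-hand side. These index-matching details, rather than any nontrivial analytic inequality, are where the proof's content resides.
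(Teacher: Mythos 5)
Your proposal is correct and follows essentially the same route as the paper's proof: reduce the difference in action probabilities to a total-variation distance between the laws of the click history under the two universes, apply Pinsker's inequality, expand $\KL(\ovZ^{t-1}\|\vZ^{t-1})$ by the chain rule, observe that the per-step divergence is $\vare\ln\frac{1+\vare}{1-\vare}\le2\vare^2$ exactly at the steps where the (common, deterministic) action is a secret arm on a matchable resource, and note that the chain-rule expectation is taken under the alternate measure. Your version of the Pinsker step (working with $\sup_A|P(A)-\bar P(A)|$ rather than the $\ell_1$ distance) is in fact the cleaner way to land the constant $\sqrt{1/2}$, and your identification of the matchability condition as $m\notin\pi^{-1}([j_s-1])$ is the intended reading of the paper's indexing.
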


\begin{proof}{Proof.}
For brevity, we will omit the conditioning on $\pi^{-1}(1),\ldots,\pi^{-1}(j-1)$ and $\vell$ throughout the proof.  We will also use $\vZ^s$ to denote the vector of random variables $(Y_1,\ldots,Y_s)$ and $\vz^s$ to denote a vector in $\{0,1\}^s$, for any $s\in[t-1]$.

First, note that $a^t$ is the rule of the deterministic policy for choosing the action at time $t$, dependent on sequence of observations $\vZ^{t-1}$ and the sequence of contexts $x^1,\ldots,x^t$ (which is captured by $\pi^{-1}(1),\ldots,\pi^{-1}(j-1)$).


\begin{align} 
\sum_{m \in S} \Pr[a^t=(m,\ell_m)] &= \sum_{\vz^{t-1}\in\{0,1\}^{t-1}} \Pr[\vZ^{t-1} = \vz^{t-1}] \sum_{m \in S} \Pr[a^t=(m,\ell_m) | \vZ^{t-1} = \vz^{t-1}] \nonumber\\
&\le \sum_{\vz^{t-1}\in\{0,1\}^{t-1}} \Pr[\ovZ^{t-1} = \vz^{t-1}] \sum_{m \in S} \Pr[\oa^t=(m,\ell_m) | \ovZ^{t-1} = \vz^{t-1}]  \nonumber \\
&\ \ \  +\delta(\ovZ^{t-1},\vZ^{t-1}) \nonumber\\
&\le \sum_{m \in S} \Pr[\oa^t=(m,\ell_m)]+\sqrt{\frac{1}{2}\KL(\ovZ^{t-1}\|\vZ^{t-1})},  \label{eqn::klIntroduced}
\end{align}
where the first inequality is from the definition that
\begin{equation*}
\delta(\ovZ^{t-1},\vZ^{t-1})=\sum_{\vz^{t-1}\in\{0,1\}^{t-1}}|\Pr[\ovZ^{t-1}=\vz^{t-1}]-\Pr[\vZ^{t-1}=\vz^{t-1}]|,
\end{equation*}
and the second inequality is due to Pinsker's inequality.

\begin{align*}
&\KL(\ovZ^{t-1}\|\vZ^{t-1}) \\
=&\sum_{\vz^{t-1}\in\{0,1\}^{t-1}}\Pr[\ovZ^{t-1}=\vz^{t-1}]\cdot\ln\frac{\Pr[\ovZ^{t-1}=\vz^{t-1}]}{\Pr[\vZ^{t-1}=\vz^{t-1}]} \\
=&\sum_{s=1}^{t-1}\sum_{\vz^{s-1}\in\{0,1\}^{t-1}}\Pr[\ovZ^{s-1}=\vz^{s-1}]\left(\sum_{y_s \in \{0,1\}}\Pr[\oY_s=y_s|\ovZ^{s-1}=\vz^{s-1}]\cdot\ln\frac{\Pr[\oY_s=y_s|\ovZ^{s-1}=\vz^{s-1}]}{\Pr[Y_s=y_s|\vZ^{s-1}=\vz^{s-1}]}\right),
\end{align*}
where the second equality comes from the Chain Rule for KL-divergences.  Now, consider the term inside the parentheses.  Conditioned on $\vz^{s-1}$ (and $\pi^{-1}([j-1])$, which have been omitted in the notation), actions $\oa^s$ and $a^s$ are deterministic and equal.  If this action is $(m,\ell_m)$ for some $m \in [n]$ and $m \not\in \pi^{-1}([s-1])$, then $\oY_s$ is 1 w.p.\ $\frac{1-\vare}{2}$ while $Y_s$ is 1 w.p.\ $\frac{1+\vare}{2}$, and the term inside the parentheses is the KL-divergence of $\Ber(\frac{1+\vare}{2})$ from $\Ber(\frac{1-\vare}{2})$, equal to $\vare\cdot\ln\frac{1+\vare}{1-\vare}$.  Otherwise, $\oY_s$ and $Y_s$ are identically distributed, and the term inside the parentheses is zero.

Therefore, 
\begin{align*}
&\KL(\ovZ^{t-1}\|\vZ^{t-1})\\
=&\sum_{s=1}^{t-1}\sum_{m \not\in \pi^{-1}([s-1])} \Pr[\oa^s=(m,\ell_m)]\left(\vare\cdot\ln\frac{1+\vare}{1-\vare}\right)\\
\le& \sum_{s=1}^{t-1} \sum_{m \not\in \pi^{-1}([s-1])} \Pr[\oa^s=(m,\ell_m)]\left(2\vare^2\right)
\end{align*}
 (the inequality is because $\vare\le1/2$) and substituting into (\ref{eqn::klIntroduced}) completes the proof of the lemma.
 
 \halmos
\end{proof}

\begin{definition}
Define the following random variables for all $i,j\in[n]$:
\begin{itemize}
\item $Q_{i,j} = \sum_{t\in\cT_j}\bI(a^t\in\cA_{\pi^{-1}(i)})$ is the total number of group-$j$ customers on whom an action corresponding to resource $\pi^{-1}(i)$ is played;
\item $Q^*_{i,j} = \sum_{t\in\cT_j}\bI(a^t=(\pi^{-1}(i),\ell_{\pi^{-1}(i)}))$ is the total number of group-$j$ customers on whom action $(\pi^{-1}(i),\ell_{\pi^{-1}(i)})$ is played.
\end{itemize}
Let $q_{i,j},q^*_{i,j}$ denote the expected values of $Q_{i,j},Q^*_{i,j}$, respectively.  We will use $\oQ_{i,j},\oQ^*_{i,j},\oq_{i,j},\oq^*_{i,j}$ to refer to the respective quantities under the alternate universe.
\end{definition}

\begin{lemma}[Removing dependence on $t$, $\pi$, and $\vell$]\label{lem::simplified}
Let $D \subseteq [n]$ be any set of scores, and $\pi^{-1}(D)$ be the corresponding set of resources with scores $D$. For any group $j \in [n]$,
\begin{equation*}
\sum_{i \in D} \bE[Q^*_{i,j}]\le  \frac{1}{K} \sum_{i \in D} \bE[\oQ_{i,j}]  +  2b\vare \sqrt{\frac{2bj}{K}}.
\end{equation*}
\end{lemma}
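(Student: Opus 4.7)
The plan is to apply Lemma~\ref{lm:lbProof1} to each $t\in\cT_j$ with the choice $S=\pi^{-1}(D)$, sum the resulting inequality over $t\in\cT_j$, and then take expectation over the permutation $\pi$ and the secret-arm vector $\vell$ to convert the conditional probabilities into the unconditional quantities appearing in the statement. The left-hand side is then exactly $\sum_{i\in D}\bE[Q^*_{i,j}]$ by the definition of $Q^*_{i,j}$, so the task reduces to controlling the two right-hand terms of Lemma~\ref{lm:lbProof1}. The structural fact I will use throughout is that in the alternate universe $\oa^s$ depends only on customer contexts (a function of $\pi$) and on the alternate outcomes $\oY_1,\ldots,\oY_{s-1}$ (whose distributions do not involve $\vell$), so $\oa^s$ is conditionally independent of $\vell$ given $\pi$.

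For the first right-hand term, I integrate $\ell_{\pi^{-1}(i)}$ uniformly over $[K]$ using the independence above:
\[
\bE\!\left[\Pr[\oa^t=(\pi^{-1}(i),\ell_{\pi^{-1}(i)})\mid\pi,\vell]\right]=\bE_\pi\!\left[\tfrac{1}{K}\Pr[\oa^t\in\cA_{\pi^{-1}(i)}\mid\pi]\right]=\tfrac{1}{K}\Pr[\oa^t\in\cA_{\pi^{-1}(i)}].
\]
Summing over $t\in\cT_j$ converts this into $\tfrac{1}{K}\bE[\oQ_{i,j}]$, and then summing over $i\in D$ produces the leading term $\tfrac{1}{K}\sum_{i\in D}\bE[\oQ_{i,j}]$ in the target bound.

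For the square-root error term, let $W_t$ denote the inner double sum inside the radical in Lemma~\ref{lm:lbProof1}. By Jensen's inequality, $\bE[\sqrt{W_t}]\le\sqrt{\bE[W_t]}$. To bound $\bE[W_t]$ I reuse the alternate-universe independence: for each $s$, integrating $\vell$ gives
\[
\bE\!\left[\sum_{m\notin\pi^{-1}([s-1])}\Pr[\oa^s=(m,\ell_m)\mid\pi,\vell]\right]=\bE_\pi\!\left[\tfrac{1}{K}\!\!\sum_{m\notin\pi^{-1}([s-1])}\!\!\Pr[\oa^s\in\cA_m\mid\pi]\right]\le\tfrac{1}{K},
\]
since the total action probability summed over all $m\in[n]$ is at most $1$. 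Summing over $s=1,\ldots,t-1$ with $t\le 2bj$ yields $\bE[W_t]\le 2bj/K$, and then aggregating $\vare\sqrt{2bj/K}$ over the $2b$ customers in $\cT_j$ delivers the claimed error $2b\vare\sqrt{2bj/K}$.

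The main delicacy will be tracking what is conditioned on at each step: $\pi$ enters both through the random set $\pi^{-1}([s-1])$ inside $W_t$ (which is only partially determined by $\pi^{-1}([j-1])$ when $s>j$) and through the contexts that drive $\oa^s$, while each $\ell_m$ must be integrated uniformly to extract the $1/K$ factor without double-counting. Beyond this bookkeeping, the proof is a routine assembly of Lemma~\ref{lm:lbProof1}, Jensen's inequality, and the trivial identity $\sum_m\Pr[\oa^s\in\cA_m\mid\pi]=1$.
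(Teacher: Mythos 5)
Your proposal is correct and follows essentially the same route as the paper's proof: apply Lemma~\ref{lm:lbProof1}, integrate out the secret arm $\ell_m$ using the fact that the alternate-universe actions $\oa^s$ are independent of $\vell$ to extract the $1/K$ factor, control the radical via Jensen's inequality together with $\sum_{m}\Pr[\oa^s\in\cA_m]\le 1$, and sum over the $2b$ customers in $\cT_j$ using $t\le 2bj$. The conditioning bookkeeping you flag is handled identically in the paper (it conditions on $\pi^{-1}([j-1])$ and $\vell_{-m}$ rather than the full $\pi$, but the argument is the same).
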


\begin{proof}{Proof.}
Consider the probability
\begin{equation*}
\Pr[\oa^s=(m,\ell_m)|\pi^{-1}([j-1]),\vell]
\end{equation*}
from the RHS of inequality (\ref{eqn::infoTheory}).  Since $\oa^s$, which refers to the alternate universe, is unaffected by the value of $\ell_m$, the probability is identical after removing the conditioning on $\ell_m$.  We can do this for all $s=1,\ldots,t$. 

Let $\vell_{-m}$ denote the fixed vector of secret arms for resources other than $m$. We take an average over the randomness in $\ell_m$ (drawn uniformly from $[K]$) and apply the law of total probability to obtain:
\begin{align*}
&\bE[\Pr[\oa^s=(m,\ell_m)|\pi^{-1}([j-1]),\vell]]\\
=&\bE[\Pr[\oa^s=(m,\ell_m)|\pi^{-1}([j-1]),\vell_{-m}]]\\
\leq&\bE[\frac{1}{K}\Pr[\oa^s\in \cA_m|\pi^{-1}([j-1]),\vell_{-m}]]\\
=&\frac{1}{K}\Pr[\oa^s\in \cA_m],
\end{align*}
where the inequality is because the probability that $\oa^s$ turns out to be the ``secret arm'' $\ell_m$ of resource $m$ is $1/K$ if $\oa^s\in\cA_m$, and 0 otherwise.

Then, for any set $S \subseteq [n]$ of resources, we apply inequality (\ref{eqn::infoTheory}) to obtain:
\begin{align*}
&\sum_{m \in S} \Pr[a^t=(m,\ell_m)] \\ 
&= \sum_{m \in S} \bE[ \Pr[a^t=(m,\ell_m)|\pi^{-1}([j-1]),\vell] ] \\ 
&\le \sum_{m \in S} \bE[ \Pr[\oa^t=(m,\ell_m)|\pi^{-1}([j-1]),\vell]]+\vare\cdot\bE\left[\sqrt{\sum_{s=1}^{t-1} \sum_{m \not\in \pi^{-1}([s-1])} \Pr[\oa^s=(m,\ell_m)|\pi^{-1}([j-1]),\vell]}\right] \\
&\le \sum_{m \in S} \bE[ \Pr[\oa^t=(m,\ell_m)|\pi^{-1}([j-1]),\vell]]+\vare\cdot \sqrt{\sum_{s=1}^{t-1} \bE\left[\sum_{m \not\in \pi^{-1}([s-1])} \Pr[\oa^s=(m,\ell_m)|\pi^{-1}([j-1]),\vell]\right]} \\
&\le \sum_{m \in S} \bE[ \Pr[\oa^t=(m,\ell_m)|\pi^{-1}([j-1]),\vell]]+\vare\cdot \sqrt{\sum_{s=1}^{t-1} \bE\left[\sum_{m \in [n]} \Pr[\oa^s=(m,\ell_m)|\pi^{-1}([j-1]),\vell]\right]} \\
&\le\frac{1}{K} \sum_{m \in S} \Pr[\oa^t\in\cA_m]+\vare\sqrt{\frac{1}{K}\sum_{s=1}^{t-1} \sum_{m \in[n]} \Pr[\oa^s\in\cA_m]}\\
&\le\frac{1}{K} \sum_{m \in S} \Pr[\oa^t\in\cA_m]+\vare\sqrt{\frac{t}{K}}.
\end{align*}
The second inequality is Jensen's inequality (the square root function is concave).  

By the definition of $Q_{i,j}$ and $Q^*_{i,j}$, we sum over the $2b$ values of $t$ in $\cT_j$ to obtain
\begin{align*}
& \sum_{i \in D} \bE[Q^*_{i,j}]\\
= & \sum_{t \in \cT_j} \bE\left[ \sum_{m \in  \pi^{-1}(D)} \Pr[a^t = (m,\ell_m)] \right]\\
= & \sum_{t \in \cT_j} \bE\left[ \bE\left[   \sum_{m \in  S} \Pr[a^t = (m,\ell_m)] {\bigg | } \pi^{-1}(D) = S  \right] \right]\\
\leq &    \sum_{t \in \cT_j} \bE\left[ \bE\left[ \frac{1}{K}  \sum_{m \in  S} \Pr[\oa^t\in\cA_m]+\vare\sqrt{\frac{t}{K}} {\bigg | } \pi^{-1}(D) = S  \right] \right]\\
= & \frac{1}{K}   \sum_{t \in \cT_j} \bE\left[   \sum_{m \in  \pi^{-1}(D)} \Pr[\oa^t\in\cA_m] \right] + \sum_{t \in \cT_j} \vare\sqrt{\frac{t}{K}}\\
= & \frac{1}{K}  \sum_{i \in D} \bE[ \oQ_{i,j}]+ \sum_{t \in \cT_j} \vare\sqrt{\frac{t}{K}}\\
\leq & \frac{1}{K} \sum_{i \in D} \bE[\oQ_{i,j}]  +\vare  2b \sqrt{\frac{2bj}{K}}.
\end{align*}
The last inequality uses the fact that $t \leq 2bj$ for all $t \in \cT_j$.

\halmos
\end{proof}

\begin{lemma}[Argument for randomized permutation]\label{lem::KVV}
For any customer group $j\in[n]$ and compatible resource with score $i\ge j$, both $\bE[Q_{i,j}]$ and $\bE[\oQ_{i,j}]$ are upper-bounded by $2b/(n-j+1)$.
\end{lemma}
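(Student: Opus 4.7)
The plan is to prove the bound through a symmetry argument in the style of the classical online bipartite matching lower bound of \citet{KVV90}. The guiding intuition is that at any period $t \in \cT_j$, there are exactly $n - j + 1$ resources compatible with the current customer (those with score at least $j$), and the policy cannot tell them apart on the basis of anything it has observed. Hence any single action $a^t$ hits the specific resource $\pi^{-1}(i)$ with conditional probability at most $1/(n-j+1)$, and summing across the $|\cT_j| = 2b$ customers in group $j$ delivers the desired $2b/(n-j+1)$ bound.

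The real work is in establishing the symmetry rigorously. I would condition on the external randomness $U$, the vector $\vell$ of secret arms, and the ordered tuple $\pi^{-1}([j-1])$ of the $j-1$ lowest-scoring resources; write $S_j := [n] \setminus \pi^{-1}([j-1])$, which has size $n-j+1$ and contains $\pi^{-1}(i)$. The claim is that even after further conditioning on the observed history $\cF_t$, the restriction of $\pi$ to $S_j$ remains uniformly distributed over bijections $S_j \to \{j, \ldots, n\}$. This rests on three structural facts: (i) for every $s \le t$ the context $x^s$ is determined only by $\pi^{-1}([j-1])$, not by the internal ordering of $S_j$; (ii) the outcome distribution $\rho_{x,(m,k)}$ depends on resource $m$ only through $x_m$ and $\ell_m$, both of which are either observed or already conditioned on; and (iii) the policy is a deterministic function of $U$ and the observed history. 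Thus relabeling scores inside $S_j$ by any permutation of $\{j, \ldots, n\}$ leaves every observable invariant, and combined with the uniform prior on $\pi$ this forces the claimed uniform posterior.

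Given this symmetry, the conclusion is immediate. Conditional on the quantities above, the action $a^t$ is a deterministic value, say $(m_t, k_t)$, and since the groups $\{\cA_m\}_{m \in [n]}$ are disjoint, the event $\{a^t \in \cA_{\pi^{-1}(i)}\}$ coincides with $\{\pi^{-1}(i) = m_t\}$, whose conditional probability is at most $1/|S_j| = 1/(n-j+1)$. Taking expectations over the conditioning and summing over $t \in \cT_j$ yields $\bE[Q_{i,j}] \le 2b/(n-j+1)$. The identical argument handles $\bE[\oQ_{i,j}]$: in the alternate universe the acceptance probability $(1-\vare)/2$ is independent of the resource altogether, so step (ii) becomes trivial and the remainder of the proof is unchanged.

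The main obstacle is the formal justification of the posterior symmetry. The cleanest route is to construct, for each permutation $\sigma$ of $\{j, \ldots, n\}$, a measurable bijection of the underlying probability space that relabels the scores inside $S_j$ via $\sigma$ while leaving every observable (contexts, outcomes, actions, $U$, $\vell$, and $\pi^{-1}([j-1])$) invariant; such a family of measure-preserving transformations forces the posterior on the internal ordering of $S_j$ to be $\sigma$-invariant and hence uniform. Everything downstream is bookkeeping.
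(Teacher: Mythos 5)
Your proposal is correct and follows essentially the same route as the paper: both rest on the observation that, conditioned on $\pi^{-1}([j-1])$, the identity of the remaining resource carrying score $i$ is uniform over the $n-j+1$ compatible resources and independent of the policy's action (the paper phrases this via the law of total probability over $m\notin\pi^{-1}([j-1])$ with $\Pr[\pi(m)=i\mid\pi^{-1}([j-1])]=\frac{1}{n-j+1}$, while you phrase it as uniformity of the posterior on $\pi|_{S_j}$ after further conditioning on the history, $U$, and $\vell$). Your extra care in verifying that no observable up to time $t$ depends on the internal ordering of $S_j$ is exactly the justification the paper leaves implicit in its assertion that $a^t$ is independent of $\pi^{-1}(i)$.
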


\begin{proof}{Proof.}
We prove the result for $\bE[Q_{i,j}]$ (the proof for $\bE[\oQ_{i,j}]$ is identical):
\begin{align*}
\bE[Q_{i,j}] &=\sum_{t\in\cT_j}\Pr[a^t\in\cA_{\pi^{-1}(i)}] \\
&=\sum_{t\in\cT_j}\sum_{\pi^{-1}([j-1])}\Pr[\pi^{-1}([j-1])]\cdot\Pr[a^t\in\cA_{\pi^{-1}(i)}|\pi^{-1}([j-1])] \\
&=\sum_{t\in\cT_j}\sum_{\pi^{-1}([j-1])}\Pr[\pi^{-1}([j-1])]\sum_{m\notin\pi^{-1}([j-1])}\Pr[\pi(m)=i|\pi^{-1}(i)]\cdot\Pr[a^t\in\cA_m|\pi^{-1}([j-1]),\pi(m)=i] \\
&=\sum_{t\in\cT_j}\sum_{\pi^{-1}([j-1])}\Pr[\pi^{-1}([j-1])]\sum_{m\notin\pi^{-1}([j-1])}\frac{1}{n-j+1}\Pr[a^t\in\cA_m|\pi^{-1}([j-1])] \\
&\le\sum_{t\in\cT_j}\sum_{\pi^{-1}([j-1])}\Pr[\pi^{-1}([j-1])]\cdot\frac{1}{n-j+1}(1) \\
&=\frac{2b}{n-j+1}. \\
\end{align*}
The first equality is by definition and the linearity of expectation; the second and third equalities are by the law of total probability; and the fourth equality is by the fact that $a^t$ is independent of $\pi^{-1}(i)$, which completes the proof of the lemma.

\halmos
\end{proof}

Now, combining (\ref{eqn::ALG}), (\ref{eqn::splitByEps}), and definitions, we get that
\begin{equation}\label{eqn::final}
\bE[\ALG]\le\sum_{i=1}^n\min\Big\{\sum_{j=1}^i\big(\frac{1-\vare}{2}\bE[Q_{i,j}]+\vare\cdot\bE[Q^*_{i,j}]\big),b\Big\},
\end{equation}
where we have also used the fact that $\min\{\cdot,b\}$ is concave.  For all $i\in[n]$, let
\begin{equation}\label{eqn::harmonic}
H^n_{n-i}:=(1+\frac{1}{2}+\ldots+\frac{1}{n})-(1+\frac{1}{2}+\ldots+\frac{1}{n-i})=\sum_{j=1}^i\frac{1}{n-j+1}.
\end{equation}

\

Now, let $n' \in [n]$ be the largest value such that $H^n_{n-n'} \leq 1$.
\begin{align*}
& \sum_{i=1}^{n'}\sum_{j=1}^i\big(\frac{1-\vare}{2}\bE[Q_{i,j}]+\vare\cdot\bE[Q^*_{i,j}]\big)\\
\leq & \sum_{i=1}^{n'} (1-\vare)b \cdot H^n_{n-i} + \vare \cdot \sum_{j=1}^{n'} \sum_{i=j}^{n'} \bE[Q^*_{i,j}]\\
& \ \ \ \ \ \text{(by Lemma~\ref{lem::KVV})}\\
\leq& \sum_{i=1}^{n'} (1-\vare)b \cdot H^n_{n-i} + \vare \cdot \sum_{j=1}^{n'} \left(\frac{1}{K} \sum_{i =j}^{n'} \bE[\oQ_{i,j}]  +\vare  2b \sqrt{\frac{2bj}{K}} \right)  \\
& \ \ \ \ \ \text{(by Lemma~\ref{lem::simplified})}\\
=& \sum_{i=1}^{n'} (1-\vare)b \cdot H^n_{n-i} +  \frac{\vare}{K} \cdot \sum_{i=1}^{n'} \sum_{j=1}^{i} \bE[\oQ_{i,j}]  + \vare^2 2b \cdot \sum_{j=1}^{n'} \sqrt{\frac{2bj}{K}}  \\
\leq& \sum_{i=1}^{n'} (1-\vare)b \cdot H^n_{n-i} +  \frac{\vare2b}{K} \cdot \sum_{i=1}^{n'} H^n_{n-i}  + \vare^2 2b \cdot \sum_{j=1}^{n'} \sqrt{\frac{2bj}{K}}  \\
& \ \ \ \ \ \text{(by Lemma~\ref{lem::KVV})}\\
=& b \cdot \sum_{i=1}^{n'} H^n_{n-i} \cdot \left[ 1 - \vare \left(1 - \frac{2}{K}\right) \right]  +  \vare^2 2b \cdot \sum_{j=1}^{n'} \sqrt{\frac{2bj}{K}}  \\
\leq& b \cdot \sum_{i=1}^{n'} H^n_{n-i} \cdot \left[ 1 - \vare \left(1 - \frac{2}{K}\right) \right]  +  \vare^2 2b \cdot n \sqrt{\frac{2bn}{K}}.
\end{align*}
\

Since $\min\{x,y\} \leq x$ and $\min\{x,y\} \leq y$, we can obtain
\begin{align}
& \bE[\ALG] \nonumber\\
\leq & \sum_{i=1}^n\min\Big\{\sum_{j=1}^i\big(\frac{1-\vare}{2}\bE[Q_{i,j}]+\vare\cdot\bE[Q^*_{i,j}]\big),b\Big\}\nonumber\\
\leq & \sum_{i=1}^{n'} \sum_{j=1}^i\big(\frac{1-\vare}{2}\bE[Q_{i,j}]+\vare\cdot\bE[Q^*_{i,j}]\big) + \sum_{i=n'+1}^n b\nonumber\\
\leq &  b \cdot \sum_{i=1}^{n'} H^n_{n-i} \cdot \left[ 1 - \vare \left(1 - \frac{2}{K}\right) \right]  +  \vare^2 2b \cdot n \sqrt{\frac{2bn}{K}} + \sum_{i=n'+1}^n b\nonumber\\
= &  b \cdot \sum_{i=1}^{n} \min(H^n_{n-i},1) - b \cdot \sum_{i=1}^{n'} H^n_{n-i}  \cdot  \vare \left(1 - \frac{2}{K}\right) +  \vare^2 2b \cdot n \sqrt{\frac{2bn}{K}}. \label{eq:finalb}
\end{align}
The last equality is because $H^n_{n-i} \leq 1$ for all $i \leq n'$.

Make the technical assumptions $b\ge K\ge3$, and set
\begin{equation*}
\vare:= \frac{1}{34} \sqrt{\frac{K}{bn}}
\end{equation*}
which satisfies the condition that $\vare\le1/2$.

Substituting back into \eqref{eq:finalb}, we obtain
\begin{align}
& \bE[\ALG] \nonumber\\
\leq &  b \cdot \sum_{i=1}^{n} \min(H^n_{n-i},1) - b \cdot \sum_{i=1}^{n'} H^n_{n-i}  \cdot  \vare \left(1 - \frac{2}{K}\right) +  \vare^2 2b \cdot n \sqrt{\frac{2bn}{K}}\nonumber\\
= &  b \cdot \sum_{i=1}^{n} \min(H^n_{n-i},1) - \sqrt{\frac{Kb}{n}} \left[ \frac{1}{34}\left(1 - \frac{2}{K}\right)  \sum_{i=1}^{n'} H^n_{n-i}  -  \frac{\sqrt{2}}{578} n\right]\nonumber\\
\leq &  b \cdot \sum_{i=1}^{n} \min(H^n_{n-i},1) - \sqrt{\frac{Kb}{n}} \left[ \frac{1}{34}\left(1 - \frac{2}{3}\right)  \sum_{i=1}^{n'} H^n_{n-i}  -  \frac{\sqrt{2}}{578} n\right]\nonumber\\
= &  b \cdot \sum_{i=1}^{n} \min(H^n_{n-i},1) - \sqrt{\frac{Kb}{n}} \left[ \frac{1}{102} \sum_{i=1}^{n'} H^n_{n-i}  -  \frac{\sqrt{2}}{578} n\right]. \label{eq:finalc}
\end{align}

To complete the analysis, we need elementary facts about the harmonic sums $H^n_{n-i}$ defined in (\ref{eqn::harmonic}):
\begin{proposition}\label{prop::harmonic}
\begin{align}
\sum_{i=1}^{n'}H^n_{n-i}\le n-2n/e+2; \label{eqn::harmonicSmall} \\
\sum_{i=n'+1}^n\min(H^n_{n-i},1)\le n/e+1. \label{eqn::harmonicLarge}
\end{align}
\end{proposition}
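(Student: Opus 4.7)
The two inequalities are essentially analytic facts about tails of the harmonic series, so the plan is to exploit the defining inequalities on $n'$ to reduce each sum to a single explicit quantity, and then apply standard integral comparisons to the partial harmonic sum $H^n_{n-n'}$.

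For inequality (\ref{eqn::harmonicSmall}), I would first swap the order of summation in $\sum_{i=1}^{n'}\sum_{j=1}^{i}\frac{1}{n-j+1}$: the term $\frac{1}{n-j+1}$ appears for $i=j,j+1,\ldots,n'$, giving
\[
\sum_{i=1}^{n'} H^n_{n-i}\;=\;\sum_{j=1}^{n'}\frac{n'-j+1}{n-j+1}\;=\;n' - (n-n')\,H^n_{n-n'}.
\]
By the definition of $n'$, $H^n_{n-n'-1} = H^n_{n-n'} + \frac{1}{n-n'} > 1$, so $H^n_{n-n'} > 1-\frac{1}{n-n'}$, which already gives $\sum_{i=1}^{n'}H^n_{n-i} < 2n'-n+1$. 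To bound $n'$ from above, I would use that $1/x$ is decreasing to write $H^n_{n-n'} \ge \int_{n-n'+1}^{n+1}\frac{dx}{x} = \ln\frac{n+1}{n-n'+1}$; combined with $H^n_{n-n'}\le 1$ this forces $n' \le (n+1)(1-1/e)$. Substituting into $2n'-n+1$ and collecting constants yields the claimed bound $n-2n/e+O(1)$.

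For (\ref{eqn::harmonicLarge}), the key observation is that for every $i\ge n'+1$ we have $H^n_{n-i}\ge H^n_{n-n'-1}>1$, so $\min(H^n_{n-i},1)=1$ and the sum collapses to the integer $n-n'$. The opposite integral comparison $H^n_{n-n'-1}\le \int_{n-n'-1}^{n}\frac{dx}{x}=\ln\frac{n}{n-n'-1}$, together with $H^n_{n-n'-1}>1$, then forces $n-n'-1<n/e$, i.e., $n-n'\le n/e+1$. The main obstacle is keeping the additive constant in the first bound tight: both the slack in $H^n_{n-n'}>1-1/(n-n')$ (from the integer threshold definition of $n'$) and the slack in the integral lower bound on $H^n_{n-n'}$ (used to pin down $n'$) feed into the final constant, so the two comparisons must be applied without further relaxation, with any remaining edge cases at small $n$ verified by direct computation.
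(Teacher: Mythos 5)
Your proof of (\ref{eqn::harmonicLarge}) is correct and essentially the paper's argument: both reduce the left-hand side to $n-n'$ and then bound $n-n'\le n/e+1$ (the paper does this via its claim $n'=\lfloor n(1-1/e)\rfloor$, you via $1<H^n_{n-n'-1}\le\ln\frac{n}{n-n'-1}$, which is in fact the more careful justification). For (\ref{eqn::harmonicSmall}) your route is genuinely different: the exchange-of-summation identity $\sum_{i=1}^{n'}H^n_{n-i}=\sum_{j=1}^{n'}\frac{n'-j+1}{n-j+1}=n'-(n-n')H^n_{n-n'}$ is correct and turns the double sum into a single quantity, whereas the paper bounds each $H^n_{n-i}$ by $\ln\frac{n}{n-i}$ and compares $\sum_{i=1}^{n'}\ln\frac{1}{1-i/n}$ to the integral $n\int_0^{1-1/e+1/n}\ln\frac{1}{1-y}\,dy$. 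Your identity is the cleaner starting point.

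The gap is in the constant of (\ref{eqn::harmonicSmall}). The chain you describe gives $\sum_{i=1}^{n'}H^n_{n-i}<2n'-n+1$ and $n'\le(n+1)(1-1/e)$, hence an upper bound of $n-\frac{2n}{e}+3-\frac{2}{e}\approx n-\frac{2n}{e}+2.26$, which exceeds the claimed $n-2n/e+2$ by the fixed amount $1-2/e$ for \emph{every} $n$. So this is not a small-$n$ edge case, and ``applying the two comparisons without further relaxation'' cannot close it: the overshoot is baked into using the crude bound $H^n_{n-n'}>1-\frac{1}{n-n'}$ together with $n'\le(n+1)(1-1/e)$ rather than $n(1-1/e)$. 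The repair within your framework is to feed the integral bound directly into your identity: writing $u=n-n'$ and using $H^n_{n-n'}\ge\ln\frac{n+1}{u+1}$, you get $\sum_{i=1}^{n'}H^n_{n-i}\le f(u):=n-u-u\ln\frac{n+1}{u+1}$. Since $f'(u)=-\frac{1}{u+1}-\ln\frac{n+1}{u+1}<0$, $f$ is decreasing, and the constraint $1\ge H^n_{n-n'}\ge\ln\frac{n+1}{u+1}$ forces $u\ge\frac{n+1}{e}-1$; evaluating at this endpoint (where the logarithm equals $1$) gives $f(u)\le n-2\left(\frac{n+1}{e}-1\right)=n-\frac{2n}{e}+2-\frac{2}{e}\le n-\frac{2n}{e}+2$, as required. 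With that substitution your proof is complete and, in my view, tidier than the paper's.
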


\begin{proof}{Proof.}
Since $n'$ was defined to be the largest value such that $H^n_{n-n}\le1$, it can be checked that $n'=\lfloor n(1-1/e)\rfloor$.  For all $i=1,\ldots,n'$, $\min(H^n_{n-1},1)=H^n_{n-1}$, while for all $i=n'+1,\ldots,n$, $\min(H^n_{n-1},1)=1$.

Therefore, the LHS of inequality~(\ref{eqn::harmonicLarge}) equals $(n-\lfloor n(1-1/e)\rfloor)\cdot1$, which is at most $n-(n(1-1/e)-1)=n/e+1$, which equals the RHS of inequality~(\ref{eqn::harmonicLarge}).

For inequality~(\ref{eqn::harmonicSmall}), note that its LHS is at most $\sum_{i=1}^{n'}\ln(n/(n-i))$.  In turn,
\begin{align*}
\sum_{i=1}^{n'}\ln\frac{1}{1-i/n} &\le\int_{1}^{n'+1}\ln\frac{1}{1-x/n}dx \\
&\le\int_{0}^{n(1-1/e)+1}\ln\frac{1}{1-x/n}dx \\
&=n\int_{0}^{1-1/e+1/n}\ln\frac{1}{1-y}dy
\end{align*}
where the first inequality uses the fact that the function $\ln\frac{1}{1-x/n}$ is increasing over $x\in[1,n'+1]$.  The final integral can be evaluated to equal
\begin{align*}
1-1/e+1/n+(1/e-1/n)\ln(1/e-1/n)
\end{align*}
which is at most $1-1/e+1/n+(1/e-1/n)(-1)=1-2/e+2/n$ as long as $n\ge3$.  This completes the proof of inequality~(\ref{eqn::harmonicSmall}).

\halmos
\end{proof}

Applying Proposition~\ref{prop::harmonic} to expression~(\ref{eq:finalc}) and using the fact that $b-\sqrt{Kb/n}/102>0$, we bound expression~(\ref{eq:finalc}) from above by
\begin{align*}
& b n \left( 1 - \frac{1}{e} +\frac{3}{n}\right)-\sqrt{\frac{Kb}{n}} \left[ \frac{1}{102} \left(1 - \frac{2}{e}+\frac{2}{n}\right) n  -  \frac{\sqrt{2}}{578} n\right] \\
\leq & b n \left( 1 - \frac{1}{e} \right)+3b-\frac{\sqrt{nKb}}{C} .
\end{align*}
$C>1$ is an absolute constant.  As long as $b\le n$ and $K$ is sufficiently large, the inequality $3b<\sqrt{nKb}/C$ holds. Since $\OPT = bn$ for all realization of $\pi$ and $\vell$, this completes the proof of Theorem \ref{thm:lowerBound}.

$\Halmos$

\section{Extension to Multiple Reward Rates per Resource} \label{sec:multipleRates}

We consider the generalization to the setting where each resource $i$ could be depleted (sold) at varying rates (prices), instead of a single rate $r_i$, following \cite{MSL17}.
This is used for our simulations of assortment optimization on the hotel data set in Section~\ref{sec:numerical1}.

We assume that for each resource $i$, its set of reward rates $\cP_i$ is known in advance.
This introduces an aspect of ``admission control'' to the problem, where sometimes it is desirable to completely reject a customer, who is only willing to purchase a resource at a low price, to reserve resources for higher-paying customers.

We impose additional structure on the mapping from contexts and actions to distributions over outcomes.  We assume that each $\cP_i$ is finite and that the action set $\cA$ is a non-empty downward-closed set of \textit{combinations} $(i,P)$ of resources $i$ and prices $P\in\cP_i$.
$\cA$ can be thought of as the feasible assortments of (resource, price)-combinations that the firm can offer.  For example, actions $a\in\cA$ can be constrained so that $|\{(j,P)\in a:j=i\}|\le1$ for all $i$, which says that the firm can set at most one price for each resource, or alternatively constrained only in total cardinality,
so that the firm can offer the same resource at multiple prices (where presumably additional benefits are attached with the higher price).

We only allow the firm to offer combinations $(i,P)$'s for which resource $i$ has not ran out.
Note that this is in contrast to the model described in Section~\ref{sec:model}, where actions can be arbitrarily chosen and resources which have ran out are not consumed.
Since $\cA$ is downward-closed, it always contains the empty assortment $\emptyset$, which the firm can offer if it has ran out of all resources.
When the firm offers an assortment $a$, the outcome is described by a vector $\vy\in\{0,1\}^{|\cP_1|+\ldots+|\cP_n|}$ describing which combinations $(i,P)$ were consumed. Only combinations $(i,P)\in a$ could be consumed, and for each resource $i$, at most one combination corresponding to $i$ could be consumed.

\begin{assumption}[Substitutability]\label{ass::subst}
Consider any context $x\in\cX$ and any two actions $a,a'\in\cA$ with $a\subseteq a'$.  Then for any combination $(i,P)\in a$, we have $\sum_{\vy:\vy_{(i,P)}=1}\rho_{x,a}(\vy)\ge\sum_{\vy:\vy_{(i,P)}=1}\rho_{x,a'}(\vy)$.
\end{assumption}
Colloquially, Assumption~\ref{ass::subst} reads that augmenting an assortment (from $a$ to $a'$) can only decrease the chances of selling the combinations already in the assortment.  It is a very mild assumption, originating from \citet{GNR14}, which holds under any random-utility choice model.

We still define $\OPT$ as the optimal objective value of the LP relaxation:

\textbf{Primal:}
\begin{align}
\max  \sum_{a \in \cA} \sum_{t \in [T]}s_{a,t} \sum_{(i,P) \in a} \sum_{\vy : \vy_{(i,P)}=1} \rho_{x^t,a}(\vy)  P & & \label{eq:LP1b}\\
\sum_{a \in \cA} \sum_{t \in [T]}s_{a,t}\sum_{(i,P) \in a} \sum_{\vy : \vy_{(i,P)}=1} \rho_{x^t,a}(\vy) &\le b_i &i\in[n] \nonumber \\
\sum_{a \in \cA} s_{a,t} &\le1 &t\in[T]\nonumber \\
s_{a,t} &\ge0 &a\in \cA,t\in[T]\nonumber
\end{align}

We modify the IBOL algorithm from Section~\ref{sec:alg} for the current setting with multiple reward rates.  The only change is in the definition of rewards in the auxiliary online learning problem.

In Section~\ref{sec:alg}, at each point in time $t$, we defined a virtual reward $r^t_i$ for each resource $i$, based on the fraction $N^{t-1}_i/b_i$ of that resource depleted at that time. Earlier, $r^t_i$ was defined as the product $r_i$ and a \textit{penalty factor} $(1-\Psi(N^{t-1}_i/b_i))$, where $\Psi(\cdot)$ increased from 0 to 1 as the fraction depleted increased from 0 to 1.  Now that resource $i$ has multiple reward rates in $\cP_i$, the change from \cite{MSL17} is that we instead subtract a \textit{virtual cost}.  Specifically, for each combination $(i,P)$, its virtual reward at time $t$ is defined to be
\begin{equation}\label{eq:virtual_cost}
r^t_{(i,P)}=P-\Phi_{\cP_i}\left(\frac{N^{t-1}_i}{b_i}\right),
\end{equation}
where $\Phi_{\cP_i}(\cdot)$ increases from 0 to $\max\cP_i$ as the fraction of resource $i$ depleted increases from 0 to 1.  Note that it is possible for the virtual reward $r^t_{(i,P)}$ to be negative. The definition of $\Phi_{\cP_i}(\cdot)$, which is defined in Section 2.1 in \citep{MSL17}, is rather intricate. For completeness, we provide the definition of $\Phi_{\cP_i}(\cdot)$, together with the definition of parameters $\{\alpha^{(1)}_i\}$, in Appendix \ref{sec:Phi_alpha}.
Similar to the previous single reward rate setting, we define the discounted reward at time $t$ as
\[ R^t(a) = \sum_{(i,P) \in a} \sum_{\vy : \vy_{(i,P)}=1} \rho_{x^t,a}(\vy) \left[P - \Phi_{\cP_i}\left(\frac{N^{t-1}_i}{b_i}\right)\right], \]
and denote $a^*_t = \text{argmax}_{a\in \cA}R^t(a)$.

\begin{theorem}\label{thm:multiPrice}
The total reward $\ALG$ earned by the algorithm that uses virtual costs (\ref{eq:virtual_cost}) satisfies
\begin{equation}\label{eq:multiPrice}
\OPT \le \frac{(1+\bmin)(1-e^{-1/\bmin})}{1-\exp(-\min_i\alpha^{(1)}_i)} \cdot \bE[\ALG]+\bE[\REG(\cF_T)],
\end{equation}
where $\REG(\cF_T) = \sum_{t \in [T]} ( R^t(a_*^t) - R^t(a^t))$.
\end{theorem}
Compared to Theorem \ref{thm:main}, the only change in inequality (\ref{eq:multiPrice}) in Theorem \ref{thm:multiPrice} is in the denominator, where the denominator $1-e^{-1}$ in Theorem \ref{thm:main} has been replaced by denominator $\min_i(1-e^{-\alpha^{(1)}_i})$ in Theorem \ref{thm:multiPrice}. For each resource $i$, the factor $1-e^{-\alpha^{(1)}_i}$ is the \textit{competitive ratio associated with} price set $\cP_i$, and the competitive ratio is equal to $1-1/e$ when $\cP_i$ is a singleton.

\begin{proof}{Proof.}
We start with the formulation \textbf{Dual:}
\begin{align}
\min\sum_{i \in [n]}b_i \lambda_i+\sum_{t \in [T]} \gamma_t & & \label{eq:LP2b} \\
\gamma_t &\geq \sum_{(i,P) \in a} \sum_{\vy : \vy_{(i,P)}=1} \rho_{x^t,a}(\vy) (P - \lambda_i) & a \in \cA, t \in [T] \label{dual::feasibilityb}\\
\lambda_i,\gamma_t &\ge0 &i\in[n],t\in[T] \nonumber
\end{align}

Define dual variables to LP \eqref{eq:LP2b} as
\[ \Lambda_i =\Phi_{\cP_i}\left(\frac{N_i^{T}}{b_i}\right), \quad \Gamma_t =R^t(a_*^t). \]
These dual variables can be readily verified to be feasible for LP \eqref{eq:LP2b}. Based on  strong duality for linear program, we know that
\begin{align}
\OPT & \leq \bE\left[ \sum_{i \in [n]} b_i \Lambda_i + \sum_{t \in [T]} \Gamma_t\right] \nonumber \\
& = \bE\left[\sum_{i \in [n]} b_i \Phi_{\cP_i}\left(\frac{N_i^{T}}{b_i}\right) + \sum_{t \in [T]}R^t(a^t) +  \sum_{t \in [T]} ( R^t(a_*^t) - R^t(a^t))\right] \nonumber \\
& = \bE\left[\sum_{i \in [n]} b_i \Phi_{\cP_i}\left(\frac{N_i^{T}}{b_i}\right) + \sum_{t \in [T]}R^t(a^t)] + \bE[\REG(\cF_T)]\right]. \label{eqn:combineMultiplePrice}
\end{align}
The following is shown in \citet{MSL17}:
 $$\bE\left[\sum_{i \in [n]} b_i \Phi_{\cP_i}\left(\frac{N_i^{T}}{b_i}\right) + \sum_{t \in [T]}R^t(a^t)\right] \leq \frac{(1+\bmin)(1-e^{-1/\bmin})}{1-\exp(-\min_i\alpha^{(1)}_i)} \cdot \bE[\ALG],$$
which completes the proof after combining with equation \eqref{eqn:combineMultiplePrice} and rearranging.
\halmos
\end{proof}

\subsection{Definition of $\alpha^{(1)}_i, \Phi_{{\cal P}_i}$}\label{sec:Phi_alpha}
For a set of ${\cal P}$, consisting
of $m$ discrete prices $0 < r^{(1)} < \cdots < r^{(m)}$, the function $\Phi_{\cal P}$ is defined as follows. To define $\Phi_{\cal P}$, we first need to define $m$ constants $\alpha^{(1)} , \alpha^{(2)} , \ldots , \alpha^{(m)}$, which constitute a unique set of positive real numbers that satisfies the following set of equations:
\begin{align*}
\alpha^{(i)} > 0 & \quad \text{ for all $1\leq i\leq m$}\nonumber\\
\sum^m_{i=1}\alpha^{(i)} = 1 &\nonumber\\
1 - e^{-\alpha^{(1)}} &= \frac{1}{1 - r^{(1)} / r^{(2)}} \cdot (1 - e^{-\alpha^{(2)}} ) = \ldots =  \frac{1}{1 - r^{(m-1)} / r^{(m)}} \cdot (1 - e^{-\alpha^{(m)}} ) \nonumber
\end{align*}
By \citep{MSL17}, the above set of equations has a unique solution. To define the function $\Phi_{\cal P}$, we still need to define one sets of parameters and a function:
\begin{itemize}
\item $L^{(0)} = 0,$ and $L^{(j)} = \sum^j_{j' =1 } \alpha^{(j')}$, and in particular $L^{(m)} = 1$.
\item $\ell(\cdot)$: a function on $[0, 1]$, where $\ell(w)$ is the unique $j\in [m]$ for which $w\in [L^{(j - 1)}, L^{(j)})$.
\end{itemize}
The function $\Phi_{\cal P}$ for price set ${\cal P}$ is then defined over $w\in [0, 1]$ by:
$$
\Phi_{\cal P}(w) = r^{(\ell(w) - 1)} + (r^{(\ell(w))} - r^{(\ell(w) - 1)}) \frac{ \exp\left[w - L^{\ell(w) - 1} \right]  -1}{ \exp\left[\alpha^{(\ell)} \right] - 1}.
$$
Finally, we can apply the above definition on each ${\cal P} = {\cal P}_i$, which yields the parameter $\alpha^{(1)}_i = \alpha^{(1)}$ for the Theorem.

\section{Supplementary Details about Numerical Experiments}\label{app:supp_details_num}
We provide additional details about our choice estimation from Section~\ref{sec:numerical1}. We define 8 customer types, one for each combination of the 3 following binary features.
\begin{enumerate}
\item Group: whether the customer indicated a party size greater than 1.
\item  CRO: whether the customer booked using the Central Reservation Office, as opposed to the hotel's website or a Global Distribution System (for details on these terms, see \citep{BFG09}).
\item VIP: whether the customer had any kind of VIP status.
\end{enumerate}
We did not use features such as: whether the booking date is a weekend, whether the check-in date is a weekend, the length of stay, or the number of days in advance booked. Such features did not result in a more predictive model.

We estimate the mean MNL utilities for each of the 8 products separately for each customer
type. The results are displayed in Table 3 in \citep{MSL17} (Page 52). The total share of each customer type (out of all the transactions) is also displayed in that table. We should point out that it is possible for a customer to choose the higher fare for a room, even if the lower fare was also offered. This is because the higher fares are often packaged with additional offers, such as airline services, city attractions, in-room services, etc. We have shifted the mean utilities so that for each customer type, the weights of both the no-purchase option, and the most-preferred purchase option, is equal to 0. (We synthetically set the weight of the no-purchase option because it is not possible to estimate from the data.) The large weights on the no-purchase options ensure that the revenue-maximizing assortments tend to include both the low and high fares.

In the setting with greater fare differentiation (Subsection 7.5), the high prices of the King, Queen, Suite, and Two-double rooms are adjusted to \$614, \$608, \$768, \$612, respectively (twice the lower fares). The mean utility of the no-purchase option is increased by 2 for every customer type, to ensure that the revenue-maximizing assortments still include both the low and high fares.

\end{APPENDICES}

\end{document}